\newcommand\numberthis{\addtocounter{equation}{1}\tag{\theequation}}
\theoremstyle{plain}
\newtheorem{thm}{Theorem}[section]
\newtheorem{lem}[thm]{Lemma}
\newtheorem*{cor}{Corollary}
\theoremstyle{definition}
\newtheorem{defn}{Definition}[section]
\newenvironment{customthm}[1]
  {\innercustomthm}
  {\endinnercustomthm}
\theoremstyle{remark}
\newcommand{\squishlist}{
	\begin{list}{$\bullet$}
		{
			\setlength{\itemsep}{0pt}
			\setlength{\parsep}{3pt}
			\setlength{\topsep}{3pt}
			\setlength{\partopsep}{0pt}
			\setlength{\leftmargin}{1.5em}
			\setlength{\labelwidth}{1em}
			\setlength{\labelsep}{0.5em} } }
\newcommand{\squishend}{
\end{list}  }\newcommand{\squishlistnum}{
	\begin{enumerate}
		{
			\setlength{\itemsep}{0pt}
			\setlength{\parsep}{3pt}
			\setlength{\topsep}{3pt}
			\setlength{\partopsep}{0pt}
			\setlength{\leftmargin}{1.5em}
			\setlength{\labelwidth}{1em}
			\setlength{\labelsep}{0.5em} } }
\newcommand{\squishendnum}{
\end{enumerate}  }
\DeclareMathAlphabet{\mathpzc}{OT1}{pzc}{m}{it}
\begin{document}

\twocolumn[
\icmltitle{Data-Dependent Differentially Private Parameter Learning for Directed Graphical Models}
\begin{icmlauthorlist}
\icmlauthor{Amrita Roy Chowdhury}{to}
\icmlauthor{Theodoros Rekatsinas}{to}
\icmlauthor{Somesh Jha}{to,goo}

\end{icmlauthorlist}

\icmlaffiliation{to}{Department of Computer Sciences, University of Wisconsin-Madison, USA}
\icmlaffiliation{goo}{XaiPient, USA}

\icmlcorrespondingauthor{Amrita Roy Chowdhury}{amrita@cs.wisc.edu}

\icmlkeywords{Machine Learning, ICML}
\vskip 0.3in

]

\printAffiliationsAndNotice{}

\begin{abstract}

Directed graphical models (DGMs) are a class of probabilistic models that are widely used for predictive analysis in sensitive domains such as medical diagnostics. In this paper, we present an algorithm for differentially private learning of the parameters of a DGM. Our solution optimizes for the utility of inference queries over the DGM and \textit{adds noise that is customized to the properties of the private input dataset and the graph structure of the DGM}. To the best of our knowledge, this is the first explicit data-dependent privacy budget allocation algorithm in the context of DGMs. We compare our algorithm with a standard data-independent approach over a diverse suite of  benchmarks and demonstrate that our solution requires a privacy budget that is roughly $3\times$ smaller to obtain the same or higher utility.

\end{abstract}

\section{Introduction}\label{intro}

Directed graphical models (DGMs) are  a class of probabilistic models that are widely used in causal reasoning and  predictive analytics 
~\cite{PGMbook}. A typical use case for these models is  answering ``what-if'' queries over domains that often work with sensitive information. For example, DGMs are used in medical diagnosis for answering questions, such as what is the most probable disease given a set of symptoms~\cite{DGM1}. In learning such models, it is common that the underlying graph structure of the model is publicly known. For instance, in the case of medical data, the dependencies between several physiological symptoms and diseases are well established, standardized, and publicly available. However, the parameters of the model have to be learned from observations. These observations may contain sensitive information as in the case of medical applications. Hence, learning and publicly releasing the parameters of the probabilistic model may lead to privacy violations~\cite{attack1, attack2}, and thus, the need for privacy-preserving learning mechanisms for DGMs.

In this paper, we focus on the problem of privacy-preserving learning of the parameters of a DGM. For our privacy definition, we use differential privacy (DP)~\cite{dwork} -- currently the de-facto standard for privacy. We consider the setting when \textit{the structure of the target DGM is publicly known and the parameters of the model are learned from fully observed data}. In this case, all parameters can be estimated via counting queries over the input observations (also referred to as {\em data set} in the remainder of the paper). The direct way to ensure differential privacy is to add suitable noise to the observations using the standard Laplace mechanism~\cite{dwork}. Unfortunately, this method is data-independent, i.e., the noise added to the base observations is oblivious of the properties of the input data set and the structure of the DGM, resulting in sub-optimal utility. To address this issue, we turn to {\em data-dependent methods} which add noise that is customized to the properties of the input data sets~\cite{DAWA,AHP,data1,hist,hist2,DPcube,Kotsogiannis:2017:PDD:3035918.3035945}.

We propose a {\em data-dependent}, $\epsilon$-DP algorithm for learning the parameters of a DGM over fully observed data. Our goal is to minimize  errors in arbitrary inference queries that are subsequently answered over the learned DGM. The main contributions are:

(1) \textbf{Explicit data-dependent privacy-budget allocation:} Our algorithm computes the parameters of the conditional probability distribution of each random variable in the DGM via separate measurements from the input data set. This lets us optimize the privacy budget allocation across the different variables with the objective of reducing the error in inference queries. We formulate this optimization objective in a data-dependent manner -- our optimization objective is informed by both the private input data set and the public graph structure of the DGM. To the best of our knowledge, this is the first work to propose \textit{explicit data-dependent privacy-budget allocation in the context of DGMs}.  We evaluate our algorithm on four DGM benchmarks and demonstrate that our scheme only requires a privacy budget of $\epsilon=1.0$ to yield the same utility that a data-independent baseline achieves with a much higher $\epsilon=3.0$. Specifically, our baseline is based on \cite{Bayes4} which is the most recent work that explicitly deals with differentially private parameter estimation for DGMs. 
 
(2)\textbf{ New theoretical results:} To preserve privacy, we add noise to the parameters of the DGM. To understand how this noise propagates to inference queries, we provide two new theoretical results on the upper and lower bound of the error of inference queries. The upper bound has an exponential dependency on the treewidth of the DGM while the lower bound depends on its maximum degree. We also provide a formulation to compute the sensitivity \cite{sensitivity1} of the parameters associated with a node of a DGM targeting the probability distribution of its child nodes only. To the best of our knowledge, these theoretical results are novel.

\section{Background}\label{sec:background}
In this section, we review basic background material relevant to this paper.

\textbf{Directed Graphical Models:} A directed graphical model (DGM) or a Bayesian network is a probabilistic model that is represented as a directed acyclic graph, \scalebox{0.9}{$\mathcal{G}$}. \begin{wrapfigure}{r}{0.5\columnwidth}
\hspace{-0.2cm}\centering\includegraphics[width=0.45\columnwidth,height=2.7cm]{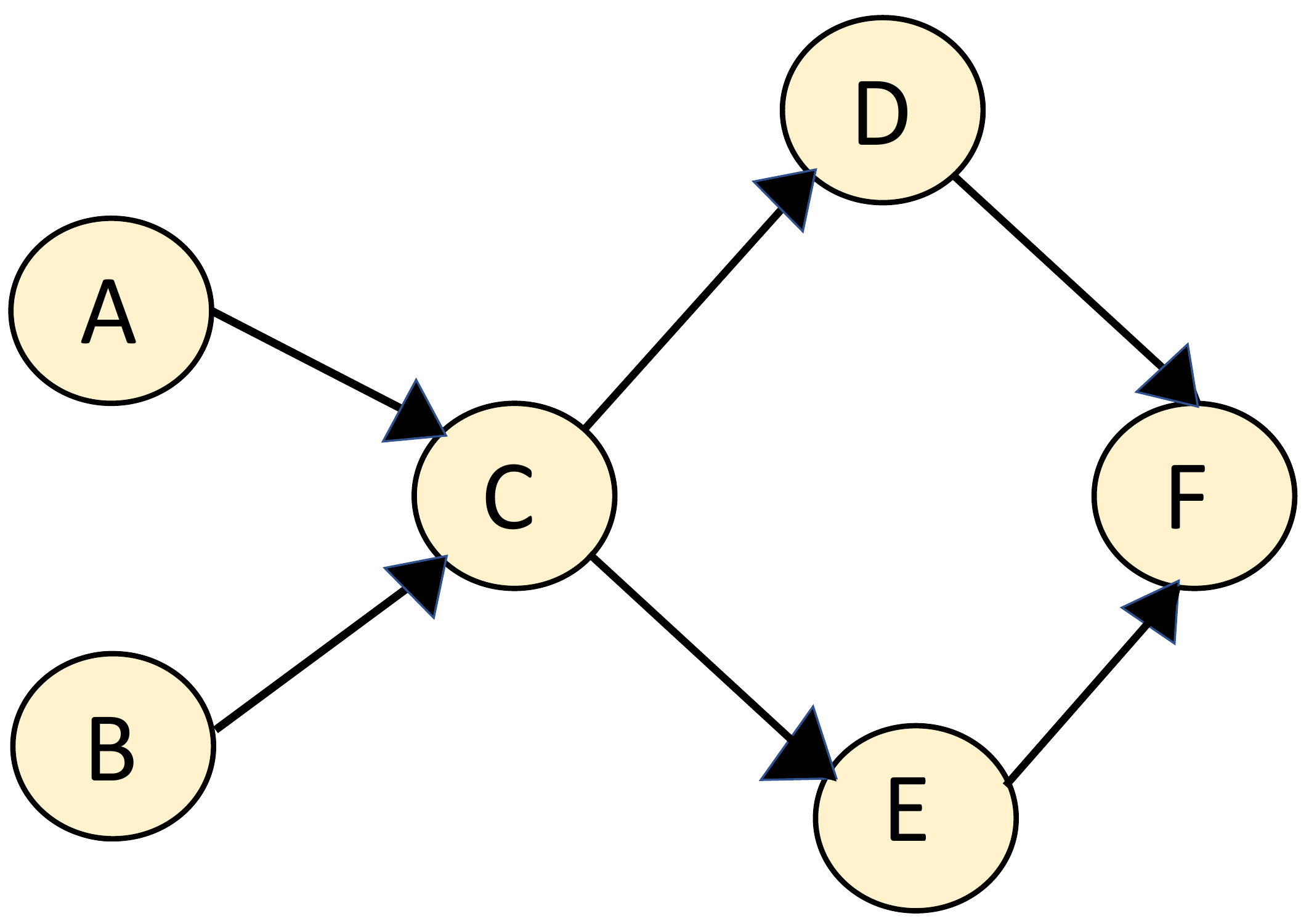}
\caption{An example directed graphical model} \label{fig:DGM}
\end{wrapfigure}The nodes of the graph represent random variables and the edges encode conditional dependencies between the variables. The graphical structure of the DGM represents a factorization of the joint probability distribution of these random variables. Specifically, given a DGM with graph \scalebox{0.9}{$\mathcal{G}$}, let \scalebox{0.9}{$X_{1},\ldots  ,X_{n}$} be the random variables corresponding  to the nodes of \scalebox{0.9}{$\mathcal{G}$} and \scalebox{0.9}{$X_{pa_{i}}$} denote the set of parents in \scalebox{0.9}{$\mathcal{G}$} for the node corresponding to variable \scalebox{0.9}{$X_i$}.  The joint probability distribution factorizes as
\begin{gather}\scalebox{0.9}{$P[X_{1},\ldots ,X_{n}]=\prod _{i=1}^{n}P[X_{i}|X_{pa_{i}}]$} \end{gather}
where each factor \scalebox{0.9}{$P[X_{i}|X_{pa_{i}}]$} corresponds to a conditional probability distribution (CPD). For example, for the DGM depicted by Fig. \ref{fig:DGM}, we have \scalebox{0.87}{$P[A,B,C,D,E,F]=P[A]\cdot P[B]\cdot P[C|A,B]\cdot P[D|C]\cdot P[E|C]\cdot$} \scalebox{0.87}{$P[F|D,E]$.} For DGMs with discrete random variables, each CPD can be represented as a table of parameters \scalebox{0.9}{$\Theta_{x_i|x_{pa_i}}$} where each parameter corresponds to a conditional probability and \scalebox{0.9}{$x_i$} and \scalebox{0.9}{$x_{pa_i}$} denote variable assignments \scalebox{0.9}{$X_i=x_i$} and \scalebox{0.9}{$X_{pa_i}=x_{pa_i}$}. %For the example DGM presented in Fogure The parameters  {$\Theta$} of this DGM are the CPDs  {$\{P[A],P[B],P[C|A,B],P[D|C],P[E|C],P[F|D,E]\}$}.
  
A key task in DGMs is \textbf{parameter learning}. Given a DGM with a graph structure \scalebox{0.9}{$\mathcal{G}$}, the goal of parameter learning is to estimate each \scalebox{0.9}{$\Theta_{x_i|x_{pa_i}},$} a task solved via maximum likelihood estimation (MLE). In the presence of fully observed data \scalebox{0.9}{$\mathcal{D}$} (i.e., data corresponding to all the nodes of \scalebox{0.9}{$\mathcal{G}$}\footnote{The attributes of the data set become the nodes of the DGM's graph. For the remainder of the paper we use them interchangeably.} is available), the maximum likelihood estimates of the CPD parameters take the closed-form~\cite{PGMbook}
\begin{equation}
\small
\Theta_{x_i|x_{pa_i}}=C[x_i,x_{pa_i}]/C[x_{pa_i}]
\label{eq:learning}
\end{equation}
where \scalebox{0.9}{$C[x_i]$} is the number of records in $\mathcal{D}$ with \scalebox{0.9}{$X_i=x_i$}.

After learning, the DGM is used to answer {\bf inference queries}, i.e., queries that compute the probabilities of certain events (variables) of interest. Inference queries can also include evidence (a subset of the nodes has a fixed assignment). There are three types inference queries in general:  

(1)\textbf{ Marginal inference:} This is used to answer queries of the type "what is the probability of a given variable if all others are marginalized". An example marginal inference query  for the DGM in Fig.\hspace{0.1cm}\ref{fig:DGM} is \scalebox{0.9}{$P[F=0]=\sum_{A}\sum_{B}\sum_{C}\sum_{D}\sum_{F}P[A,B,C,D,E,F=0]$}.

(2)\textbf{ Conditional Inference:} This type of query answers the probability distribution of some variable conditioned on some evidence $e$. An example conditional inference query for the DGM in Fig. \ref{fig:DGM} is \scalebox{0.9}{$P[A|F=0]=$}$\frac{P[A,F=0]}{P[F=0]}$. %\scalebox{0.9}{$P[X_1|X_n=e]=\frac{P[X_1,X_n=e]}{P[X_n=e]}$}.

(3)\textbf{ Maximum a posteriori (MAP) inference:} This type of query asks for the most likely assignment of variables. An example MAP query for the DGM in Fig.~\ref{fig:DGM} is
\scalebox{0.9}{$\max_{A,B,C,D,E}\{P[A,B,C,D,E,F=0]\}$}.
%\scalebox{0.9}{$\max_{X_1,\cdots,X_{n-1}}\{P[X_1,\cdots,X_{n-1},X_n=e]\}$}.
%Inference queries are of three types: (1) marginal inference, (2) conditional inference, and (3) maximum a posteriori (MAP) inference. We refer the reader to Appendix~\ref{inference:cntd} for more details. 

For DGMs, inference queries can be answered exactly by the {\bf variable elimination (VE)} algorithm~\cite{PGMbook} which is described in detail in Appx. \ref{app:DGM}. The basic idea is that we "\textit{eliminate}" one variable at a time following a predefined order \scalebox{0.9}{$\prec$} over the graph nodes. Let \scalebox{0.9}{$\Phi$} denote a set of probability factors \scalebox{0.9}{$\phi$} (initialized with all the CPDs of the DGM) and \scalebox{0.9}{$Z$} denote the variable to be eliminated. First, all probability factors involving $Z$ are removed from \scalebox{0.9}{$\Phi$} and multiplied together to generate a new product factor. Next, \scalebox{0.9}{$Z$} is summed out from this combined factor, generating a new factor \scalebox{0.9}{$\boldsymbol{\phi}$} that is entered into \scalebox{0.9}{$\Phi$}. Thus, VE corresponds to repeated sum-product computations: \scalebox{0.9}{$\boldsymbol{\phi}=\sum_Z\prod_{\phi \in \Phi} \phi$}.

 Additionally, we define a term Markov blanket which is used in Sec.~\ref{sec:Alg:desc}.
\begin{defn}[Markov Blanket]
The Markov blanket, denoted by \scalebox{0.9}{$\mathpzc{P}(X)$}, for a node \scalebox{0.9}{$X$} in a graphical model is the set of all nodes such that given \scalebox{0.9}{$\mathpzc{P}(X)$}, \scalebox{0.9}{$X$} is conditionally independent of all the other nodes. \cite{markovblanket}. \label{MB}\vspace{-0.2cm}\end{defn}
In a DGM, the Markov blanket of a node consists of its child nodes, parent nodes, and the parents of its child nodes. For example, in Fig. \ref{fig:DGM}, \scalebox{0.9}{$\mathcal{P}(D)=\{C,E,F\}$.}

\textbf{Differential Privacy:} We formally define differential privacy (DP) as  follows: \begin{defn}[Differential Privacy] \label{def:dp}
A randomized algorithm \scalebox{0.9}{$\mathcal{A}$}
satisfies $\epsilon$-differential privacy ($\epsilon$-DP), where $\epsilon > 0$ is a privacy parameter, iff
 for any two data sets \scalebox{0.9}{$\mathcal{D}$} and \scalebox{0.9}{$\mathcal{D}'$} that differ in a single record, we have
\begin{equation} 
\small
\forall t \in Range(\mathcal{A}), P\big[\mathcal{A}(\mathcal{D}) = t\big] \leq e^{\epsilon}P\big[\mathcal{A}(\mathcal{D}') = t\big] \label{eq:DP}
\end{equation} 
%where $Range(\mathcal{A})$ denotes the set of all possible outputs $\mathcal{A}$.
\end{defn}In our setting, $\mathcal{A}$ (in Eq. \eqref{eq:DP}) corresponds to an algorithm for learning the parameters of a DGM with a publicly known graph structure from a fully observed data set $\mathcal{D}$. 

When applied multiple times, the DP guarantee degrades gracefully as follows.

\begin{thm}[Sequential Composition] \label{theorem:seq}
If \scalebox{0.9}{$\mathcal{A}_1$} and
\scalebox{0.9}{$\mathcal{A}_2$} are $\epsilon_1$-DP and $\epsilon_2$-DP algorithms that use independent randomness, then releasing the outputs \scalebox{0.9}{$(\mathcal{A}_1(\mathcal{D}),\mathcal{A}_2(\mathcal{D}))$} on database \scalebox{0.9}{$\mathcal{D}$} satisfies $(\epsilon_1+\epsilon_2)$-DP.\end{thm}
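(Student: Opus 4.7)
The plan is to reduce the statement directly to Definition~\ref{def:dp} applied twice, using the independence of randomness as the glue. Fix two neighboring data sets $\mathcal{D}$ and $\mathcal{D}'$ differing in a single record, and fix an arbitrary outcome $(t_1,t_2) \in Range(\mathcal{A}_1) \times Range(\mathcal{A}_2)$. The goal is to show that $P[(\mathcal{A}_1(\mathcal{D}),\mathcal{A}_2(\mathcal{D})) = (t_1,t_2)] \leq e^{\epsilon_1+\epsilon_2} \cdot P[(\mathcal{A}_1(\mathcal{D}'),\mathcal{A}_2(\mathcal{D}')) = (t_1,t_2)]$, which is exactly the $\epsilon_1 + \epsilon_2$ guarantee required by the definition.

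First, I would invoke the independence-of-randomness hypothesis to factorize the joint probability. Because $\mathcal{A}_1$ and $\mathcal{A}_2$ use independent random coins, for any data set $D$ the joint distribution splits as a product of marginals: $P[(\mathcal{A}_1(D),\mathcal{A}_2(D)) = (t_1,t_2)] = P[\mathcal{A}_1(D) = t_1] \cdot P[\mathcal{A}_2(D) = t_2]$. Applying this identity to both $\mathcal{D}$ and $\mathcal{D}'$ reduces the composition statement to a statement about each individual mechanism.

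Next I would apply the per-mechanism DP bounds. By Definition~\ref{def:dp}, $P[\mathcal{A}_1(\mathcal{D}) = t_1] \leq e^{\epsilon_1}\, P[\mathcal{A}_1(\mathcal{D}') = t_1]$, and likewise $P[\mathcal{A}_2(\mathcal{D}) = t_2] \leq e^{\epsilon_2}\, P[\mathcal{A}_2(\mathcal{D}') = t_2]$. Multiplying these two nonnegative inequalities preserves the direction, yielding an upper bound that is $e^{\epsilon_1+\epsilon_2}$ times the product on the $\mathcal{D}'$ side. Finally, re-applying the factorization in reverse to collapse the product back into a joint probability gives the $(\epsilon_1+\epsilon_2)$-DP bound for the composed output.

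The substantive content is entirely in the factorization step, so the only real obstacle is ensuring that independence is used correctly. If the outputs took values in a continuous range one would work with densities or with measurable rectangles $S_1 \times S_2$ and then extend to arbitrary measurable subsets of $Range(\mathcal{A}_1) \times Range(\mathcal{A}_2)$ via a standard Fubini/monotone-class argument; in the pointwise discrete formulation of Definition~\ref{def:dp} used in the excerpt, no such measure-theoretic care is needed, so the whole proof is a three-line chain of inequalities.
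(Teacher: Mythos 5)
Your proof is correct and is the standard argument for sequential composition: the paper states this theorem as a known background result (following \cite{dwork}) without providing its own proof, and your factorization-via-independence followed by applying each mechanism's guarantee and multiplying is exactly the canonical derivation, including the appropriate caveat about measurable rectangles in the continuous case.
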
 
Any post-processing computation performed on the noisy output of a DP algorithm does not degrade privacy.
\begin{thm}[Post-processing]\label{theorem:post}
Let \scalebox{0.9}{$\mathcal{A}: \mathcal{D} \mapsto R$} be a $\epsilon$-DP algorithm. Let \scalebox{0.9}{$f : R \mapsto R'$} be an
arbitrary randomized mapping. Then \scalebox{0.9}{$f \circ \mathcal{A} : \mathcal{D} \mapsto R'$} is $\epsilon$-DP. \vspace{-0.2cm}\end{thm}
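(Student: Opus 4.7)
The plan is to prove the post-processing theorem by first handling the case where $f$ is deterministic, and then reducing the randomized case to the deterministic one via a standard ``externalize the randomness'' argument. The intuition is that whatever $f$ computes, it only sees the data $\mathcal{D}$ through $\mathcal{A}(\mathcal{D})$, so it cannot amplify the distinguishability of neighboring inputs beyond the factor $e^{\epsilon}$ already guaranteed by $\mathcal{A}$.

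First I would dispatch the deterministic case. Fix neighboring data sets $\mathcal{D},\mathcal{D}'$ and any measurable event $S \subseteq R'$. For deterministic $f$, the preimage $T = f^{-1}(S) \subseteq R$ is a well-defined event, and by the identity $\{f(\mathcal{A}(\mathcal{D})) \in S\} = \{\mathcal{A}(\mathcal{D}) \in T\}$,
\begin{equation*}
P[(f\circ\mathcal{A})(\mathcal{D}) \in S] = P[\mathcal{A}(\mathcal{D}) \in T].
\end{equation*}
Applying Definition \ref{def:dp} pointwise and integrating over $T$ (or invoking the standard extension of \eqref{eq:DP} from singletons to measurable events) gives $P[\mathcal{A}(\mathcal{D}) \in T] \le e^{\epsilon} P[\mathcal{A}(\mathcal{D}') \in T] = e^{\epsilon} P[(f\circ\mathcal{A})(\mathcal{D}') \in S]$, which is the desired bound.

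Next I would extend to randomized $f$ by writing $f$ as a deterministic function $g(r,\omega)$ of its input $r \in R$ together with auxiliary randomness $\omega$ drawn from a distribution $\mu$ that is independent of $\mathcal{D}$ and of the internal randomness of $\mathcal{A}$. Then
\begin{equation*}
P[(f\circ\mathcal{A})(\mathcal{D}) \in S] = \int P[\,g(\mathcal{A}(\mathcal{D}),\omega) \in S\,]\, d\mu(\omega),
\end{equation*}
and for each fixed $\omega$ the map $r \mapsto g(r,\omega)$ is deterministic, so the first step bounds the integrand by $e^{\epsilon} P[g(\mathcal{A}(\mathcal{D}'),\omega) \in S]$. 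Pulling the constant $e^{\epsilon}$ outside the integral yields the claim. The only subtlety---more conceptual than technical---is ensuring that $\omega$ is genuinely independent of $\mathcal{D}$ and of the coins used by $\mathcal{A}$, so that conditioning on a fixed $\omega$ does not perturb the distribution of $\mathcal{A}(\mathcal{D})$; once this is stated explicitly, the argument collapses to the deterministic case plus averaging, and no further work is required.
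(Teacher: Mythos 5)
Your proof is correct and is the canonical argument for post-processing (reduce to the deterministic case via preimages, then average over the auxiliary randomness of $f$, exactly as in the standard treatment of Dwork and Roth). The paper states this theorem as a known background result and gives no proof of its own, so there is nothing to compare against; your argument is complete and needs no changes.
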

The privacy guarantee of a DP algorithm can be amplified by a preceding sampling step \cite{PAClearning,Amplification}.
Let \scalebox{0.9}{$\mathcal{A}$} be an $\epsilon$-DP algorithm and \scalebox{0.9}{$\mathcal{D}$} be a data set. Let \scalebox{0.9}{$\mathcal{A}'$} be an algorithm that runs \scalebox{0.9}{$\mathcal{A}$} on a random subset of  \scalebox{0.9}{$\mathcal{D}$} obtained by sampling it with probability \scalebox{0.9}{$\beta$}.
\begin{lem}[Privacy Amplification] Algorithm \scalebox{0.9}{$\mathcal{A}'$} will satisfy $\epsilon'$-DP where \scalebox{0.9}{$\epsilon'=ln(1+\beta(e^\epsilon -1))$} \label{sampling}\vspace{-0.2cm}
\end{lem}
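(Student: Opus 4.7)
The plan is to use a standard coupling argument. Fix neighboring data sets $\mathcal{D}$ and $\mathcal{D}'$, and without loss of generality assume $\mathcal{D} = \mathcal{D}' \cup \{r\}$ for some record $r$. Since the subsample used by $\mathcal{A}'$ is obtained by \emph{independent} $\beta$-Bernoulli sampling of each record, the sample drawn from $\mathcal{D}$ can be coupled to the sample drawn from $\mathcal{D}'$ as follows: first draw $S' \subseteq \mathcal{D}'$ via independent $\beta$-sampling, then independently include $r$ with probability $\beta$. Under this coupling the subsample from $\mathcal{D}'$ equals $S'$, while the subsample from $\mathcal{D}$ equals $S' \cup \{r\}$ with probability $\beta$ and equals $S'$ with probability $1-\beta$.

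Given this coupling, for any output $t$ we can write $P[\mathcal{A}'(\mathcal{D}') = t] = \mathbb{E}_{S'}[P[\mathcal{A}(S') = t]]$ and $P[\mathcal{A}'(\mathcal{D}) = t] = \mathbb{E}_{S'}[\beta P[\mathcal{A}(S' \cup \{r\}) = t] + (1-\beta) P[\mathcal{A}(S') = t]]$. For each realization of $S'$, the inputs $S'$ and $S' \cup \{r\}$ differ in a single record, so the $\epsilon$-DP guarantee of $\mathcal{A}$ yields $P[\mathcal{A}(S' \cup \{r\}) = t] \leq e^{\epsilon}\, P[\mathcal{A}(S') = t]$ pointwise. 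Substituting into the mixture and taking expectation over $S'$ gives
\[
P[\mathcal{A}'(\mathcal{D}) = t] \;\leq\; (\beta e^{\epsilon} + 1 - \beta)\, P[\mathcal{A}'(\mathcal{D}') = t] \;=\; e^{\epsilon'}\, P[\mathcal{A}'(\mathcal{D}') = t],
\]
which is exactly the desired bound with $\epsilon' = \ln(1 + \beta(e^{\epsilon} - 1))$.

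The reverse direction, which is required because Definition~\ref{def:dp} must also hold for the ordered pair $(\mathcal{D}', \mathcal{D})$, follows from the same coupling by applying the other side of $\epsilon$-DP pointwise: $P[\mathcal{A}(S') = t] \leq e^{\epsilon}\, P[\mathcal{A}(S' \cup \{r\}) = t]$. This yields the lower bound $P[\mathcal{A}'(\mathcal{D}) = t] \geq (1 - \beta + \beta e^{-\epsilon})\, P[\mathcal{A}'(\mathcal{D}') = t]$, and the elementary identity $(1 - \beta + \beta e^{-\epsilon})(1 + \beta(e^{\epsilon} - 1)) = 1 + \beta(1-\beta)(e^{\epsilon/2} - e^{-\epsilon/2})^2 \geq 1$ shows that the reciprocal factor is again at most $e^{\epsilon'}$.

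The main (and essentially only) subtlety is articulating the coupling cleanly: one must verify that the product-Bernoulli sampling law on $\mathcal{D}$ factors into an independent sample of $\mathcal{D}'$ together with an independent coin for $r$, so that the conditional application of $\epsilon$-DP is legitimate for each realization of $S'$. Once this factorization is in place, the rest is a one-line invocation of $\epsilon$-DP followed by the short algebraic check above.
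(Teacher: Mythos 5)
Your proof is correct. Note, however, that the paper does not prove this lemma at all: it is stated as a cited background result (attributed to the privacy-amplification-by-sampling literature, \cite{PAClearning,Amplification}), so there is no in-paper argument to compare against. Your write-up is the standard argument from those references: the product-Bernoulli sampling law does factor as an independent sample of $\mathcal{D}'$ times an independent coin for the extra record $r$, the mixture decomposition $P[\mathcal{A}'(\mathcal{D})=t]=\mathbb{E}_{S'}\bigl[\beta P[\mathcal{A}(S'\cup\{r\})=t]+(1-\beta)P[\mathcal{A}(S')=t]\bigr]$ is legitimate under that coupling, and the pointwise application of $\epsilon$-DP gives the factor $1+\beta(e^{\epsilon}-1)$ in the forward direction. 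You also correctly handle the direction that is often glossed over, namely bounding $P[\mathcal{A}'(\mathcal{D}')=t]/P[\mathcal{A}'(\mathcal{D})=t]$, and your identity $(1-\beta+\beta e^{-\epsilon})(1+\beta(e^{\epsilon}-1))=1+\beta(1-\beta)(e^{\epsilon/2}-e^{-\epsilon/2})^{2}\geq 1$ checks out. The only implicit assumption worth making explicit is that the $\epsilon$-DP guarantee of $\mathcal{A}$ is taken with respect to add/remove adjacency (so that $S'$ and $S'\cup\{r\}$ are neighbors), which is the adjacency notion under which this amplification factor is standardly stated and is consistent with how the paper uses the lemma.
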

The \textbf{Laplace mechanism} is a standard algorithm to achieve differential privacy \cite{dwork}. In this mechanism, in order
to output \scalebox{0.9}{$f(\mathcal{D})$} where \scalebox{0.9}{$f : \mathcal{D} \mapsto R$}, an $\epsilon$-DP algorithm $\mathcal{A}$ 
publishes \scalebox{0.9}{$f(\mathcal{D}) + Lap\Big(\frac{\Delta f}{\epsilon}\Big)$ } 
where \scalebox{0.9}{$\Delta f = \max_{\mathcal{D},\mathcal{D}'}||f(\mathcal{D})-f(\mathcal{D}')||_1$} is known as the sensitivity of the function. The probability density function of \scalebox{0.9}{$Lap(b)$} is given by \scalebox{0.9}{$\mathbf{f}(x)={\frac{1}{2b}}e^{ \left(-{\frac  {|x-\mu |}{b}}\right)}$}. The sensitivity of the function \scalebox{0.9}{$f$} is the maximum magnitude by which an individual's data can change \scalebox{0.9}{$f$}. %Intuitively, it captures the uncertainty to be introduced in the response in order to hide the participation of one individual. 
The sensitivity of counting queries is 1. 

Next, we define two terms, namely marginal table and mutually consistent marginal tables, that are used in Sec. \ref{sec:Alg:desc}.

Let \scalebox{0.9}{$\mathcal{D}$} be a data set defined over attributes \scalebox{0.9}{$\mathcal{X}$} and \scalebox{0.9}{$\mathbf{X}$} be an attribute set such that \scalebox{0.9}{$\mathbf{X}=\{X_1,\cdots,X_k\}, \mathbf{X}\subseteq \mathcal{X}$}. Let \scalebox{0.9}{$s=\prod_{i=1}^k |dom(X_i)|$} and \scalebox{0.9}{$dom(\mathbf{X)}=\{v_1, \cdots , v_s\}$} represent the domain of \scalebox{0.9}{$\mathbf{X}$}. The \textbf{marginal table} for the attribute set \scalebox{0.9}{$\mathbf{X}$} denoted by \scalebox{0.9}{$M_\mathbf{X}$}, is computed as follows:

(1) Populate the entries of table \scalebox{0.9}{$T_\mathbf{X}$} of size $s$ from $\mathcal{D}$ such that each entry \scalebox{0.9}{$j \in [s], T_\mathbf{X}[j]=\#$} records in \scalebox{0.9}{$\mathcal{D}$} with \scalebox{0.9}{$\mathbf{X}=v_j$}.
This step is also called \textit{materialization}.

(2) Compute \scalebox{0.9}{$M_\mathbf{X}$} from \scalebox{0.9}{$T_\mathbf{X}$} such that \scalebox{0.9}{$M_\mathbf{X}[j]=T_\mathbf{X}[j]/\sum_{i=1}^sT_\mathbf{X}[i], j \in [s]$}.

Thus the entries of \scalebox{0.9}{$M_\mathbf{X}$} corresponds to the values of the joint probability distribution over the attributes in \scalebox{0.9}{$\mathbf{X}$}.

Let \scalebox{0.9}{$Attr(M)$} denote the set of attributes on which a marginal table \scalebox{0.9}{$M$} is defined and \scalebox{0.9}{$M_1 \equiv M_2$}
denote that the two marginal tables have the same values for
every entry.
\begin{defn}[Mutually Consistent Marginal Tables] \label{def:mutualconsistency}Two noisy
marginal tables \scalebox{0.9}{$\tilde{M}_i$} and \scalebox{0.9}{$\tilde{M}_j$} are defined to be mutually consistent iff the
marginal table over the attributes in \scalebox{0.9}{$Attr(\tilde{M}_i) \cap Attr(\tilde{M}_j)$} reconstructed from
\scalebox{0.9}{$\tilde{M}_i$} is exactly the same as the one reconstructed from  
\scalebox{0.9}{$\tilde{M}_j$}, i.e.,  \small \begin{gather} \tilde{M}_i[Attr(\tilde{M}_i) \cap Attr(\tilde{M}_j)] \equiv \tilde{M}_j[Attr(\tilde{M}_i) \cap Attr(\tilde{M}_j)]
\end{gather} \end{defn}

%  \section{Proposed Solution} 
\section{Data-Dependent Differentially Private Parameter Learning for DGMs}\vspace{-0.1cm}
%We start with an overview of differentially private %learning for DGMs over fully observed data and our data-dependent algorithm for $\epsilon$-DP. We then present our algorithm in detail.
%Here we present our solution for learning DGMs under $\epsilon$-DP.
In this section, we describe our proposed solution for differentially private learning of the parameters of a fully observed DGM by adding data and structure dependent noise.  
\subsection{Problem Setting}

Let \scalebox{0.9}{$\mathcal{D}$} be a sensitive data set of size $m$ with attributes  \scalebox{0.9}{$\mathcal{X}=\langle X_1,\cdots,X_n\rangle$} and let \scalebox{0.9}{$\mathcal{N}=\langle\mathcal{G},\Theta\rangle$} be the DGM of interest. The graph structure \scalebox{0.9}{$\mathcal{G}$} of \scalebox{0.9}{$\mathcal{N}$ }defined over the attribute set \scalebox{0.9}{$\mathcal{X}$} is publicly known. Our goal is to \textit{learn the parameters \scalebox{0.9}{$\Theta$}, i.e., the CPDs of \scalebox{0.9}{$\mathcal{N}$}, in a data-dependent differentially private manner from \scalebox{0.9}{$\mathcal{D}$} such that the error in inference queries over the $\epsilon$-DP DGM is minimized}. %In addition to guaranteeing $\epsilon$-DP, we seek to optimize the way noise is introduced in the learned parameters such that the error in inference queries over the $\epsilon$-DP DGM is minimized. %Errors are defined by comparing the inference results over the $\epsilon$-DP DGM to the inference results obtained over the MLE DGM with no injected noise. 
\subsection{Key Ideas}
Our solution is based on the following two key observations:

(1) The parameters \scalebox{0.9}{$\Theta[X_i|X_{{pa}_i}]$} of the DGM \scalebox{0.9}{$\mathcal{N}$} can be estimated separately via \textit{counting queries over the empirical marginal table} of the attribute set  \scalebox{0.9}{$X_i \cup X_{pa_i}$}. %The size of such a table is at most  $d^{k+1}$ where $d$ is the maximum domain size of $X_i$ and $k$ is the in-degree of  $\mathcal{N}$. %In many practical scenarios the max in-degree of $\mathcal{N}$ tends to be small so that the treewidth of the DGM is bounded. 

(2) The \textit{factorization over \scalebox{0.9}{$\mathcal{N}$} decomposes the overall $\epsilon$-DP learning problem into a set of separate $\epsilon$-DP learning sub-problems (one for each CPD)}. For example, for the DGM in Fig. \ref{fig:DGM}, the following six CPDs have to be learned separately \scalebox{0.9}{$\{P[A],P[B],P[C|A,B],P[D|C],P[E|C],$} \scalebox{0.9}{$P[F|D,E]\}$}. Thus the total privacy budget has to be divided among these sub-problems. However, due to the structure of the graph and the data set, some nodes will have more impact on inference queries than others. Hence, allocating more budget (and thus, getting better accuracy) to these nodes will result in reduced overall error for the inference queries. 

Our method is outlined in Alg. \ref{algo:main} and proceeds in two stages. In the first stage, we obtain preliminary noisy measurements of the parameters of \scalebox{0.9}{$\mathcal{N}$} which are used along with some graph specific properties (the height and out-degree of each node) to formulate a data-dependent optimization objective for privacy budget allocation. The solution of this objective is then used in the second stage to compute the final parameters. For instance, for the DGM in Fig. \ref{fig:DGM}, node $A$ (root node) would typically have a higher privacy budget than node $F$ (leaf node). In summary, if \scalebox{1}{$\epsilon^B$} is the total privacy budget available, we spend \scalebox{1}{$\epsilon^I$} to obtain preliminary parameter measurements in Stage I and the remaining \scalebox{1}{$\epsilon^B-\epsilon^I$} is used for the final parameter computation in Stage II, after optimal allocation across the marginal tables. As a result, our scheme only requires a privacy budget of $\epsilon=1.0$ to yield the same utility that a standard data-independent method achieves with $\epsilon=3.0$ (Sec. \ref{sec:evaluation}).

Next, we describe our algorithm in detail and highlight how we address the two core technical challenges in our solution: 

(1) how to reduce the privacy budget cost for the first stage $\epsilon^I$ (equivalently increase $\epsilon^B$-$\epsilon^I$) (Alg. \ref{algo:main}, Lines 1-3), and  

(2) what properties of the data set and the graph should the optimization objective be based on (Alg. \ref{algo:main}, Lines 5-11).
\begin{algorithm}[tb]\label{algo:main}
\small{

\caption{Differentially private learning of the parameters of a directed graphical model}
\begin{algorithmic}[1]
\STATEx \textbf{Input:} $\mathcal{D}$\hspace{0.09cm} - Input data set of size $m$ with attributes
\STATEx \hspace{1.55cm}\scalebox{0.8}{$\mathcal{X}=\langle X_1,\cdots,X_n\rangle$};
\STATEx \hspace{0.9cm}
 $\mathcal{G} $ \hspace{0.09cm}- Graph Structure of DGM  $\mathcal{N}$;
\STATEx \hspace{0.9cm} $\epsilon^B$- Total privacy budget; \STATEx \hspace{0.9cm} $\beta$\hspace{0.09cm}  - Sampling rate for Stage I; \STATEx \hspace{0.9cm}$\epsilon^I$ \hspace{0.09cm}- Privacy budget for Stage I
\STATEx \textbf{Output:} $\widetilde{\Theta}$ - Noisy parameters of $\mathcal{N}$   \vspace{0.1cm}
\Statex \textbf{Stage I:}  Optimization objective formulation for privacy\Statex \hspace{1.1cm} budget allocation
\vspace{0.1cm}
\STATE  $\epsilon = \ln\big(\frac{e^{\epsilon^I}-1}{\beta}+1\big)$ \hspace{0.8cm} \textcolor{blue}{$\rhd$} Computing privacy parameter  \STATE Construct a new dataset $\mathcal{D}'$ by sampling $\mathcal{D}$ with probability $\beta$
\STATE  $\mathpzc{E}=[\frac{\epsilon}{n},\cdots,\frac{\epsilon}{n}]$
\STATE \scalebox{0.9}{$\widehat{\Theta}, \widehat{T}, \widehat{T}_{pa}=$}\vspace{0.1cm} \scalebox{1}{$ComputeParameters$}\scalebox{0.9}{$(\mathcal{D}',\mathpzc{E})$}

\STATE \textbf{for} $i = 1$ to $n$ 
\STATE \hspace{0.4cm}  $\widetilde{{\delta}_i}$ \hspace{0.1cm}= \scalebox{1}{$ComputeError$}\scalebox{0.9}{$(i,\widehat{T}_i,\widehat{T}_{pa_i})$} \hspace{0.05cm} \vspace{0.1cm}\Statex \hspace{0.7cm}\textcolor{blue}{$\rhd$} Estimating error of parameters for $X_i$ using Eq.~\eqref{eq:delta} \vspace{0.1cm}
\STATE \hspace{0.45cm}$h_i$ \hspace{0.1cm}= Height of node $X_i$\vspace{0.1cm}
\STATE  \hspace{0.4cm} $o_i$ \hspace{0.1cm}= Out-degree of node $X_i$\vspace{0.1cm}
\STATE \hspace{0.35cm}  \scalebox{0.9}{$\widetilde{\Delta}^{\mathcal{N}}_i$} = \scalebox{1}{$ComputeSensitivity$}\scalebox{0.9}{$(i, \widehat{\Theta})$}  \vspace{0.1cm} \Statex \hspace{0.7cm}\textcolor{blue}{$\rhd$} Computing sensitivity of the parameters using Eq. \eqref{sensitivity1}%, \eqref{sensitivity2}\
\vspace{0.1cm}
\STATE \hspace{0.45cm}  \scalebox{0.9}{$W[i]= (h_i+1)\cdot (o_i+1)\cdot (\widetilde{\Delta}^{\mathcal{N}}_i+1)$}
\STATE \textbf{end for}
\STATE  \scalebox{0.9}{$\mathcal{F}_{\mathcal{G},\mathcal{D}}=\sum_{i=1}^{n-1} W[i]\cdot \widetilde{\delta_i}/{\epsilon_i} + W[n]\cdot\widetilde{\delta_n}/(\epsilon^B-\epsilon^I-\sum_{i=1}^{n-1}\epsilon_i)$} \vspace{0.1cm}\Statex \hspace{4cm} \textcolor{blue}{$\rhd$} Optimization Objective
\vspace{0.1cm}
\Statex \textbf{Stage II:}  Final computation of the parameter \scalebox{0.9}{$\Theta$}\vspace{0.1cm}
 \STATE Solve for  \scalebox{0.9}{$\mathpzc{E}^*=\{\epsilon^*_i\}$} from minimizing \scalebox{0.9}{$\mathcal{F}_{\mathcal{G},\mathcal{D}}$}  using Eq. \eqref{sol}\vspace{0.05cm}
 \STATE  \scalebox{0.9}{$\overline{\Theta},\overline{T}, \overline{T}_{pa} = $} \scalebox{1}{$ComputeParameters$}\scalebox{0.9}{$(\mathcal{D},\mathpzc{E}^*)$}
 \STATE \scalebox{0.9}{$\widetilde{\Theta}=\varnothing
 $}
 \STATE \textbf{for} \scalebox{0.9}{$X_i \in \mathcal{X}$}\STATEx \textcolor{blue}{$\rhd$} Assuming \scalebox{0.8}{$\widehat{\Theta}=\underset{X_i \in \mathcal{X}}{\bigcup}\widehat{\Theta}[X_i|X_{pa_i}]$} and \scalebox{0.8}{$\overline{\Theta}=\underset{X_i \in \mathcal{X}}{\bigcup}\overline{\Theta}[X_i|X_{pa_i}]$} \STATE \hspace{0.43cm} $\widehat{\epsilon_i} = (\epsilon^I/n)/(\mathpzc{E}^*[i]+\epsilon^I/n), \overline{\epsilon_i}=\mathpzc{E}^*[i]/(\mathpzc{E}^*[i]+\epsilon^I/n)$\vspace{0.2cm}
 \STATE \hspace{0.43cm} \scalebox{0.9}{$\widetilde{\Theta}[X_i|X_{pa_i}] = 
\widehat{\epsilon_i}\cdot\widehat{\Theta}[X_i|X_{pa_i}] + \overline{\epsilon_i}\cdot\overline{\Theta}[X_i|X_{pa_i}]$} \Statex \hspace{5.5cm}\textcolor{blue}{$\rhd$}Weighted Mean
 \STATE \hspace{0.43cm} \scalebox{0.9}{$\widetilde{\Theta}=\widetilde{\Theta}\bigcup \widetilde{\Theta}[X_i|X_{pa_i}]$}
 \STATE \textbf{end for}
 \STATE Return  $\widetilde{\Theta}$
\end{algorithmic}\label{algo:main}}
\end{algorithm}
\setlength{\textfloatsep}{10pt}
\begin{algorithm}[tb]\label{procedure}
\small{
\caption*{\textbf{Procedure 1} \scalebox{0.95}{$ComputeParameters$}}\label{algo:compute_para}
\begin{algorithmic}[1]
\STATEx \textbf{Input:}   \scalebox{0.9}{$\mathcal{D}$} - Data set with attributes \scalebox{0.9}{$\mathcal{X}=\langle X_1,\cdots,X_n\rangle$};\Statex \hspace{0.8cm} \scalebox{0.9}{$\mathpzc{E}[1,\cdots,n]$} - Array of privacy budget allocation
%\STATEx \hspace{1cm} $index$ -  
\STATE \textbf{for}  \scalebox{0.9}{$i = 1$} to  $n$
\STATE \hspace{0.3cm} Materialize tables  \scalebox{0.9}{$T_i$} and \scalebox{0.9}{$T_{pa_i}$} for the attribute sets \STATEx \hspace{0.3cm} \scalebox{0.9}{$X_i \cup X_{pa_i}$} and \scalebox{0.9}{$X_{pa_i}$} respectively for  \scalebox{0.9}{$\mathcal{D}$} \vspace{0.1cm}
\STATE \hspace{0.3cm}Add noise \scalebox{0.9}{$\sim Lap(\frac{2}{\mathpzc{E}[i]})$} to each entry of \scalebox{0.9}{ $T_i$} and \scalebox{0.9}{$T_{pa_i}$} \STATEx \hspace{0.3cm} to generate  noisy tables \scalebox{0.9}{$\widetilde{T_i}$} and \scalebox{0.9}{$\widetilde{T}_{pa_i}$} respectively \vspace{0.1cm}
\STATE \hspace{0.3cm} Convert \scalebox{0.9}{$\widetilde{T_i}$} into noisy marginal table  \scalebox{0.9}{${\widetilde{M}_i}$} using \scalebox{0.9}{$\widetilde{T}_{pa_i}$}
\STATE \textbf{end for}
\STATE \scalebox{1}{$MutualConsistency$}\scalebox{0.9}{$(\bigcup_{X_i \in \mathcal{X}} \widetilde{M}_i$)} \vspace{0.1cm} \Statex \hspace{0.5cm}\textcolor{blue}{$\rhd$}Ensures mutual consistency (Def. \ref{def:mutualconsistency}) among the noisy \Statex \hspace{0.7cm} marginal tables sharing
subsets of attributes
\vspace{0.1cm}
\STATE \textbf{for} \scalebox{0.9}{$i=1$} to $n$
\STATE \hspace{0.3cm} Construct  \scalebox{0.9}{$\widetilde{\Theta}[X_i|X_{pa_i}]$} from \scalebox{0.9}{$\widetilde{M_i}$} \hspace{1.5cm}\textcolor{blue}{$\rhd$} Using Eq. \ref{eq:learning}
\STATE \textbf{end for}
%\STATE \textbf{for} $i = 1$ to $index$
%\STATE \hspace{0.7cm} Construct $\widetilde{\Theta}[X_i|X_{pa_i}]$ from $\widetilde{T}_{j}, j \in \{index+1,\cdots,d\}$ such that $\{X_i \cup X_{pa_i}\}\subset \{X_j \cup X_{pa_j}\}$
%\STATE \textbf{end for}
\vspace{0.1cm}
\STATE Return  \scalebox{0.9}{$\widetilde{\Theta}=\underset{X_i \in \mathcal{X}}{\bigcup}\widetilde{\Theta}{[X_i|X_{pa_i}]}, \widetilde{T}=\underset{X_i \in \mathcal{X}}{\bigcup}\widetilde{T}_i, \widetilde{T}_{pa}=\underset{X_i \in \mathcal{X}}{\bigcup}\widetilde{T}_{pa_i}$}
\end{algorithmic}}
\end{algorithm}
 %Based on the aforementioned observations we propose a 2 stage algorithm that is discussed in the following section. 

 \subsection{Algorithm Description} \label{sec:Alg:desc} 
 We now describe the two stages of our technique:
 
 \textbf{Stage I -- Formulation of optimization objective:} First, we handle the trade-off between the two parts of the total privacy budget \scalebox{0.9}{$\epsilon^I$} and \scalebox{0.9}{ $\epsilon^B-\epsilon^I$}. While we want to maximize \scalebox{0.9}{$\epsilon^B-\epsilon^I$} to reduce the amount of noise in the final parameters, sufficient budget \scalebox{0.9}{$\epsilon^I$} is required to obtain good estimates of the statistics of the data set to form the data-dependent budget allocation objective. To handle this trade-off, we use the sampling strategy from Lemma~\ref{sampling} to improve the accuracy of the optimization objective computation (Alg. \ref{algo:main}, Lines 1-2). This allows us to assign a relatively low value to \scalebox{0.9}{$\epsilon^I$} increasing our budget for the final parameter computation. %Recall from Lemma \ref{sampling} that if we perform a preceding sampling step before a $\epsilon$-DP mechanism, then the overall mechanism is $\epsilon'$-DP where $\epsilon' < \epsilon$. Hence, for large values of $m$ and sampling rate $\beta$,  $\small\sqrt{\frac{\beta(1-\beta)}{m}}+\frac{\sqrt{2}}{\epsilon} < \frac{\sqrt{2}}{\epsilon'}$ where LHS is the error for the above mechanism while RHS is the error of a standard $\epsilon'$-DP mechanism (assuming Laplace mechanism -- Appendix \ref{DP}).

Next, we estimate the parameters \scalebox{0.9}{$\widehat{\Theta}$} on the sampled data set \scalebox{0.9}{$\mathcal{D}'$} via the procedure \scalebox{0.95}{$ComputeParameters$} (described below and outlined in Procedure 1) using budget allocation \scalebox{0.9}{$\mathpzc{E}$}  (Alg. \ref{algo:main}, Lines 3-4). %The goal of this procedure is, given a privacy budget allocation $\mathpzc{E}$, to derive the parameters of $\mathcal{N}$ in a differential private manner. A detailed description of the procedure is presented in Appendix \ref{Comp_para}.  
Note that \scalebox{0.9}{$\widehat{\Theta}$} is only required for the optimization objective formulation and is different from the final parameters \scalebox{0.9}{$\widetilde{\Theta}$} (Alg. \ref{algo:main}, Line 18). Hence, for \scalebox{0.9}{$\widehat{\Theta}$} we use a naive allocation policy of equal privacy budget for all tables. 

Finally, we compute the privacy budget optimization objective \scalebox{0.9}{$\mathcal{F}_{\mathcal{D},\mathcal{G}}$} that depends on the data set \scalebox{0.9}{$\mathcal{D}$} and graph structure \scalebox{0.9}{$\mathcal{G}$} (Alg. \ref{algo:main}, Line 5-12). The details are discussed in Sec. \ref{optimization}.  %This concludes the first stage of the algorithm.

\textbf{Stage II -- Final parameter computation:} We solve for the optimal privacy budget allocation \scalebox{1}{$\mathpzc{E}^*$} from \scalebox{0.9}{$\mathcal{F}_{\mathcal{D},\mathcal{G}}$} and use it to compute a copy of the parameters \scalebox{0.9}{$\overline{\Theta}$} (Alg. \ref{algo:main}, Lines 13-14). We obtain the final parameters \scalebox{0.9}{$\widetilde{\Theta}$} by computing the weighted average of the corresponding values in \scalebox{0.9}{$\overline{\Theta}$} and the preliminary estimate \scalebox{0.9}{$\widehat{\Theta}$} (Alg. \ref{algo:main}, Lines 15-20). Note that \scalebox{0.9}{$\mathpzc{E}^*[i]+\epsilon^I/n$} is the total privacy budget spent on the CPD of node \scalebox{0.9}{$X_i$} in the two rounds.

\textbf{Procedure 1 \scalebox{1}{$ComputeParameters$}:} The goal of this procedure is, \textit{given a privacy budget allocation \scalebox{0.9}{$\mathpzc{E}$}, to derive the parameters of \scalebox{0.9}{$\mathcal{N}$} under DP}. First, we materialize the tables for the attribute sets \scalebox{0.9}{$ X_i \cup X_{pa_i}$} and \scalebox{0.9}{$X_{pa_i} i \in [n] $} (Proc. 1, Line 2), and then inject noise drawn from \scalebox{0.9}{$Lap(\frac{2}{\mathpzc{E}[i]})$} (using half of the privacy budget \scalebox{0.9}{$\frac{\mathpzc{E}[i]}{2}$} for each table) into each of their cells (Proc. 1, Line 3) to generate \scalebox{0.9}{$\widetilde{T}_i$} and \scalebox{0.9}{$\widetilde{T}_{pa_i}$} respectively. %Since each entry of $T_i$ corresponds to a counting query, the sensitivity is 1. 
Next, we convert \scalebox{0.9}{$\widetilde{T}_i$} and \scalebox{0.9}{$\widetilde{T}_{pa_i}$} to a marginal table \scalebox{0.9}{$\widetilde{M}_i$}, i.e., joint distribution  \scalebox{0.9}{$P_{\mathcal{N}}[X_i,X_{pa_i}]$} (Proc. 1, Line 4) as
\begin{gather*}\vspace{-4cm}\small \scalebox{1.05}{$\widetilde{M_i}[x_i,x_{pa_i}]=$}\scalebox{1.2}{$\frac{T[x_i,x_{pa_i}]/T_{pa_i}[x_{pa_i}]}{\underset{v\in dom(X_i)}{\sum}\hspace{-0.3cm}T[v,x_{pa_i}]/T_{pa_i}[x_{pa_i}]}$}\vspace{-1cm}
\end{gather*} The denominator in the above equation is for normalization. This is followed by ensuring that all \scalebox{0.9}{$\widetilde{M}_i$}s are mutually consistent (Def. \ref{def:mutualconsistency}) on all the attribute subsets (Proc. 1, Line 6). For this, we follow the techniques outlined in \cite{Consistency,PriView} and further described in Appx.~\ref{consistency}. Finally, we derive \scalebox{0.9}{$\widetilde{\Theta}[X_i|X_{pa_i}]$} (the noisy estimate of \scalebox{0.9}{$P_{\mathcal{N}}[X_i|X_{pa_i}]$)} from  \scalebox{0.9}{$\widetilde{M_i}$} (Proc. 1, Lines 7-10). Note that although \scalebox{0.9}{$\widetilde{M}_i$} could have been derived from \scalebox{0.9}{$\widetilde{T}_i$} alone, we also use \scalebox{0.9}{$\widetilde{T}_{pa_i}$} for its computation to ensure independence of the added noise in Eq. \eqref{eq:expected_error}. %Note that deriving the marginal table $\widetilde{M}_i$ from $\widetilde{T}_i$ instead of directly computing it results in lower error (since the sensitivity of $P_{\mathcal{N}}[X_i|X_{pa_i}]$ is $2$).% (in the case of the latter the standard error in {$\Theta[X_i|X_{pa_i}]$} is {$\frac{2\sqrt{2}}{\epsilon}$} as the sensitivity of {$P[X_i|X_{pa_i}]$} is $2$). This error is higher than that of our case (Eq. \eqref{eq:delta}).

\subsection{Optimal Privacy Budget Allocation}\label{optimization}  

Our goal is to find the optimal privacy budget allocation over the marginal tables, \scalebox{0.9}{$\widetilde{M}_i, i \in [n]$} for \scalebox{0.9}{$\mathcal{N}$} such that the error in the subsequent inference queries on \scalebox{0.9}{$\mathcal{N}$} is minimized.

\textbf{{Observation I:}} \textit{A more accurate estimate of the parameters of $\mathcal{N}$ will result in better accuracy for the subsequent inference queries}. Hence, we focus on reducing the total error of the parameters of \scalebox{0.9}{$\mathcal{N}$}. From Eq. \eqref{eq:learning} and our Laplace noise injection (Proc. 1, Line 3), for a privacy budget of $\epsilon$, the value of a parameter of the DGM computed from the noisy marginal tables \scalebox{0.9}{$\widetilde{M}_i$} is expected to be
\begin{gather*}\vspace{-0.5cm}
\scalebox{0.9}{$
\widetilde{\Theta}[x_i|x_{pa_i}]=\Big(\hspace{-0.4cm}\underbrace{C[x_i,x_{pa_i}]}_{\substack{\text{True count for records}\\\text{with $X_i=x_i$}\\\text{ and $X_{pa_i}=x_{pa_i}$}}}\hspace{-0.2cm}\pm\hspace{-0.1cm}\underbrace{\frac{2\sqrt{2}}{\epsilon}}_{\substack{\text{Noise due to}\\\text{Laplace}\\\text{ mechanism}}}\hspace{-0.3cm}\Big)\Big/\Big(C[x_{pa_i}]\pm$}\scalebox{1.2}{$\frac{2\sqrt{2}}{\epsilon}\Big)$} \numberthis \label{eq:expected_error}%\Big[\text{Follows from the Laplace noise added } \Big] 
\end{gather*} 
Thus, from the rules of standard error propagation \cite{error}, the error in \scalebox{0.9}{$\Theta[x_i,x_{pa_i}]$}  is 
\begin{gather*}\small\vspace{-0.3cm}
\hspace{-0.2cm}\delta_{\Theta[x_i,x_{pa_i}]}\scalebox{0.85}{$=\Theta[x_i,x_{pa_i}] \sqrt{8/(\epsilon \cdot C[x_{pa_i}])^2+8/(\epsilon \cdot C[x_i,x_{pa_i}])^2}$} 
%\scalebox{1.2}{$=\frac{2\sqrt{2}\Theta[x_i,x_{pa_i}]}{\epsilon}\cdot$}\scalebox{0.95}{$\sqrt{1/C[x_{pa_i}]^2+1/C[x_i,x_{pa_i}]^2} $} 
\numberthis\label{eq:delta}\end{gather*} 
where \scalebox{0.8}{$C[x_i]$} denotes the number of records in \scalebox{0.9}{$\mathcal{D}$} with  \scalebox{0.9}{$X_i=x_i$}.
Hence, the mean error for the parameters of  \scalebox{0.9}{$X_i$} is  \begin{gather*}\vspace{-0.2cm}\small\delta_{i}\scalebox{1.2}{$=\frac{1}{|dom(X_i \cup X_{pa_i})|}\sum_{x_i,x_{pa_i}} \delta_{\Theta[x_i|x_{pa_i}]}$}\vspace{-0.2cm}\end{gather*} %=\frac{2\sqrt{2}}{\epsilon|dom(X_i \cup X_{pa_i})|}\cdot\sum_{x_i,x_{pa_i}}\Theta[x_i|x_{pa_1}]\sqrt{\frac{1}{C[x_{pa_i}]^2}+\frac{1}{C[x_i,x_{pa_i}]^2}}$
where \scalebox{0.9}{$dom(S)$} is the domain of the attribute set \scalebox{0.9}{$S$}. Since using the true counts, \scalebox{0.9}{$C[x_i]$}, would violate privacy, Alg. \ref{algo:main} uses the noisy estimates from  \scalebox{0.9}{$\widetilde{T_i}$} and \scalebox{0.9}{$\widetilde{T}_{pa_i}$} (Alg. 
\ref{algo:main}, Line 6).

\textbf{{Observation II:}} \textit{ Depending on the data set and the graph structure, different nodes will have different impact on inferencing}. This information can be captured by a corresponding weighting coefficient  \scalebox{0.9}{$W[i], i \in [n]$} for each node. 

 %The above objective can in fact be rewritten as \begin{gather} \mbox{minimize } \sum_{i=1}^{n-1} W[i]\cdot \frac{\widetilde{\delta}_i}{\epsilon^I} + W[n]\cdot \frac{\widetilde{\delta}_{n}}{\epsilon^B-\epsilon^I-\sum_{i=1}^{n-1}\epsilon^I}  \end{gather}

\textbf{Computation of weighting coefficient  \scalebox{0.9}{$W[i]$}:} For a given node \scalebox{0.9}{$X_i$}, the weighting coefficient \scalebox{0.9}{$W[i]$} is computed from the following three node features: 

(1) \textbf{Height of the node $h_i$:}
%Recall from section that the variable elimination algorithm requires an ordering over the nodes of the graph. This ordering is very crucial  as the computational complexity is determined by it. Finding the optimal ordering for arbitrary graphs is a NP-hard problem. However topological ordering is a good heuristic  
The height of a node \scalebox{0.9}{$X_i$} is the length of the longest path between \scalebox{0.9}{$X_i$} and a leaf node. Due to the factorization of the joint distribution over a DGM, the marginal probability distribution of a node depends only on the set of its ancestor nodes (as is explained in the following discussion on the computation of sensitivity). Thus, a node with large height will affect the inference queries on more nodes (all its successors) than say a leaf node.  %From the VE algorithm, it is intuitive that when a node with a large height is eliminated early in the order, its errors can affect the computation of nodes with low height. 

(2)\textbf{ Out-degree of the node $o_i$:} A  node causally affects all its children nodes. Thus the impact of a node with high out-degree on inferencing will be more  than say a leaf node.

(3)\textbf{ Sensitivity \scalebox{0.9}{$\Delta^{\mathcal{N}}_i$}:} Sensitivity of a parameter in a DGM  measures the impact of  small changes in the parameter value on a target probability. Laskey~\cite{sensitivity1} proposed a method  of computing sensitivity by using the partial derivative of output probabilities with respect to the parameter being varied. However, previous works have mostly focused on the target probability to be a joint distribution of all the variables. In this paper, we present a method to compute sensitivity by targeting the probability distribution of child nodes only.
Let  \scalebox{0.9}{$\Delta^{\mathcal{N}}_i$} denote the mean sensitivity of the parameters of  \scalebox{0.9}{$X_i$} on target probabilities of all the nodes in  \scalebox{0.9}{$Child(X_i)$}= $\{$set of all the child nodes of  $X_i\}$. Formally,
\begin{gather*}
\hspace{-2cm}
\scalebox{0.95}{$ \Delta^{\mathcal{N}}_i=$}
\scalebox{1}{$ \frac{1}{|dom(X_i\cup X_{pa_i})|}\Big(\underset{x_i,x_{pa_i}}{\sum}\frac{1}{|Child(X_i)|}\big($}
\\
\hspace{1.8cm}
\underbrace{\scalebox{1}{$\underset{Y \in Child(X_i)}{\sum}\hspace{-0.1cm}\frac{1}{|dom(Y)|}(\sum_y\pdv {P_{\mathcal{N}}[Y=y]}{\Theta[x_i|x_{pa_i}]})\big)\Big)$}}_{\substack{\text{computing the partial derivatives}
\\
\text{of the parameters of the child nodes only}}}
\hspace{-0.4cm}
\numberthis\label{sensitivity1}\end{gather*} 
A node  \scalebox{0.9}{$X_i$} can affect another node  \scalebox{0.9}{$Y$} only iff  it is in its Markov blanket (Defn. \ref{MB}), i.e., \scalebox{0.9}{$Y \in \mathpzc{P}(X_i)$}. However due to the factorization of the joint distribution over a DGM,  \scalebox{0.9}{$\forall Y \in \mathpzc{P}(X_i), Y \not \in Child(X_i), P_{\mathcal{N}}[Y]$} can be expressed without  \scalebox{0.9}{$\Theta[x_i|x_{pa_i}]$}.  Thus just computing the mean sensitivity of the parameters over the set of child nodes \scalebox{0.9}{$\Delta^{\mathcal{N}}_i$} turns out to be a good weighting metric for our setting.  \scalebox{0.9}{$\Delta^{\mathcal{N}}_i$} for leaf nodes is thus 0. Note that  \scalebox{0.9}{$\Delta^{\mathcal{N}}_i$} is distinct from the notion of sensitivity of a function in the Laplace mechanism (Sec. \ref{sec:background}).

\textbf{Computing  Sensitivity \scalebox{0.9}{$\Delta^{\mathcal{N}}_i$}:} Let  \scalebox{0.9}{$Y \in Child(X_i)$} and  \scalebox{0.9}{$\Gamma(Y)=\{\mathbf{Y}_1,\cdots,\mathbf{Y}_t\}, t < n$} denote the set of all nodes such that there is a directed path from \scalebox{0.9}{ $\mathbf{Y}_i$} to  \scalebox{0.9}{$Y$}. In other words \scalebox{0.9}{$\Gamma(Y)$} denotes the set of ancestors of  \scalebox{0.9}{$Y$} in  \scalebox{0.9}{$\mathcal{G}$}. From the factorization of the joint distribution over \scalebox{0.9}{$\mathcal{N}$}, we have 
\begin{gather*} \small \scalebox{0.95}{$P_{\mathcal{N}}[\mathbf{Y}_1,\cdots,\mathbf{Y}_t,Y]=P_{\mathcal{N}}[Y|Y_{pa}]\cdot \prod_{\mathbf{Y}_i \in \Gamma(Y)} P_{\mathcal{N}}[\mathbf{Y}_i|\mathbf{Y}_{pa_i}]$ } \\\scalebox{0.95}{$P_{\mathcal{N}}[Y]=\sum_{\mathbf{Y}_1}...\sum_{\mathbf{Y}_t} P_{\mathcal{N}}[\mathbf{Y}_1,\cdots,\mathbf{Y}_t,Y]$}\end{gather*}
Therefore, using our noisy preliminary parameter estimates (Alg. \ref{algo:main}, Line 4), we compute \begin{gather*} \hspace{-0.5cm}\scalebox{0.9}{${\pdv{\widetilde{P}_{\mathcal{N}}[Y=y]}{\Theta[x_i|x_{pa_i}]}}=\hspace{-0.6cm}\underset{\substack{\mathbf{y}_i \in dom(\mathbf{Y}_i),\\ \mathbf{y}_{pa_i} \in dom(\mathbf{Y}_{{pa}_i}),\\ \mathbf{Y}_i\in \Gamma(Y)}}{\sum}\hspace{-0.3cm}\Big(
\underset{\mathbf{Y}_i\in \Gamma(Y)}{\prod}\widehat{\Theta}[\mathbf{Y}_i=\mathbf{y}_i|\mathbf{Y}_{{pa}_i}\mathbf{y}_{pa_i}]\boldsymbol{\cdot}$} \\ \scalebox{0.9}{\hspace{-0.2cm}$\underset{x_i,x_{pa_i}}{\zeta}\hspace{-0.3cm}(\mathbf{Y}_i=\mathbf{y}_i,\mathbf{Y}_{pa_i}\hspace{-0.1cm}=\mathbf{y}_{pa_i})\Big) \boldsymbol{\cdot} \widehat{\Theta}[Y\hspace{-0.1cm}=y|Y_{pa}\hspace{-0.1cm}=y_{pa}] \boldsymbol{\cdot} \hspace{-0.4cm}\underset{x_i,x_{pa_i}}{\zeta}\hspace{-0.4cm}(Y_{pa}=y_{pa})$}\end{gather*}\begin{equation*}\scalebox{0.9}{$ \underbrace{\underset{x_i,x_{pa_i}}{\zeta} \hspace{-0.3cm}(Z_1=z_1,\cdots,Z_t=z_t)}_{\substack{\text{indicator variable to ensure}\\\text{only relevant terms are retained}}} = \hspace{-0.1cm}\left\{
                \begin{array}{ll}
                 \hspace{-0.2cm}1 \hspace{0.15cm}\mbox{ if } \underset{i=1}{\bigcup}^t Z_i \bigcap \{X_i \cup X_{pa_i}\}\hspace{-0.1cm}=\hspace{-0.05cm}\varnothing   \\ \hspace{-0.2cm}1 \hspace{0.15cm}
                  \mbox{ if } \forall Z_i, Z_i \in \{X\cup X_{pa_i}\}\hspace{-0.05cm}\Rightarrow\\\hspace{2.1cm} z_i \in \{x,x_{pa_i}\}  \\
                  \hspace{-0.2cm
                  }0 \hspace{0.15cm} \mbox{ otherwise}
                \end{array}
              \right. $\label{sensitivity2}\numberthis} \end{equation*} where  \scalebox{0.9}{$ \zeta_{x_i,x_{pa_i}}$} is an indicator variable which ensures that only the product terms  \scalebox{0.9}{$\prod_{\mathbf{Y}_i\in \Gamma(Y)}\widehat{\Theta}[y_i|y_{pa_i}]$} involving parameters with attributes in  \scalebox{0.9}{$\{X_i,X_{pa_i},Y\}$} that match up with the corresponding values in  \scalebox{0.9}{$\{x_i,x_{pa_i},y\}$}  are retained in the computation (as all others terms  have partial derivative 0).
Thus, the noisy mean sensitivity estimate for the parameters of node \scalebox{0.9}{$X_i$},  \scalebox{0.9}{$\widetilde{\Delta}^{\mathcal{N}}_i$}, can be computed  from Eq. \eqref{sensitivity1} and \eqref{sensitivity2}.

%\textcolor{blue}{For example, for the DGM given by Figure \ref{fig:DGM} for node  $A$ (assuming binary attributes for simplicity) we need to compute the sensitivity of its parameters on the target probability of  $C$, i.e.,  $\pdv{P[C=0]}{\Theta[A=0]},\pdv{P[C=0]}{\Theta[A=1]},\pdv{P[C=1]}{\Theta[A=1]}$ and  $\pdv{P[C=1]}{\Theta[A=0]}$ which is computed as  $\pdv{\widetilde{P}[C=0]}{\Theta[A=0]}=$ \scalebox{0.9}{$\widehat{\Theta}[B=0]\widehat{\Theta}[C=0|A=0,B=0]$}\scalebox{0.9}{ $+\widehat{\Theta}[B=1]\widehat{\Theta}[C=0|A=0,B=1] $}. The rest of the partial derivatives are computed in a similar manner to give us  $ \Delta^{\mathcal{N}}_A=\frac{1}{4}\Big(\pdv{P[C=0]}{\Theta[A=0]}+\pdv{P[C=0]}{\Theta[A=1]}+\pdv{P[C=1]}{\Theta[A=1]}+\pdv{P[C=1]}{\Theta[A=0]}\Big) $.}

 %For example, for the DGM given by Figure \ref{DGM} for node  $A$ (assuming binary attributes for simplicity) we need to compute the sensitivity of its parameters on the target probability of  $C$, i.e.,  $\pdv{P[C=0]}{\Theta[A=0]},\pdv{P[C=0]}{\Theta[A=1]},\pdv{P[C=1]}{\Theta[A=1]}$ and  $\pdv{P[C=1]}{\Theta[A=0]}$ which is computed as  $ \pdv{\widetilde{P}[C=0]}{\Theta[A=0]}=\widehat{\Theta}[B=0]\widehat{\Theta}[C=0|A=0,B=0]$ $+\widehat{\Theta}[B=1]\widehat{\Theta}[C=0|A=0,B=1] $. The rest of the partial derivatives are computed in a similar manner to give us  { $ \Delta^{\mathcal{N}}_A=\frac{1}{4}\Big(\pdv{P[C=0]}{\Theta[A=0]}+\pdv{P[C=0]}{\Theta[A=1]}+\pdv{P[C=1]}{\Theta[A=1]}+\pdv{P[C=1]}{\Theta[A=0]}\Big) $.

\textbf{Optimization Objective:} Let $\epsilon_i$ denote the privacy budget for node \scalebox{0.9}{$X_i$}. Thus from the above discussion, the optimization objective  \scalebox{0.9}{$\mathcal{F}_{\mathcal{D},\mathcal{G}}$} is formulated as a weighted sum of the parameter error and the optimization problem is given by\
\begin{gather*}\underset{\epsilon_i}{\mbox{\bf{minimize} }}\thinspace\thinspace\thinspace \scalebox{0.9}{$\mathcal{F}_{\mathcal{D},\mathcal{G}}$}\\ \hspace{1.2cm}\mbox{ \bf{subject to }} \thinspace\thinspace\thinspace\scalebox{0.9}{$\epsilon_i>0, \forall i \in [n]  $}\numberthis\label{error}\\
%\underset{\epsilon_i}{\mbox{\bf{minimize} }}\thinspace \scalebox{0.9}{$\mathcal{F}_{\mathcal{D},\mathcal{G}}$} \mbox{ subject to } \thinspace\scalebox{0.9}{$\epsilon_i>0, \forall i \in [n]  $}\numberthis\label{error}\\
 \scalebox{0.9}{$\mathcal{F}_{\mathcal{D},\mathcal{G}} = \Big( \sum_{i=1}^{n-1} \underbrace{W[i]}_{\text{weight}}\cdot \underbrace{\frac{\widetilde{\delta}_i}{\epsilon_i}}_{\text{mean error}} \hspace{-0.2cm}+ \thinspace W[n]\cdot$}\scalebox{1}{$ \frac{\widetilde{\delta}_{n}}{\epsilon^B-\epsilon^I-\sum_{i=1}^{n-1}\epsilon_i}\Big)$}\numberthis
\label{objective} \\ \hspace{-2cm}\scalebox{0.9}{\mbox{}}
 \scalebox{0.87}{$ W[i]=(\underbrace{h_i}_{\text{height}}+1)\cdot (\underbrace{o_i}_{\text{out-degree}}+1)\cdot (\underbrace{\widetilde{\Delta}^{\mathcal{N}}_i}_{\text{sensitivity}}+1)$}\numberthis\label{W}\\\hspace{-3.5cm}\scalebox{0.9}{$\widetilde{\delta}_i=$}\scalebox{1}{$\frac{1}{|dom(X_i\cup X_{pa_i})|}
 \sum_{x_i,x_{pa_i}}\Big($}\\\hspace{1.6cm}\scalebox{0.9}{$\widehat{\Theta}[x_i|x_{pa_i}]\sqrt{1/\widehat{T}[x_{pa_i}]^2+1/\widehat{T}[x_i,x_{pa_i}]^2}$}\numberthis\label{eq:delta}\Big) \\ \hspace{-1.3cm}\scalebox{0.9}{$\widehat{T}[x_i,x_{pa_i}],\widehat{T}[x_{pa_i}] \in \widehat{T}_i$}\end{gather*}
 where $h_i$  and $o_i$ are the height and out-degree of the node  \scalebox{0.9}{$X_i$} respectively, \scalebox{0.9}{$\Delta^{\mathcal{N}}_i$} is the sensitivity of the parameters of   \scalebox{0.9}{$X_i$}, $\frac{\widetilde{\delta}_i}{\epsilon_i} $ gives the measure for estimated mean error for the parameters (CPD) of  \scalebox{0.9}{$X_i$} and the denominator of the last term of Eq. \eqref{objective} captures the linear constraint  \scalebox{0.9}{$\sum_{i=1}^n\epsilon_i=\epsilon^B-\epsilon^I $}. As stated by Eq. \eqref{W}, the weighting coefficient \scalebox{0.9}{$W[i]$} is defined as the product of the aforementioned three features. The extra additive term $1$ is used to handle leaf nodes to ensure non-zero weighting coefficients. Let $\epsilon^*_i$ denote the optimal privacy budget for node \scalebox{0.9}{$X_i$}. The objective \scalebox{0.9}{$\mathcal{F}_{\mathcal{D},\mathcal{G}}$} has a closed form solution as follows
\begin{gather*}\vspace{-0.8cm}\small 
 \scalebox{1}{$c_j=1/(W[j]\cdot \widetilde{\delta}_j), j \in [n], \epsilon^{II}=\epsilon^B-\epsilon^I$} \\\hspace{-0.2cm}\scalebox{1}{$\epsilon^*_i=\frac{\epsilon^{II}\hspace{-0.1cm}\underset{j=1, j\neq i}{\prod^n}\hspace{-0.2cm}\sqrt{c_j}}{{\sum}_{j \in [n]}\underset{l=1, l \neq j}{\prod^n}\hspace{-0.2cm}\sqrt{c_l}}, i \hspace{-0.1cm}\in\hspace{0cm} [n-1],\epsilon^*_n=\hspace{-0.05cm}\epsilon^{II}-\hspace{-0.1cm} \overset{n-1}{\underset{i=1}{\sum}}\epsilon^*_i$} \numberthis\label{sol}  \end{gather*}  
\textbf{Discussion:} There are two sources of information to be considered for a DGM - (1) graph structure  \scalebox{0.9}{$\mathcal{G}$} (2) data set  \scalebox{0.9}{$\mathcal{D}$}. $h_i$ and $o_i$ are purely graph characteristics that summarise the graphical properties of the node  \scalebox{0.9}{$X_i$}.  \scalebox{0.9}{$\widetilde{\Delta}^{\mathcal{N}}_i$} captures the interactions of the graph structure with the actual parameter values thereby encoding the data set dependent information. Hence, we theorize that the aforementioned three features are sufficient for constructing the weighting coefficients.

Also note that it is trivial to modify our proposed algorithm to allow estimation with Dirichlet priors (the most popular choice for a prior \cite{PGMbook}). Specifically, R.H.S of Eq. \eqref{eq:learning} changes to \scalebox{0.9}{$(C[x_i,x_{pai}]+\alpha_k)/(C[x_{pai}] +\sum_k(\alpha_k)$} where \scalebox{0.9}{${\alpha_k}$} are the parameters of the publicly known prior.

\textbf{Illustration of  Algorithm \ref{algo:main}:} Here we illustrate Alg.\hspace{0.1cm}\ref{algo:main} on the example DGM of Fig.\hspace{0.1cm}\ref{fig:DGM}. The parameters \scalebox{0.9}{$\Theta$} of this DGM are the CPDs  \scalebox{0.9}{$\{P[A],P[B],P[C|A,B],P[D|C],P[E|C]$}, \scalebox{0.9}{$P[F|D,E]\}$}. Thus we need to construct $6$  marginal tables over the attribute sets  \scalebox{0.9}{$\langle\{A\},\{B\},\{C,A,B\},\{D,C\},$} \scalebox{0.9}{$\{E,C\},\{F,D,E\}\rangle$}. First, we compute a preliminary estimate of the above parameters from a sampled dataset  \scalebox{0.9}{$\mathcal{D}'$} (Alg. \ref{algo:main}, Line 1-4). For this, we need to ensure mutual consistency between  \scalebox{0.9}{$\widetilde{M}_A$} and  \scalebox{0.9}{$\widetilde{M}_C$} on attribute  \scalebox{0.9}{$A$},  \scalebox{0.9}{$\widetilde{M}_B$} and  \scalebox{0.9}{$\widetilde{M}_C$} on attribute  \scalebox{0.9}{$B$} and so on (Proc. 1, Line 6). This is followed by the formulation of  \scalebox{0.9}{$\mathcal{F}_{\mathcal{D},\mathcal{G}}$}. (Alg. \ref{algo:main}, Line 5-12). Here we show the computation of  \scalebox{0.9}{$W[i]$} for node  \scalebox{0.9}{$A$}. For simplicity, we assume binary attributes. \scalebox{0.9}{$h_A=3$} and \scalebox{0.9}{$o_A=1$} trivially. For \scalebox{0.9}{$\Delta^{\mathcal{N}}_A$}, we need to compute the sensitivity of the parameters of \scalebox{0.9}{$A$} on the target probability of \scalebox{0.9}{$C$}, i.e., \scalebox{0.9}{$\pdv{P[C=0]}{\Theta[A=0]},\pdv{P[C=0]}{\Theta[A=1]}, \pdv{P[C=1]}{\Theta[A=1]}$},  and  \scalebox{0.9}{$\pdv{P[C=1]}{\Theta[A=0]}$} which is computed as \scalebox{0.9}{$ \pdv{\widetilde{P}[C=0]}{\Theta[A=0]}=$}\scalebox{0.9}{$\widehat{\Theta}[B=0]\widehat{\Theta}[C=0|A=0,B=0]+$}
\scalebox{0.9}{$\widehat{\Theta}[B=1]\widehat{\Theta}[C=0|A=0,B=1] $}. The rest of the partial derivatives are computed in a similar manner to give us  \begin{gather*}\scalebox{0.95}{ $ \Delta^{\mathcal{N}}_A=\frac{1}{4}\Big(\pdv{P[C=0]}{\Theta[A=0]}$}\scalebox{0.95}{ $+\pdv{P[C=0]}{\Theta[A=1]}+\pdv{P[C=1]}{\Theta[A=1]}+\pdv{P[C=1]}{\Theta[A=0]}\Big) $}\end{gather*} Finally, we use the solution of  \scalebox{0.9}{$\mathcal{F}_{\mathcal{D},\mathcal{G}}$} to compute the final parameters (Alg. \ref{algo:main}, Line 13-21).  

\subsection{Privacy Analysis}
\begin{thm} The proposed algorithm (Alg. \ref{algo:main}) for learning the parameters of a DGM with a publicly known graph structure over fully observed data is $\epsilon^B$-DP. \vspace{-0.2cm} \end{thm}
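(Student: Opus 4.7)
The plan is to decompose the algorithm into three privacy-consuming pieces, bound each, and then combine via sequential composition (Thm. \ref{theorem:seq}) and post-processing (Thm. \ref{theorem:post}).

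First I would analyze Procedure 1 (\textit{ComputeParameters}) when run on a dataset $\mathcal{D}$ with budget vector $\mathpzc{E}$. For each node $X_i$, the algorithm materializes the two contingency tables $T_i$ and $T_{pa_i}$; each cell is a counting query, so changing a single record perturbs at most one cell of $T_i$ and at most one cell of $T_{pa_i}$ by $1$ each. Adding $\mathrm{Lap}(2/\mathpzc{E}[i])$ noise therefore makes the release of each table $\mathpzc{E}[i]/2$-DP by the Laplace mechanism, and composing the two tables gives $\mathpzc{E}[i]$-DP per node. Summing across nodes by sequential composition, the noisy tables $\{\widetilde{T}_i,\widetilde{T}_{pa_i}\}_{i\in[n]}$ are released with total budget $\sum_i \mathpzc{E}[i]$. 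The subsequent normalization into $\widetilde{M}_i$, the \textit{MutualConsistency} step, and the derivation of $\widetilde{\Theta}[X_i|X_{pa_i}]$ are all deterministic functions of the noisy tables and the public structure $\mathcal{G}$, so by Thm. \ref{theorem:post} they incur no extra privacy cost. Hence \textit{ComputeParameters} on $\mathcal{D}$ with allocation $\mathpzc{E}$ is $\big(\sum_i \mathpzc{E}[i]\big)$-DP.

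Next I would bound Stage I. Here \textit{ComputeParameters} is invoked on the subsampled dataset $\mathcal{D}'$ with uniform allocation $\mathpzc{E}=[\epsilon/n,\ldots,\epsilon/n]$, and by the previous step this is $\epsilon$-DP with respect to $\mathcal{D}'$. Since $\mathcal{D}'$ is obtained by Bernoulli$(\beta)$ sampling of $\mathcal{D}$, the privacy amplification Lemma \ref{sampling} upgrades this to $\epsilon'$-DP with respect to $\mathcal{D}$ where $\epsilon'=\ln(1+\beta(e^{\epsilon}-1))$. The choice $\epsilon=\ln\!\big(\tfrac{e^{\epsilon^I}-1}{\beta}+1\big)$ in Line 1 is exactly calibrated so that $\beta(e^\epsilon-1)=e^{\epsilon^I}-1$, giving $\epsilon'=\epsilon^I$. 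Lines 5--12, which compute $\widetilde{\delta}_i$, $h_i$, $o_i$, $\widetilde{\Delta}^{\mathcal{N}}_i$, and assemble $\mathcal{F}_{\mathcal{G},\mathcal{D}}$, touch $\mathcal{D}$ only through $\widehat{\Theta},\widehat{T},\widehat{T}_{pa}$ and through the public graph $\mathcal{G}$, so they are post-processing and add nothing. Stage I is thus $\epsilon^I$-DP.

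For Stage II, the solution $\mathpzc{E}^*$ of the optimization is post-processing of Stage I's output. \textit{ComputeParameters} is then invoked on the raw dataset $\mathcal{D}$ with allocation $\mathpzc{E}^*$. By construction in Eq.~\eqref{sol}, $\sum_{i=1}^n \epsilon^*_i=\epsilon^B-\epsilon^I$, so the first step of my analysis yields that producing $\overline{\Theta},\overline{T},\overline{T}_{pa}$ is $(\epsilon^B-\epsilon^I)$-DP. Finally, Lines 15--20 output $\widetilde{\Theta}$ as a fixed convex combination of $\widehat{\Theta}$ and $\overline{\Theta}$ with coefficients that depend only on the (already released) quantities $\mathpzc{E}^*$ and constants; this is again post-processing by Thm. \ref{theorem:post}. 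Applying sequential composition (Thm. \ref{theorem:seq}) to Stage I and Stage II yields total privacy loss $\epsilon^I+(\epsilon^B-\epsilon^I)=\epsilon^B$, which establishes the claim.

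The only subtle point, and the step I expect to require the most care, is justifying that Stage I's output can be freely reused in Stage II without a fresh privacy charge: this relies on the observation that the optimization objective $\mathcal{F}_{\mathcal{G},\mathcal{D}}$ and the optimal allocation $\mathpzc{E}^*$ are measurable functions of the Stage I release together with the public graph $\mathcal{G}$, so feeding $\mathpzc{E}^*$ back into Procedure 1 does not expose $\mathcal{D}$ beyond what Stage II's independent Laplace noise already covers. Once that is observed, independence of the randomness between the two invocations makes sequential composition directly applicable and the remaining bookkeeping is routine.
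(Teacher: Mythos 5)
Your proposal is correct and follows essentially the same route as the paper's proof: Laplace noise on the counting tables composed per node, privacy amplification by subsampling calibrated in Line 1 to make Stage I exactly $\epsilon^I$-DP, post-processing for the optimization and the final weighted average, and sequential composition of the two stages to reach $\epsilon^B$. Your version is simply more explicit than the paper's about the per-node budget accounting, the inversion of the amplification formula, and the adaptive reuse of $\mathpzc{E}^*$ in Stage II, all of which the paper's terse argument takes for granted.
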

The proof of the above theorem follows from Thms. \ref{theorem:seq} and \ref{theorem:post} and is presented in Appx.~\ref{Privacy proof}. The DGM learned via our algorithm can be released publicly and any inference query run on it will still be $\epsilon^B$-DP (Thm. \ref{theorem:post}).

\section{Error Analysis for Inference Queries}
\begin{figure*}[ht]
    \begin{subfigure}[b]{0.5\columnwidth}
    \includegraphics[width=\columnwidth]{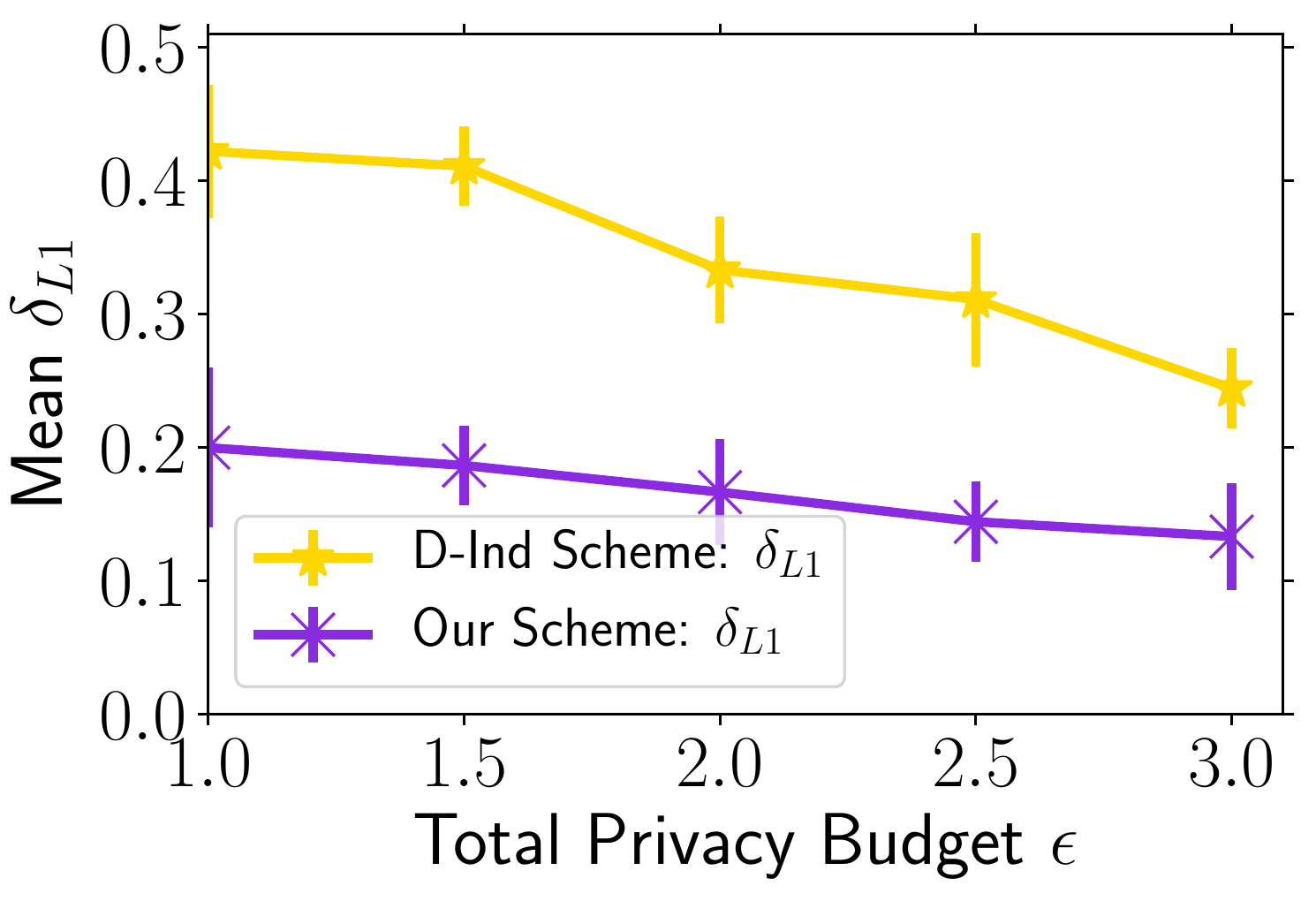}
        \caption{Sachs: Parameter $\delta_{L1}$}
        \label{fig:Para:Sachs}
    \end{subfigure}%%
    \begin{subfigure}[b]{0.5\columnwidth}
    \includegraphics[width=\columnwidth]{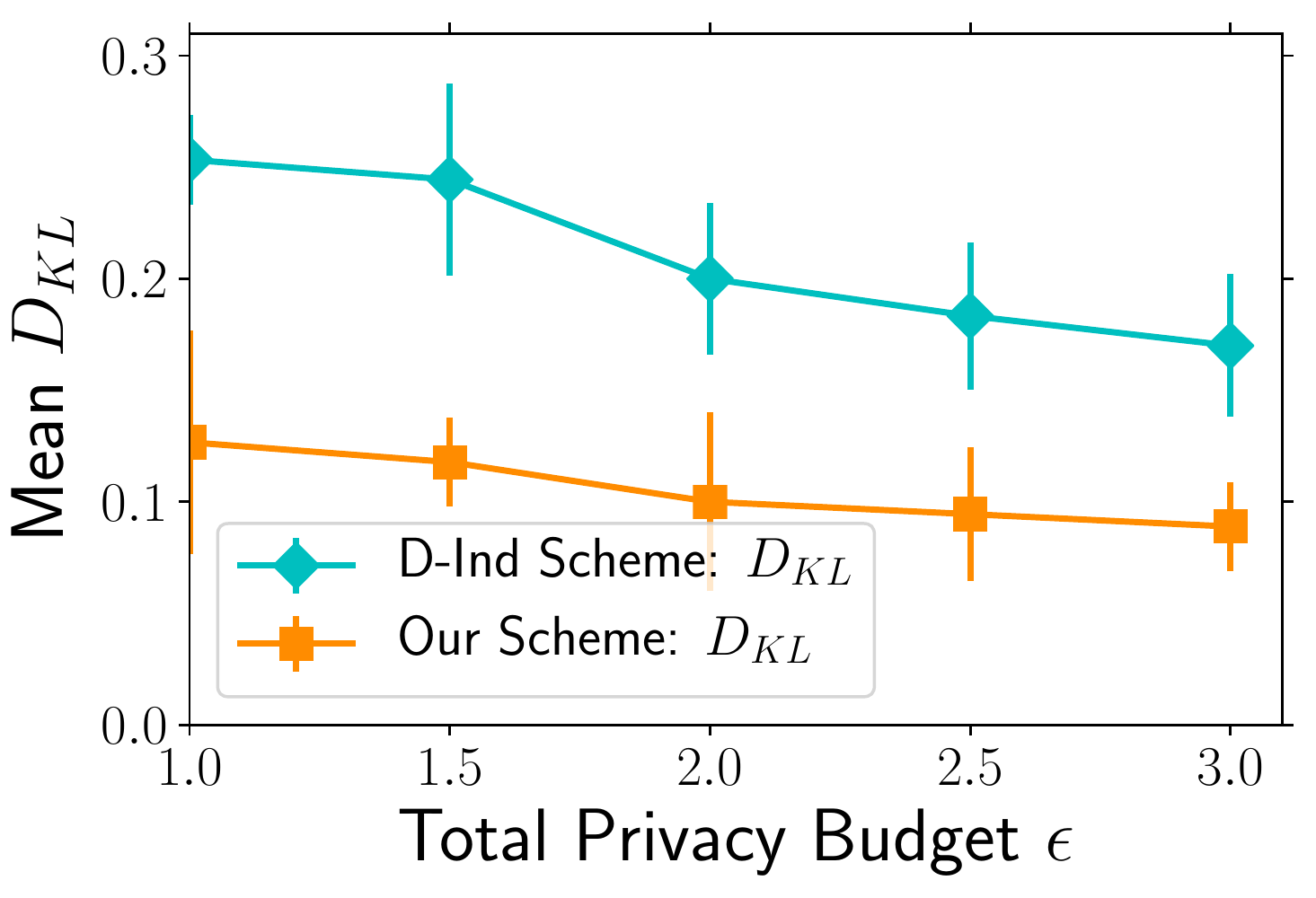}
        \caption{Sachs: Parameter $D_{KL}$}
        \label{fig:Inf:Sachs}\end{subfigure}%%
    \begin{subfigure}[b]{0.5\columnwidth}
    \includegraphics[width=\columnwidth]{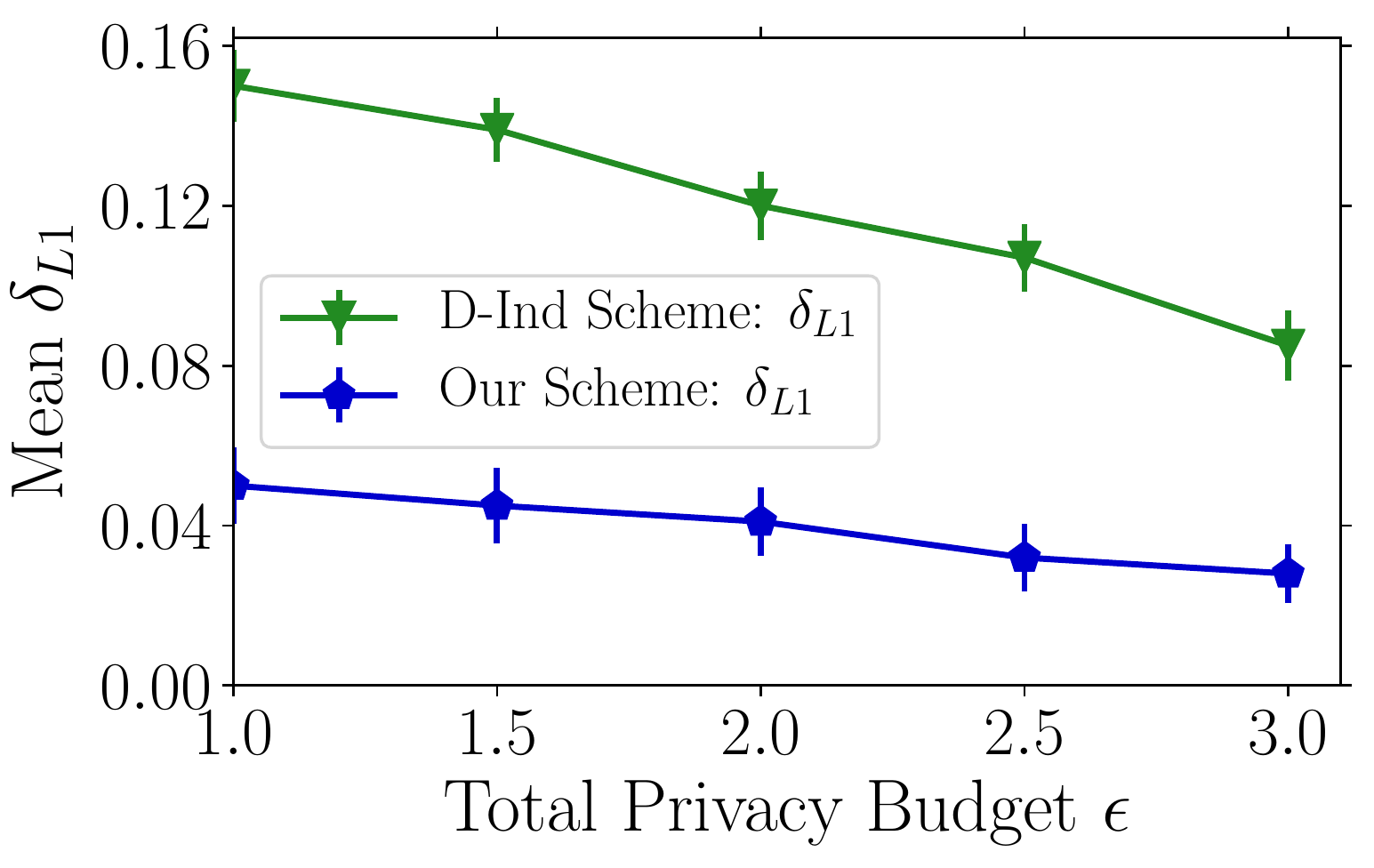}
        \caption{Sachs: $\delta_{L1}$ Inference}
        \label{fig:Para:Child}\end{subfigure}%%
      \begin{subfigure}[b]{0.5\columnwidth}
    \includegraphics[width=\columnwidth]{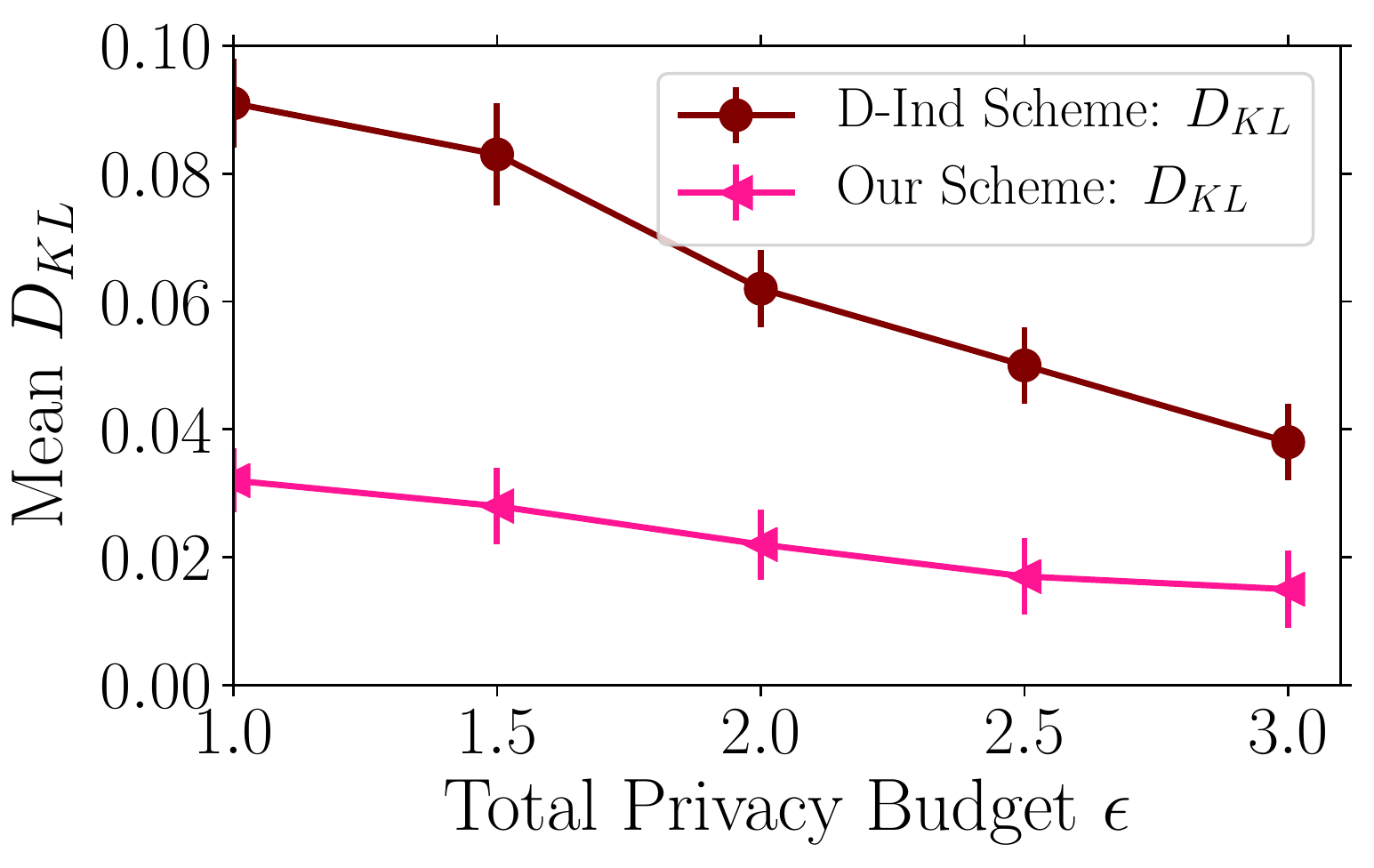}
        \caption{Sachs: Inference $D_{KL}$}
        \label{fig:Inf:Child}
    \end{subfigure}\\ 
    \begin{subfigure}[b]{0.5\columnwidth}
    \includegraphics[width=\columnwidth]{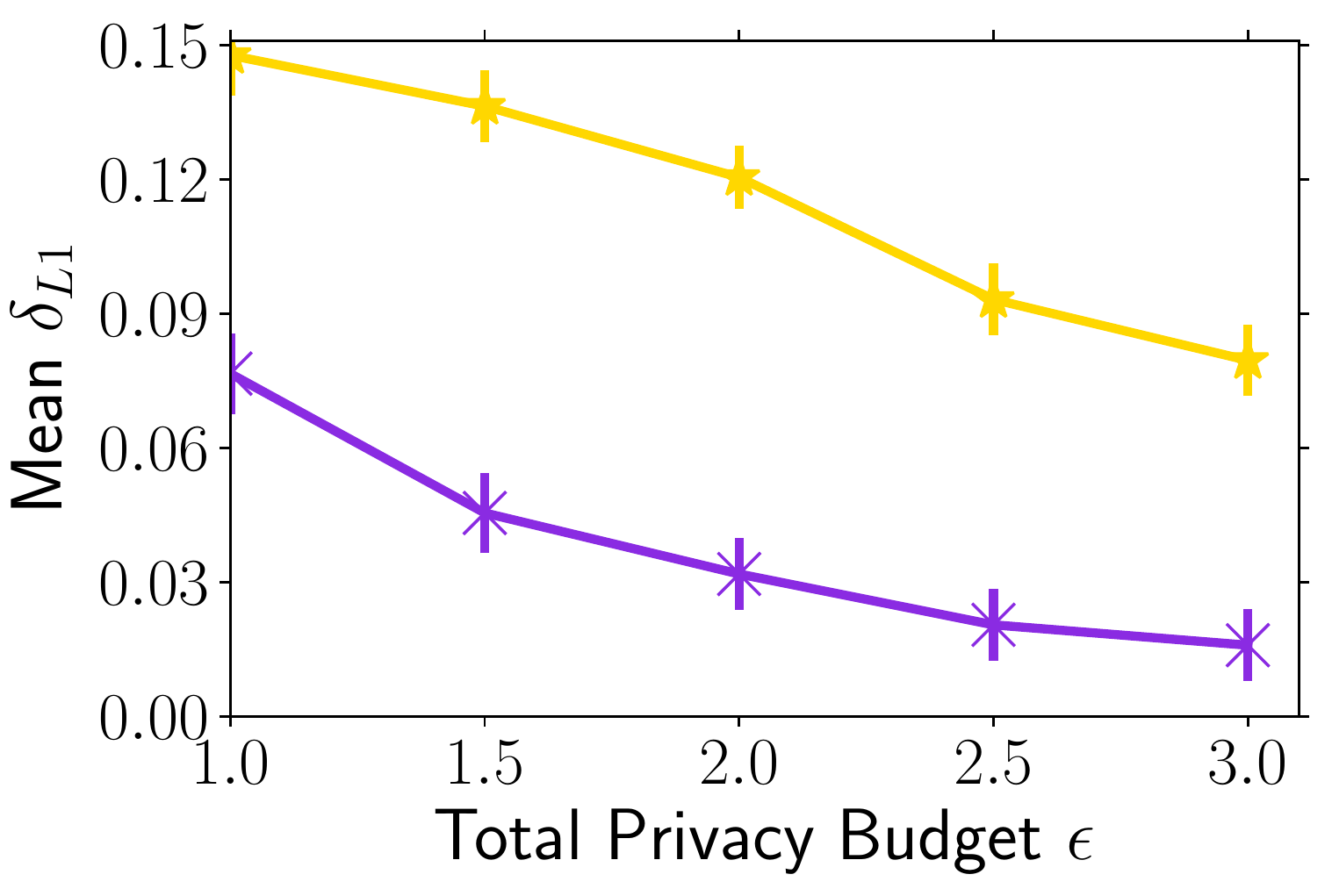}
        \caption{Child: Parameter $\delta_{L1}$}
        \label{fig:Para:Sachs}
    \end{subfigure}%%
    \begin{subfigure}[b]{0.5\columnwidth}
  \includegraphics[width=\columnwidth]{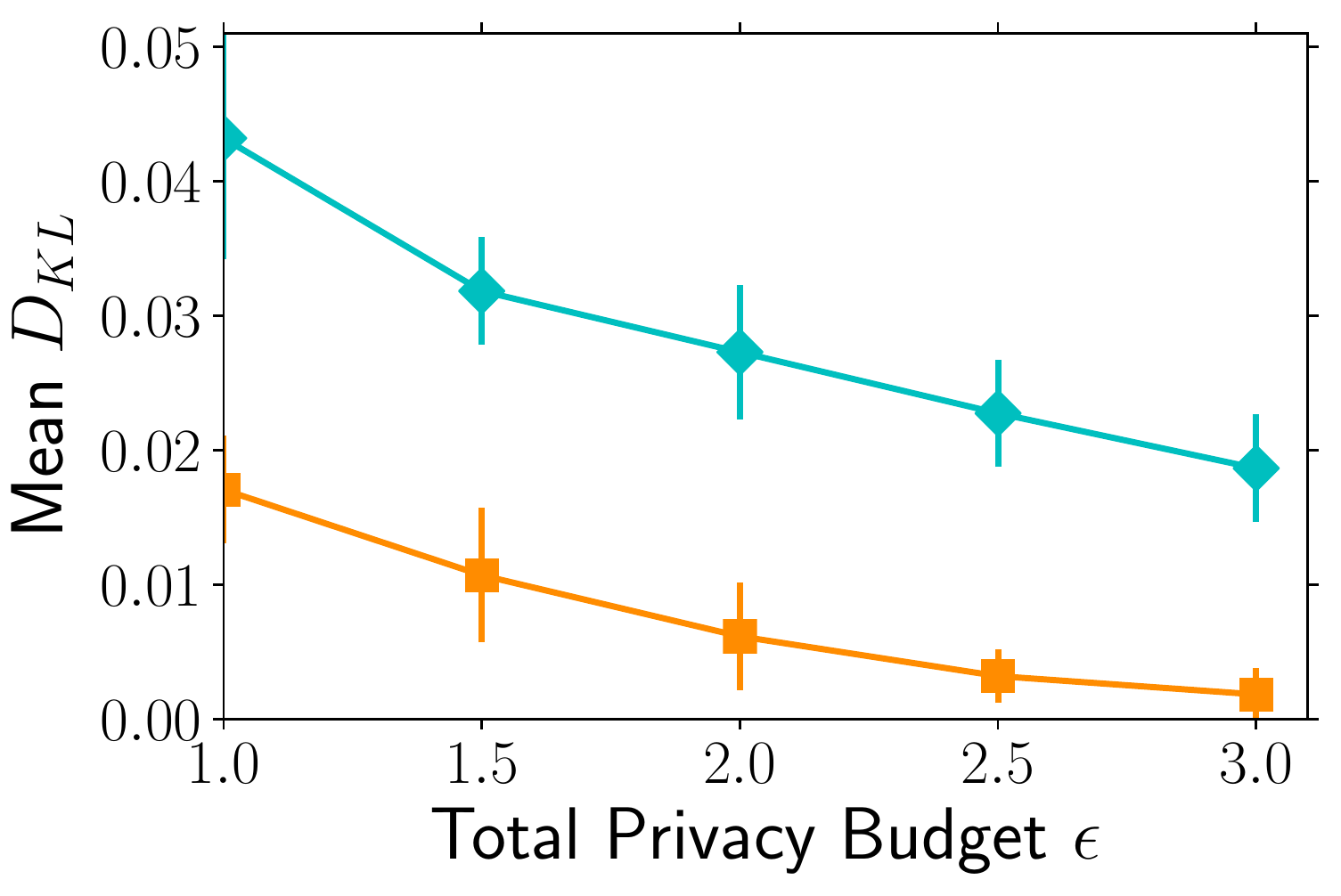}
        \caption{Child: Parameter $D_{KL}$}
        \label{fig:Inf:Sachs}\end{subfigure}%%
    \begin{subfigure}[b]{0.5\columnwidth}
   \includegraphics[width=\columnwidth]{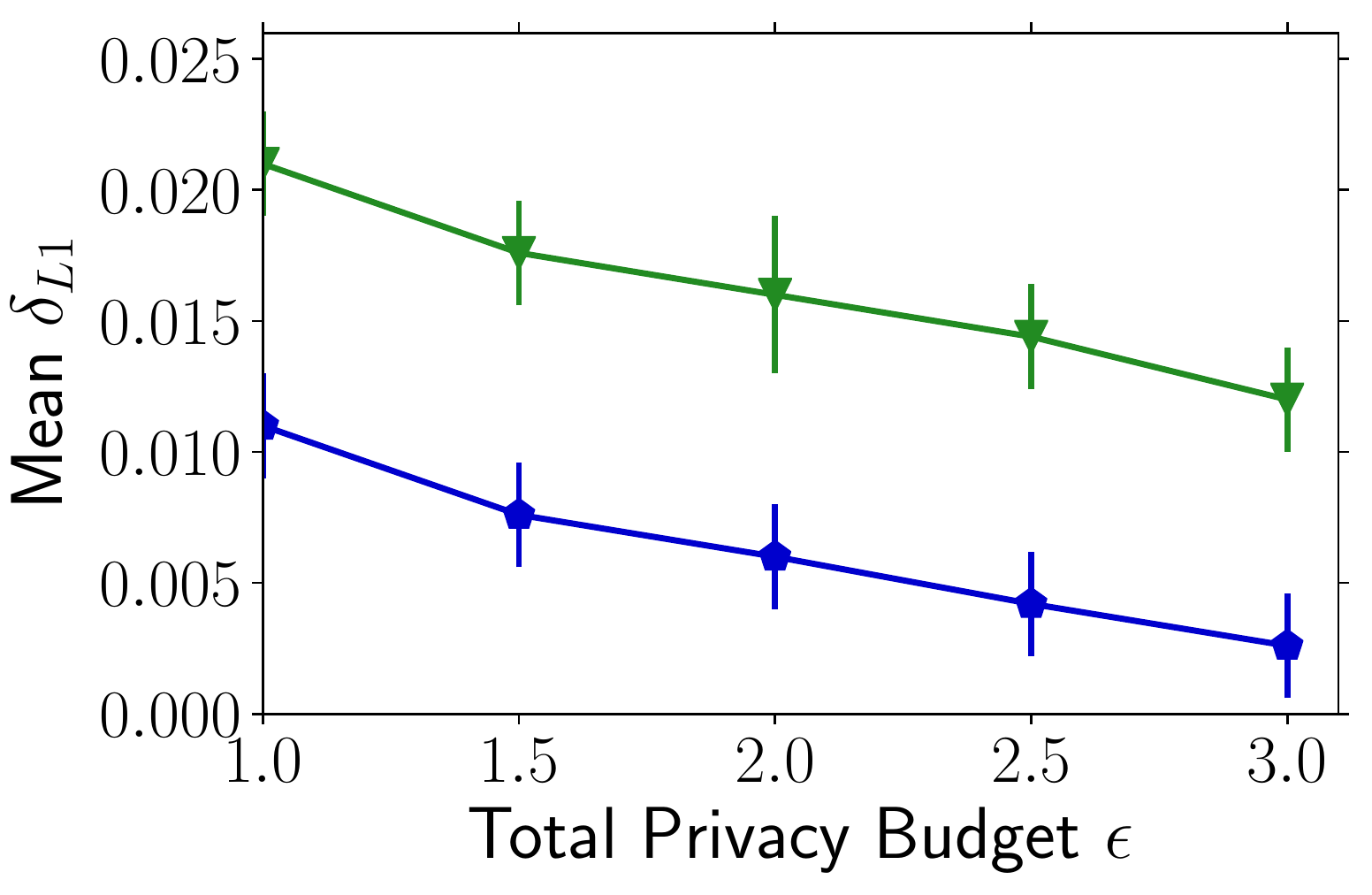}
        \caption{Child: $\delta_{L1}$ Inference}
        \label{fig:Para:Child}\end{subfigure}%%
      \begin{subfigure}[b]{0.5\columnwidth}
   \includegraphics[width=\columnwidth]{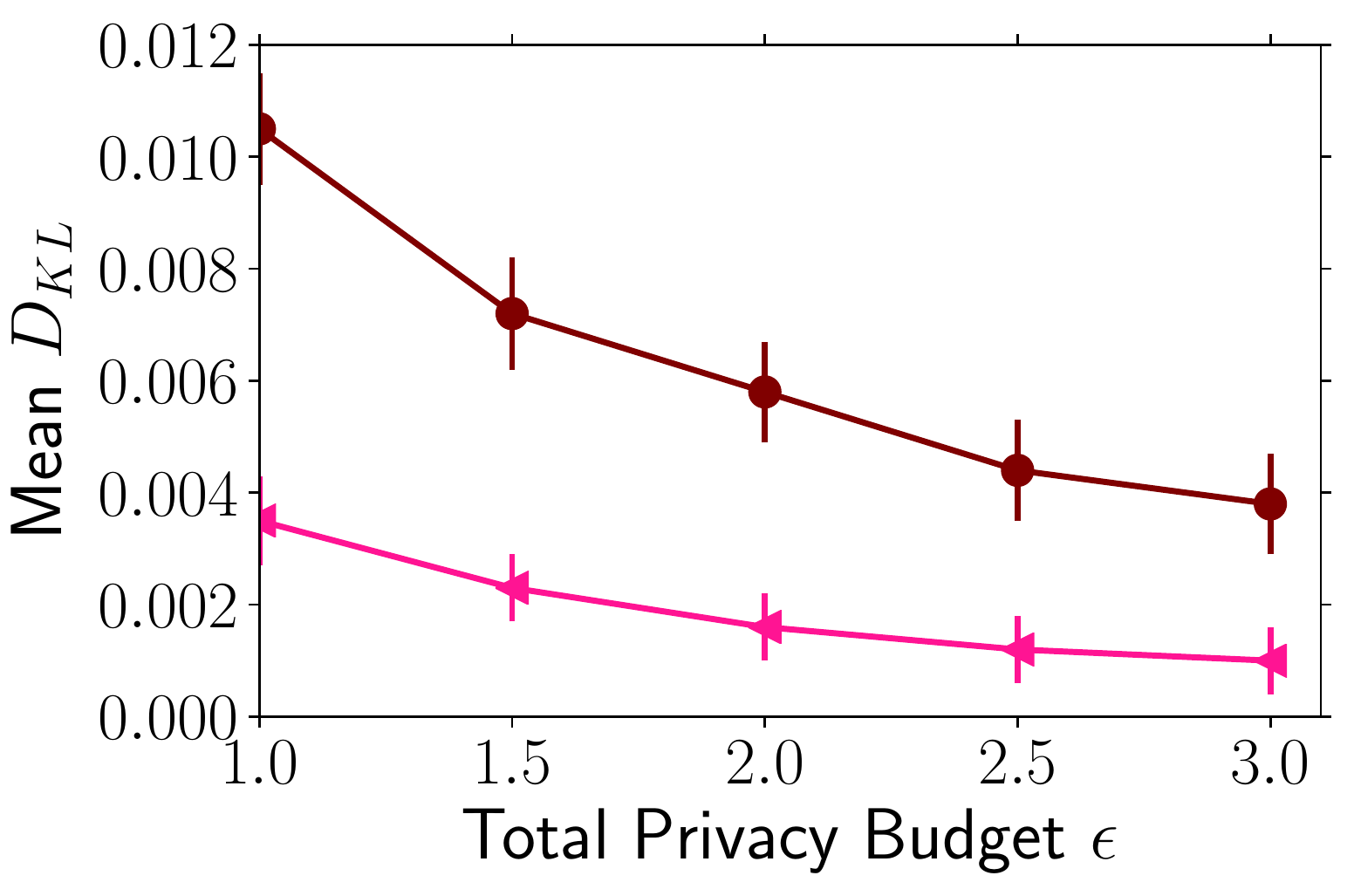}
        \caption{Child: Inference $D_{KL}$}
        \label{fig:Inf:Child}
    \end{subfigure}

    \caption{Parameter and Inference (Marginal and Conditional) Error Analysis: We observe that our scheme only requires a privacy budget of $\epsilon=1.0$ to yield the same utility that \textit{D-Ind} achieves with $\epsilon=3.0$.
    }
   \label{fig:accuracy}
    \end{figure*} As discussed in Sec. \ref{optimization}, our optimization objective minimizes a weighted sum of the parameter errors. To understand how the error propagates from the parameters to the inference queries, we present two general results bounding the error of a sum-product term of the VE algorithm, given the errors in the factors. 
\begin{thm}  \textbf{[Lower Bound]} For a DGM \scalebox{0.9}{$\mathcal{N}$}, for any sum-product term of the form \scalebox{0.9}{$\boldsymbol{\phi}_{\mathpzc{A}}=\sum_{x}\prod_{i=1}^t\phi_i, t \in \{2,\cdots,\eta\} $}  in the VE algorithm,  \begin{equation}\small\delta_{\phi_{\mathpzc{A}}} \geq \sqrt{\eta-1}\cdot\delta^{min}_{\phi_i[a,x]}(\phi^{min}_i[a,x])^{\eta-2}\label{eq:sumproduct}\end{equation} where \scalebox{0.9}{$X$} is the attribute being eliminated, \scalebox{0.9}{$\delta_{\phi}$} denotes the error in the factor \scalebox{0.9}{$\phi$}, \scalebox{0.9}{$Attr(\phi)$} is the set of attributes in \scalebox{0.9}{$\phi$}, \scalebox{0.9}{$ \mathpzc{A}=\bigcup_{\phi_i}\{Attr(\phi_i)\}/ X$},  \scalebox{0.9}{$x \in dom(X)$},  \scalebox{0.9}{$ a \in$} \scalebox{0.9}{$dom(\mathpzc{A})$}, \scalebox{0.9}{$\phi[a,x]$}  denotes that \scalebox{0.9}{$Value(Attr(\phi))\in \{a\} \wedge$}\scalebox{0.9}{$X=x$}, \scalebox{0.9}{$ \delta^{min}_{\phi_i[a,x]}=min_{i,a,x}\{\delta_{\phi_i[a,x]}\}$}, \scalebox{0.9}{$\phi^{min}_i[a,x]=min_{i,a,x}\{\phi_i[a,x]\}$} and \scalebox{0.9}{$ \eta=\max_{X_i}\{\text{in-degree}(X_i)+$} \scalebox{0.9}{$ \text{out-degree}(X_i)\}+1$}. \label{thm:lowerbound} \end{thm}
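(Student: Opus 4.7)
The plan is to propagate error through the two operations (product and sum) that compose a sum-product term, and then convert the local bound into a global one via the graph parameter $\eta$. Crucially, the Laplace noise injected at Procedure 1, Line 3 is independent across tables, and the post-processing in Procedure 1 preserves independence of the residual errors across the factors $\phi_i$; this is what legitimizes the standard error-propagation formulas.

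First, I would apply the product rule for independent quantities (first-order Taylor expansion, in the same spirit as Eq.~(6)): for each fixed $(a,x)$,
\begin{align*}
\delta_{\prod_{i=1}^t \phi_i[a,x]}^2 \;=\; \sum_{i=1}^{t} \Big(\prod_{j \neq i} \phi_j[a,x]\Big)^{\!2} \, \delta_{\phi_i[a,x]}^2 .
\end{align*}
Next, I would apply the sum rule over $x\in dom(X)$ (again using independence across the materialized noisy tables), obtaining
\begin{align*}
\delta_{\phi_{\mathpzc{A}}[a]}^2 \;=\; \sum_{x\in dom(X)} \sum_{i=1}^{t} \Big(\prod_{j \neq i} \phi_j[a,x]\Big)^{\!2} \, \delta_{\phi_i[a,x]}^2 .
\end{align*}

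Then I would lower bound each summand by replacing every factor value with $\phi^{min}_i[a,x]$ and every per-factor error with $\delta^{min}_{\phi_i[a,x]}$. Each inner product has $t-1$ surviving factors, each at least $\phi^{min}_i[a,x]$, so every summand is at least $(\phi^{min}_i[a,x])^{2(t-1)}(\delta^{min}_{\phi_i[a,x]})^2$. There are $t\cdot|dom(X)|$ such summands; with $|dom(X)|\ge 1$ this yields
\begin{align*}
\delta_{\phi_{\mathpzc{A}}[a]} \;\geq\; \sqrt{t}\,\cdot\,(\phi^{min}_i[a,x])^{t-1}\,\delta^{min}_{\phi_i[a,x]} .
\end{align*}

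The final step, and the main obstacle, is relating the number of factors $t$ in a VE sum-product step to the structural parameter $\eta$. The idea is that when eliminating a variable $X_i$, the factors involving $X_i$ are (initially) its CPD together with the CPDs of its children, which is bounded by $\mathrm{out\text{-}deg}(X_i)+1$; under the recursive induced-graph argument for VE, the ``effective'' number of noisy factors that multiply together at any elimination step can be bounded by $\eta - 1$, where the extra $-1$ absorbs the factor currently being formed versus those being combined into it. Plugging $t = \eta - 1$ into the previous display gives exactly $\delta_{\phi_{\mathpzc{A}}} \geq \sqrt{\eta-1}\cdot \delta^{min}_{\phi_i[a,x]}(\phi^{min}_i[a,x])^{\eta-2}$. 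A secondary technicality I would have to address is that the product-rule variance identity is only exact to first order in the $\delta_i$; because the theorem is a lower bound in the small-noise regime (the same regime in which Eq.~(6) is derived), I would cite this standard approximation rather than prove an exact inequality.
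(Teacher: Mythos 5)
Your overall strategy (propagate error through the product, then the sum, then bound $t$ by a graph quantity) is in the right spirit, but there is a concrete quantitative gap that prevents it from proving the stated bound. Your quadrature argument yields, for a step with $t$ factors, $\delta_{\boldsymbol{\phi}_{\mathpzc{A}}}\geq \sqrt{t}\,(\phi^{min}_i[a,x])^{t-1}\,\delta^{min}_{\phi_i[a,x]}$, and you then set $t=\eta-1$. But the theorem is asserted for all $t\in\{2,\ldots,\eta\}$, and the number of factors containing the variable being eliminated is bounded by $\text{in-degree}+\text{out-degree}+1=\eta$, not $\eta-1$; your ``the extra $-1$ absorbs the factor currently being formed'' remark is not a proof and contradicts the theorem's own hypothesis. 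At $t=\eta$ your bound reads $\sqrt{\eta}\,(\phi^{min})^{\eta-1}\delta^{min}$, and since $\phi^{min}\leq 1/2$ one has $\sqrt{\eta}\,(\phi^{min})^{\eta-1}<\sqrt{\eta-1}\,(\phi^{min})^{\eta-2}$, so your intermediate inequality is strictly weaker than the claimed one and cannot imply it. The missing idea is the one the paper's proof is built around: it isolates one distinguished factor $\phi_1$ (for a specific DGM instance with $Attr(\phi_1)=X$) and shows that the terms $\Upsilon_{\phi_1[x]}=(\phi_1[x]-\tilde{\phi}_1[x])\sum_a\prod_{i\geq 2}\phi_i$ telescope to $\phi_1[x]-\tilde{\phi}_1[x]$ because the remaining factors are normalized CPDs (Lemma \ref{lemma:factor}), and hence sum to zero over $x$ since both $\phi_1$ and $\tilde{\phi}_1$ are normalized. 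This cancellation removes one factor's error contribution entirely, which is exactly what produces $\sqrt{t-1}$ and the exponent $t-2$ (rather than your $\sqrt{t}$ and $t-1$); only with that off-by-one improvement does the monotonicity argument ($\sqrt{t-1}c^{t-2}$ is decreasing in $t$ for $c\leq 1/2$) let one replace $t$ by $\eta$ and obtain the stated bound.

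Two secondary issues: the paper aggregates the error over entries linearly, $\delta_{\boldsymbol{\phi}_{\mathpzc{A}}}=\sum_a\delta_{\boldsymbol{\phi}_{\mathpzc{A}}[a]}$, and uses the deterministic normalization identities rather than quadrature across $x$; and your premise that the residual errors remain independent after Procedure 1 is doubtful, since the mutual-consistency step explicitly couples the noisy tables. Neither of these is fatal to a lower-bound argument on its own, but the off-by-one gap above is.
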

\vspace{0.1cm}
\begin{thm} \textbf{[Upper Bound]} For a DGM \scalebox{0.9}{$\mathcal{N}$}, for any sum-product term of the form \scalebox{0.9}{$ \boldsymbol{\phi}_\mathpzc{A}=\sum_x\prod_{i=1}^t\phi_i$, $ t \in \{2,\cdots,n\}$} in the  VE algorithm with the optimal elimination order, \begin{equation}\small\delta_{\phi_\mathpzc{A}} \leq 2\cdot \eta\cdot d^{\kappa}\delta^{max}_{\phi_i[a,x]}\vspace{-0.1cm} \end{equation}  where \scalebox{0.9}{$X$} is the attribute being eliminated, \scalebox{0.9}{$\delta_{\phi}$} denotes the error in the factor \scalebox{0.9}{$\phi$}, \scalebox{0.9}{$\kappa$} is the treewidth of \scalebox{0.9}{$\mathcal{G}$}, $d$ is the maximum  domain size of an attribute, \scalebox{0.9}{$Attr(\phi)$} is the set of attributes in \scalebox{0.9}{$\phi$},  \scalebox{0.9}{$\small \mathpzc{A}=\bigcup_{i}^t \{Attr(\phi_i)\}/X$}, \scalebox{0.9}{$a \thinspace\thinspace \in \thinspace\thinspace dom(\mathpzc{A}), \thinspace\thinspace\thinspace\thinspace x  
\in\thinspace\thinspace dom(X),\thinspace \thinspace\thinspace \thinspace$}\scalebox{0.9}{$ \thinspace \thinspace\thinspace\thinspace \thinspace\thinspace\phi[a,x]$} denotes that \scalebox{0.9}{$Value(Attr(\phi))\in \{a\} \wedge X=x$},  \scalebox{0.9}{$\delta^{max}_{\phi_i[a,x]}=$}
\scalebox{0.9}{$\max_{i,a,x}\{\delta_{\phi_i[a,x]}\}$ } and \scalebox{0.9}{$ \eta=\underset{X_i}{\max}\{\text{in-degree}(X_i) + \text{out-degree}(X_i)\}+1$}. \label{thm:upperbound} \end{thm}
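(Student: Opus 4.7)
The plan is to carry out a perturbation analysis of a single sum-product step and then combine it with two structural facts about the DGM: (i) the number of factors entering any one elimination step is bounded by $\eta$, and (ii) under an optimal elimination order, the scope of any intermediate factor is controlled by the treewidth $\kappa$. These two ingredients will furnish the two terms in the bound $2\eta d^{\kappa}\delta^{max}_{\phi_i[a,x]}$.

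First, I would bound the error in the product at a single assignment. Fix $a\in dom(\mathpzc{A})$ and $x\in dom(X)$ and expand
\[
\prod_{i=1}^{t}\bigl(\phi_i[a,x]+\delta_{\phi_i[a,x]}\bigr)\;-\;\prod_{i=1}^{t}\phi_i[a,x]
\]
into a sum of $2^{t}-1$ correction terms indexed by nonempty subsets of $[t]$. Every factor $\phi_i$ appearing in VE --- whether an initial CPD or an intermediate sum-product produced along the way --- lies in $[0,1]$, so each correction term is at most $(\delta^{max}_{\phi_i[a,x]})^{|S|}\le\delta^{max}_{\phi_i[a,x]}$, while the first-order contribution is bounded by $t\cdot\delta^{max}_{\phi_i[a,x]}$. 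Collecting these and absorbing the higher-order terms into a constant gives $\bigl|\delta_{\prod_i \phi_i[a,x]}\bigr|\le 2t\cdot\delta^{max}_{\phi_i[a,x]}$.

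Next, I would propagate this bound through the summation over the eliminated variable. By the triangle inequality,
\[
\bigl|\delta_{\phi_{\mathpzc{A}}[a]}\bigr|\;\le\;\sum_{x\in dom(X)}\bigl|\delta_{\prod_i \phi_i[a,x]}\bigr|\;\le\;2\,t\,|dom(X)|\,\delta^{max}_{\phi_i[a,x]}.
\]
Now I would invoke the two structural facts. The degree bound on the DGM yields $t\le\eta$, because any single variable participates in at most its in-degree plus out-degree plus one of the factors involved in its elimination, and this invariant persists through VE. The treewidth bound, together with the domain bound $|dom(X)|\le d$, controls the scope of the surviving factor: under the optimal elimination ordering, $|\mathpzc{A}|\le\kappa$, so aggregating the per-entry estimate across $\phi_{\mathpzc{A}}$ introduces the $d^{\kappa}$ factor. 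Combining the two pieces produces $\delta_{\phi_{\mathpzc{A}}}\le 2\eta\, d^{\kappa}\,\delta^{max}_{\phi_i[a,x]}$.

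The hard part will be the treewidth step: one must argue carefully that, under an optimal elimination order, every intermediate factor produced by VE has scope at most $\kappa$, appealing to the standard equivalence between variable elimination and message passing on a tree decomposition of width $\kappa$. A secondary subtlety arises in the product step, where higher-order perturbation terms must be shown to fold cleanly into a single multiplicative constant independent of $t$; this depends crucially on every VE factor being bounded above by $1$, which holds for CPDs but has to be maintained for intermediate sum-products (they represent marginal or conditional probabilities of their scope and are therefore in $[0,1]$). Matching the $\sqrt{\cdot}$-shaped lower bound of Theorem~\ref{thm:lowerbound} in the opposite direction also justifies the presence of the linear-in-$\eta$ rather than $\sqrt{\eta}$ dependence in the upper bound, since the triangle-inequality step is the only source of looseness.
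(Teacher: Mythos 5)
Your high-level architecture matches the paper's: perturb the product entrywise, use $t\le\eta$ to bound the number of factors in one elimination step, and use the treewidth of the optimal order to control how much error the surviving factor $\boldsymbol{\phi}_{\mathpzc{A}}$ can accumulate. The gap is exactly at the step you flag as a ``secondary subtlety.'' Expanding $\prod_i(\phi_i[a,x]+\delta_{\phi_i[a,x]})-\prod_i\phi_i[a,x]$ into $2^t-1$ subset-indexed correction terms and bounding the term for a subset $S$ by $(\delta^{max}_{\phi_i[a,x]})^{|S|}$ gives a total of $(1+\delta^{max}_{\phi_i[a,x]})^t-1$, which is \emph{not} $2t\,\delta^{max}_{\phi_i[a,x]}$ in general: the exponentially many higher-order terms do not fold into a constant merely because every true factor is at most $1$; you would need an extra smallness assumption such as $t\,\delta^{max}_{\phi_i[a,x]}=O(1)$, which the theorem does not make. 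The paper closes this hole differently (Lemma \ref{lem:help3}): an induction on $t$ that uses the fact that each \emph{noisy} factor $\tilde{\phi}_i[a,x]$ is still a valid probability, hence at most $1$, so the recursion telescopes to the clean linear bound $\prod_i\tilde{\phi}_i[a,x]\le\prod_i\phi_i[a,x]+\sum_i\delta_{\phi_i[a,x]}$ with no factor of $2$ and no smallness assumption. The $2$ in the final bound comes from elsewhere: the paper isolates the error of $\phi_1$ as a separate term $\delta_{\phi_1}$, bounds the remaining contribution by $\eta\, d^{\kappa}\delta^{max}_{\phi_i[a,x]}$ using normalization of the CPD $\phi_1$ (Lemma \ref{lemma:factor}), and then bounds $\delta_{\phi_1}$ by the same quantity.

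A second, smaller bookkeeping problem: your per-entry bound already carries a factor $|dom(X)|\le d$, and you then multiply by $|dom(\mathpzc{A})|\le d^{|\mathpzc{A}|}$ with $|\mathpzc{A}|\le\kappa$, which yields $2\eta\, d^{\kappa+1}\delta^{max}_{\phi_i[a,x]}$ rather than $2\eta\, d^{\kappa}\delta^{max}_{\phi_i[a,x]}$. The paper's invocation of Lemma \ref{lemma:treewidth} bounds the size of the \emph{whole} scope $\mathpzc{A}\cup\{X\}$ by $\kappa$, so that $|dom(\mathpzc{A})|\cdot|dom(X)|\le d^{\kappa}$; with that convention (and the product lemma above) your entry-counting route does recover the stated constant. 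So the plan is salvageable, but as written the central quantitative step is unproved and the exponent is off by one.
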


For proving the lower bound, we introduce a specific instance of the DGM based on Lemma \ref{lemma:factor} (Appx. \ref{app:DGM}). For the upper bound, with the optimal elimination order of the VE algorithm, the maximum error has an exponential dependency on the treewidth $\kappa$. For example, for the DGM in Fig. \ref{fig:DGM}, the maximum error is bounded by \scalebox{0.9}{$2\cdot \eta\cdot d^2\cdot \delta^{max}_{\phi_i[a,x]}$}. This is very intuitive as even the complexity of the VE algorithm has the same dependency on \scalebox{0.9}{$\kappa$}. The answer of a marginal inference query is the factor generated from the last sum-product term. Also, since the initial set of \scalebox{0.9}{$\phi_i$}s for the first sum-product term computation are the actual parameters of the DGM, all the errors in the subsequent intermediate factors and hence the inference query itself can be bounded by functions of parameter errors using the above theorems.

\section{Evaluation}\label{sec:evaluation}
\begin{table*}[htb]
\begin{minipage}{0.65\linewidth}
\caption{MAP Inference Error Analysis}\label{tab:MAP}
\centering
\scalebox{0.72}{
\begin{tabular}{|c  |c |c| c| c| c| c| c |c| }
\hline
\multirow{4}{*}{$\boldsymbol{\epsilon}$} &  \multicolumn{8}{c|}{$\boldsymbol{\rho}$} \\ 
 \cline{2-9}& \multicolumn{2}{c|}{\textbf{Asia}} &  \multicolumn{2}{c|}{\textbf{Sachs}} & \multicolumn{2}{c|}{\textbf{Child}} & \multicolumn{2}{c|}{\textbf{Alarm}}   \\ \cline{2-9}& \textbf{D-Ind} & \textbf{Our } & \textbf{D-Ind } & \textbf{Our} & \textbf{D-Ind} & \textbf{Our }& \textbf{D-Ind } &\textbf{ Our }\\ & \textbf{Scheme} & \textbf{Scheme } & \textbf{Scheme} & \textbf{Scheme} & \textbf{Scheme} & \textbf{Scheme}& \textbf{Scheme} &\textbf{Scheme }\\ % inserts table %heading
\hline
\hline
1 &0.88& 1& 0.81 & 0.86 & 0.79 & 0.93 & 0.89& 0.95 \\\hline
1.5 & 0.93 & 1  &0.87 & 0.93 & 0.83 & 0.95 & 0.92 &0.98 \\ \hline%197 the communication rounds
2&  1 & 1 & 0.92&0.98&0.89&0.97 &0.95&1  \\\hline 2.5  &1& 1 &0.96& 1 & 1 &1&1&1 \\\hline 3 & 1 & 1 &1& 1& 1 & 1 &1 & 1\\
\hline
\end{tabular}}\end{minipage}%
\begin{minipage}{0.35\linewidth}
\caption{Error Bound Analysis}\label{tab:bound}
\centering
\scalebox{0.8}{%\vspace{-0.45cm}
\begin{tabular}{|c  |c |c| c| c| }
\hline
 &  \multicolumn{4}{c|}{$\boldsymbol{\mu}$ } \\ 
 \cline{2-5}& \multicolumn{1}{c|}{\textbf{Asia}} &  \multicolumn{1}{c|}{\textbf{Sachs}} & \multicolumn{1}{c|}{\textbf{Child}} & \multicolumn{1}{c|}{\textbf{Alarm}}   \\ % inserts table %heading
\hline
\hline \text{D-Ind} & \multirow{2}{*}{0.008}&\multirow{2}{*}{0.065}&\multirow{2}{*}{0.0035} & \multirow{2}{*}{0.0046}    \\ \text{Scheme} & & & & \\\hline\text{Our}
&\multirow{2}{*}{0.0035}& \multirow{2}{*}{0.04}&\multirow{2}{*}{0.0014}& \multirow{2}{*}{0.0012}\\Scheme & & & &\\\hline 
\end{tabular}} \end{minipage}

\end{table*} 

We evaluate the utility of the DGM learned via our algorithm by studying the following three questions: 

(1) Does our scheme lead to low error estimation of the DGM parameters? 

(2) Does our scheme result in low error inference query responses? 

(3) How does our scheme fare against data-independent approaches?

\textbf{Evaluation Highlights:} First, focusing on the parameters of the DGM, we find that our scheme achieves low L1 error (at most $0.2$ for $\epsilon=1$) and low KL divergence (at most $0.13$ for $\epsilon=1$) across all test data sets. Second, we find that for marginal and conditional inferences, our scheme provides low  L1 error and KL divergence (both around $0.05$ at max for $\epsilon=1$) for all test data sets. Our scheme also provides high accuracy for MAP queries ($93.5\%$ accuracy for $\epsilon=1$ averaged over all test data sets). Finally, our scheme achieves strictly better utility than the data-independent baseline; our scheme only requires a privacy budget of $\epsilon=1.0$ to yield the same utility that the  data-independent baseline achieves with $\epsilon=3.0$.
\subsection{Experimental Setup}
\textbf{Data sets:} We evaluate our proposed scheme on four benchmark DGMs \cite{BN} namely \textit{Asia, Sachs, Child} and \textit{Alarm}. For all four DGMs, the evaluation is carried out on corresponding synthetic data sets \cite{D1,D2} with 10K records each.  These data sets  are standard benchmarks for evaluating DGM
inferencing and are derived from real-world use cases \cite{BN}. Due to space constraints, we present the results for only two of the data sets (Sachs and Child) here; the rest are presented in Appx.~\ref{evaluation:cntd}. The details of Sachs and Child are as follows:

\textbf{Sachs:} Number of nodes -- 11; Number of arcs -- 17; Number of parameters -- 178

\textbf{Child:} Number of nodes -- 20; Number of arcs -- 25; Number of parameters -- 230

\textbf{Baseline:} We compare our results with a data-independent baseline (denoted by \textit{D-Ind}) which corresponds to executing Proc. 1 on the entire input data set $\mathcal{D}$ with privacy budget array \scalebox{0.9}{$\mathpzc{E}=[\frac{\epsilon^B}{n},\cdots,\frac{\epsilon^B}{n}]$}. \textit{D-Ind} is in fact based on \cite{Bayes4} (see Sec. \ref{sec:related_work} for details) which is the most recent work that explicitly deals with parameter estimation for DGMs. \textit{D-Ind} is also identical (it has an additional consistency step) to an algorithm used in PrivBayes \cite{privbayes} which uses DGMs to generate high-dimensional data.
Other candidate baselines from the differential privacy literature include MWEM \cite{MWEM}, HDMM \cite{HDMM}, Ding et al's work \cite{Datacube} and PriView \cite{PriView}. However, even with binary attributes, MWEM, \cite{Datacube} and HDMM have run time \scalebox{0.9}{$O(2^n),O(2^n)$} and \scalebox{0.9}{$O(4^n)$} respectively for marginal queries in our setting. Our scheme uses a sub protocol (\scalebox{0.9}{$MutualConsistency()$}) from PriView. However, %PriView requires manual dataset dependent selection of views which limits usability. Moreover,
PriView works best for answering all $n\choose k$ $k$-way marginals (for $k=2 \mbox
{ or } 3$). In contrast, our setting computes only $n$ marginals of varying size (often greater than 3). Hence PriView gives lower accuracy than our baseline as the privacy budget is wasted in computing irrelevant marginals. For example, we have empirically verified that for dataset Sachs, the mean error of the parameters for our baseline is an order smaller than that of PriView.

\textbf{Metrics:} For conditional and marginal inference queries we compute the following two metrics: L1-error, \scalebox{0.9}{$\delta_{L1}=\sum_{x,y}|P[x|y]-\tilde{P}[x|y]|$}and KL divergence, \scalebox{0.9}{$D_{KL}$ = $\sum_{x,y}\tilde{P}[x|y]\ln\Big(\frac{\tilde{P}[x|y]}{P[x|y]}\Big)$} where  \scalebox{0.9}{$P[x|y]$} is either a true CPD of the DGM (parameter) or a true marginal/conditional inference query response  and \scalebox{0.9}{$\tilde{P}[x|y]$} is the corresponding noisy estimate obtained from our scheme. For answering MAP inferences, we compute $\rho=\frac{\text{\# Correct answers}}{\text{\#Total runs}}$. 

\textbf{Setup:} We evaluate each data set on 20 random inference queries (10 marginal inference, 10 conditional inference) and report mean error over 10 runs.  For MAP queries, we run 20 random queries and report the mean result over 10 runs. The queries are of the form \scalebox{0.9}{$P[X|Y]$} where attribute subsets \scalebox{0.9}{$X$} and \scalebox{0.9}{$Y$} are varied from being singletons up to the full attribute set. We compare our results with a standard data-independent baseline (denoted by \textit{D-Ind}) \cite{Bayes4,privbayes} which corresponds to executing Procedure 1 on the entire input data set $\mathcal{D}$ and the privacy budget array \scalebox{0.9}{$\mathpzc{E}=[\frac{\epsilon^B}{n},\cdots,\frac{\epsilon^B}{n}]$}. % gives better results than \cite{Bayes4} because - (1) \textit{D-Ind} adds noise from $Lap(\frac{1}{\epsilon})$  instead of $Lap(\frac{2|I|}{\epsilon})$, $I$ being a set of nodes index set (2) we use the state-of-the-art consistency mechanism \cite{Consistency} as opposed to noise addition in Fourier domain in \cite{Bayes4}. 
All the experiments have been implemented in Python and we set  \scalebox{0.9}{$e^{I}=0.1\cdot e^B, \beta=0.1$}. 
%Due to space constraints here we present the results for only two of the data sets (Sachs and Child) and the rest are presented in  Appendix~\ref{evaluation:cntd}.
\subsection{Experimental Results}
Fig. \ref{fig:accuracy} shows the mean \scalebox{0.9}{$\delta_{L1}$} and \scalebox{0.9}{$D_{KL}$} for the noisy parameters, and the marginal and conditional inferences for the data sets Sachs and Child. The main observation is that our scheme achieves strictly lower error than that of \textit{D-Ind}. Specifically, our scheme only requires a privacy budget of $\epsilon=1.0$ to yield the same utility that \textit{D-Ind} achieves with $\epsilon=3.0$. In most practical scenarios, the value of $\epsilon$ typically does not exceed $3$ ~\cite{epsilon}. In Table \ref{tab:MAP}, we present our experimental results for MAP queries. We see that our scheme achieves higher accuracy than \textit{D-Ind}. For example, our scheme provides an accuracy of at least $86\%$ while \textit{D-Ind} achieves $81\%$ accuracy for $\epsilon=1$. Finally, given a marginal inference query \scalebox{0.9}{$\mathcal{Q}$}, we compute the scale normalized error in \scalebox{0.9}{$\mathcal{Q}$} as $\mu=\frac{\delta_{L1}[\mathcal{Q}]-LB}{UB-LB}$ where \scalebox{0.9}{$UB$} and \scalebox{0.9}{$LB$} are the upper and lower bound computed using Thm. \ref{thm:upperbound} and Thm. \ref{thm:lowerbound}\footnote{$UB$ and $LB$ are computed separately for each run of the experiment from their respective empirical parameter errors.} respectively. Clearly, the lower the value of $\mu$, the closer it is to the lower bound and vice versa. We report the mean value of $\mu$ for 20 random inference queries (marginal and conditional) for $\epsilon=1$ in Table \ref{tab:bound}. We observe that the errors are closer to their respective lower bounds. This is more prominent for the errors obtained from our data-dependent scheme than those of \textit{D-Ind}.%A common observation from both Figure 1 and Table 1 is that, as per expectation, the error drops with increase in total privacy budget. 

%\begin{figure}[htbp]\centerline{\includegraphics[width=\columnwidth, ]{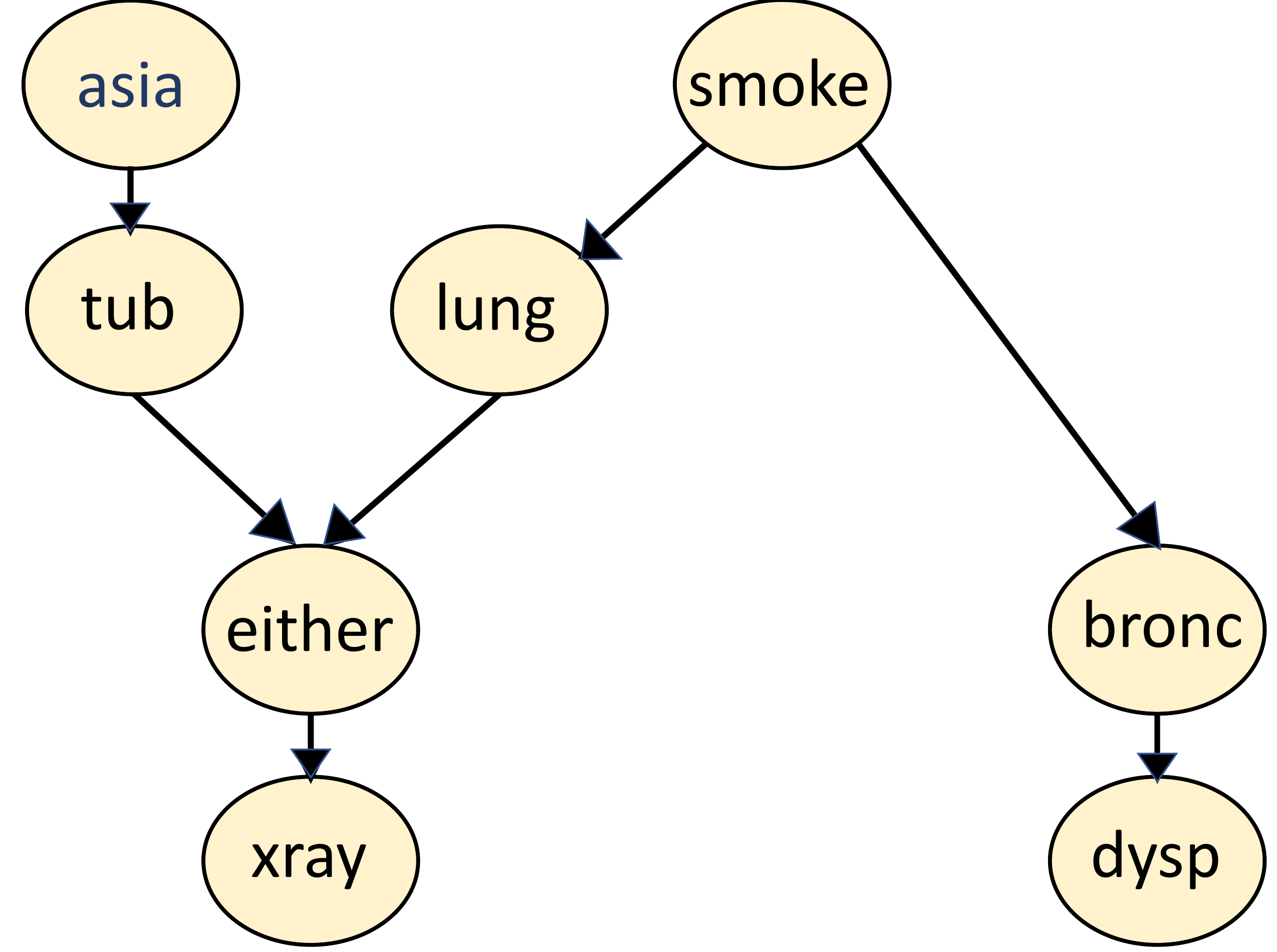}}\caption{Graph Structure of DGM Asia}}\label{fig:Asia}\end{figure}
Thus, we conclude that the non-uniform budget  \begin{wrapfigure}{r}{0.47\columnwidth}\hspace{-0.2cm}\centering\includegraphics[width=0.47\columnwidth]{}\caption{Graph Structure of DGM Asia}\vspace{-0.3cm}
\label{fig:Asia}
\end{wrapfigure}  allocation in our data-dependent scheme gives better utility than uniform budget allocation. For example, for DGM Asia (Fig. ~\ref{fig:Asia}), our scheme allocates the privacy budget in the following order: "asia"$>$"smoke"$>$"bronc" $>$"lung"$>$"tub"$>$"xray"$>$ "either"$>$"dysp". As expected, nodes with greater height and out-degree are assigned higher budget than leaf nodes.

%\vspace{-0.2cm}
\section{Related Work}\label{sec:related_work}
Here we briefly discuss the related literature (see Appx.~\ref{app:relatedWork} for a detailed review). There has been a fair amount of work in differentially private Bayesian inferencing  \cite{Bayes1,Bayes2,Bayes3,Bayes4,Bayes5,Bayes6, Bayes7,Bayes9,Data_PAC,Bayes10,Bayes11,Bayes12,Bayes13,Bayes14,Bayes15, MRF}.  All of the above works have different setting/goal from our paper. Specifically, in \cite{Bayes4} the authors propose algorithms for private Bayesian inference on graphical models. However, their proposed solution does not add data-dependent noise. In fact, their proposed algorithms (Alg. 1 and Alg. 2 in \cite{Bayes4}) are essentially the same in spirit as our baseline solution, \textit{D-Ind}. Moreover, some proposals from \cite{Bayes4} can be combined with \textit{D-Ind}. For example, to ensure mutual consistency, \cite{Bayes4} adds Laplace noise in the Fourier domain  while \textit{D-Ind} uses techniques of \cite{Consistency}. \textit{D-Ind} is also identical (it has an additional consistency step) to an algorithm used in \cite{privbayes} which uses DGMs to generate high-dimensional data. Data-dependent noise addition is a popular technique in differential privacy 
\cite{hist,hist2,AHP,DPcube, MWEM,DAWA,DataAshwin}.
% \cite{sensitivity1,Sensitivity2,Sensitivity3,Sensitivity4,Sensitivity5,Sensitivity6}

\section{Conclusion}
In this paper, we have proposed an algorithm for differentially private learning of the parameters of a DGM with a publicly known graph structure over fully observed data. The noise added is customized to the private input data set as well as the public graph structure of the DGM. To the best of our knowledge, we propose the first explicit data-dependent privacy budget allocation mechanism in the context of DGMs. Our solution achieves strictly higher utility than that of a standard data-independent approach; our solution requires roughly $3\times$ smaller privacy budget to achieve the same or higher utility.
\bibliographystyle{icml2020}
\bibliography{references.bib}

\begin{thebibliography}{65}
\providecommand{\natexlab}[1]{#1}
\providecommand{\url}[1]{\texttt{#1}}
\expandafter\ifx\csname urlstyle\endcsname\relax
  \providecommand{\doi}[1]{doi: #1}\else
  \providecommand{\doi}{doi: \begingroup \urlstyle{rm}\Url}\fi

\bibitem[BN()]{BN}
Bn repository.
\newblock URL \url{http://www.bnlearn.com/bnrepository/}.

\bibitem[D2()]{D2}
Data repository.
\newblock URL \url{https://www.ccd.pitt.edu/wiki/index.php/Data_Repository}.

\bibitem[err()]{error}
Propagation of errors.
\newblock URL
  \url{https://www.lockhaven.edu/\~dsimanek/scenario/errorman/propagat.htm}.

\bibitem[D1(2014)]{D1}
Data.
\newblock 2014.
\newblock URL \url{https://github.com/albertofranzin/data-thesis}.

\bibitem[Abadi et~al.(2016)Abadi, Chu, Goodfellow, McMahan, Mironov, Talwar,
  and Zhang]{SGD1}
Abadi, M., Chu, A., Goodfellow, I., McMahan, H.~B., Mironov, I., Talwar, K.,
  and Zhang, L.
\newblock Deep learning with differential privacy.
\newblock In \emph{Proceedings of the 2016 ACM SIGSAC Conference on Computer
  and Communications Security}, CCS '16, pp.\  308--318, New York, NY, USA,
  2016. ACM.
\newblock ISBN 978-1-4503-4139-4.
\newblock \doi{10.1145/2976749.2978318}.
\newblock URL \url{http://doi.acm.org/10.1145/2976749.2978318}.

\bibitem[{Acs} et~al.(2012){Acs}, {Castelluccia}, and {Chen}]{hist}
{Acs}, G., {Castelluccia}, C., and {Chen}, R.
\newblock Differentially private histogram publishing through lossy
  compression.
\newblock In \emph{2012 IEEE 12th International Conference on Data Mining},
  pp.\  1--10, Dec 2012.
\newblock \doi{10.1109/ICDM.2012.80}.

\bibitem[Agarwal et~al.(2018)Agarwal, Suresh, Yu, Kumar, and McMahan]{SGD3}
Agarwal, N., Suresh, A.~T., Yu, F. X.~X., Kumar, S., and McMahan, B.
\newblock cpsgd: Communication-efficient and differentially-private distributed
  sgd.
\newblock In Bengio, S., Wallach, H., Larochelle, H., Grauman, K.,
  Cesa-Bianchi, N., and Garnett, R. (eds.), \emph{Advances in Neural
  Information Processing Systems 31}, pp.\  7564--7575. Curran Associates,
  Inc., 2018.
\newblock URL
  \url{http://papers.nips.cc/paper/7984-cpsgd-communication-efficient-and-differentially-private-distributed-sgd.pdf}.

\bibitem[Barak et~al.(2007)Barak, Chaudhuri, Dwork, Kale, McSherry, and
  Talwar]{con2}
Barak, B., Chaudhuri, K., Dwork, C., Kale, S., McSherry, F., and Talwar, K.
\newblock Privacy, accuracy, and consistency too: A holistic solution to
  contingency table release.
\newblock In \emph{Proceedings of the Twenty-sixth ACM SIGMOD-SIGACT-SIGART
  Symposium on Principles of Database Systems}, PODS '07, pp.\  273--282, New
  York, NY, USA, 2007. ACM.
\newblock ISBN 978-1-59593-685-1.
\newblock \doi{10.1145/1265530.1265569}.
\newblock URL \url{http://doi.acm.org/10.1145/1265530.1265569}.

\bibitem[Barthe et~al.(2016)Barthe, Farina, Gaboardi, Arias, Gordon, Hsu, and
  Strub]{Bayes14}
Barthe, G., Farina, G.~P., Gaboardi, M., Arias, E. J.~G., Gordon, A., Hsu, J.,
  and Strub, P.-Y.
\newblock Differentially private bayesian programming.
\newblock In \emph{Proceedings of the 2016 ACM SIGSAC Conference on Computer
  and Communications Security}, CCS '16, pp.\  68--79, New York, NY, USA, 2016.
  ACM.
\newblock ISBN 978-1-4503-4139-4.
\newblock \doi{10.1145/2976749.2978371}.
\newblock URL \url{http://doi.acm.org/10.1145/2976749.2978371}.

\bibitem[Bernstein \& Sheldon(2018)Bernstein and Sheldon]{Bayes6}
Bernstein, G. and Sheldon, D.~R.
\newblock Differentially private bayesian inference for exponential families.
\newblock In Bengio, S., Wallach, H., Larochelle, H., Grauman, K.,
  Cesa-Bianchi, N., and Garnett, R. (eds.), \emph{Advances in Neural
  Information Processing Systems 31}, pp.\  2919--2929. Curran Associates,
  Inc., 2018.
\newblock URL
  \url{http://papers.nips.cc/paper/7556-differentially-private-bayesian-inference-for-exponential-families.pdf}.

\bibitem[Bernstein et~al.(2017)Bernstein, McKenna, Sun, Sheldon, Hay, and
  Miklau]{Bayes15}
Bernstein, G., McKenna, R., Sun, T., Sheldon, D., Hay, M., and Miklau, G.
\newblock Differentially private learning of undirected graphical models using
  collective graphical models.
\newblock In \emph{Proceedings of the 34th International Conference on Machine
  Learning - Volume 70}, ICML'17, pp.\  478--487. JMLR.org, 2017.
\newblock URL \url{http://dl.acm.org/citation.cfm?id=3305381.3305431}.

\bibitem[Chaudhuri et~al.(2011)Chaudhuri, Monteleoni, and Sarwate]{dperm2}
Chaudhuri, K., Monteleoni, C., and Sarwate, A.~D.
\newblock Differentially private empirical risk minimization.
\newblock \emph{J. Mach. Learn. Res.}, 12:\penalty0 1069--1109, July 2011.
\newblock ISSN 1532-4435.
\newblock URL \url{http://dl.acm.org/citation.cfm?id=1953048.2021036}.

\bibitem[Cormode et~al.(2012{\natexlab{a}})Cormode, Procopiuc, Srivastava,
  Shen, and Yu]{H2}
Cormode, G., Procopiuc, C., Srivastava, D., Shen, E., and Yu, T.
\newblock Differentially private spatial decompositions.
\newblock In \emph{Proceedings of the 2012 IEEE 28th International Conference
  on Data Engineering}, ICDE '12, pp.\  20--31, Washington, DC, USA,
  2012{\natexlab{a}}. IEEE Computer Society.
\newblock ISBN 978-0-7695-4747-3.
\newblock \doi{10.1109/ICDE.2012.16}.
\newblock URL \url{http://dx.doi.org/10.1109/ICDE.2012.16}.

\bibitem[Cormode et~al.(2012{\natexlab{b}})Cormode, Procopiuc, Srivastava,
  Shen, and Yu]{data1}
Cormode, G., Procopiuc, C., Srivastava, D., Shen, E., and Yu, T.
\newblock Differentially private spatial decompositions.
\newblock In \emph{Proceedings of the 2012 IEEE 28th International Conference
  on Data Engineering}, ICDE '12, pp.\  20--31, Washington, DC, USA,
  2012{\natexlab{b}}. IEEE Computer Society.
\newblock ISBN 978-0-7695-4747-3.
\newblock \doi{10.1109/ICDE.2012.16}.
\newblock URL \url{http://dx.doi.org/10.1109/ICDE.2012.16}.

\bibitem[Diakonikolas et~al.(2015)Diakonikolas, Hardt, and Schmidt]{Ilias}
Diakonikolas, I., Hardt, M., and Schmidt, L.
\newblock Differentially private learning of structured discrete distributions.
\newblock In Cortes, C., Lawrence, N.~D., Lee, D.~D., Sugiyama, M., and
  Garnett, R. (eds.), \emph{Advances in Neural Information Processing Systems
  28}, pp.\  2566--2574. Curran Associates, Inc., 2015.
\newblock URL
  \url{http://papers.nips.cc/paper/5713-differentially-private-learning-
  of-structured-discrete-distributions.pdf}.

\bibitem[Dimitrakakis et~al.(2014)Dimitrakakis, Nelson, Mitrokotsa, and
  Rubinstein]{Bayes1}
Dimitrakakis, C., Nelson, B., Mitrokotsa, A., and Rubinstein, B. I.~P.
\newblock Robust and private bayesian inference.
\newblock In Auer, P., Clark, A., Zeugmann, T., and Zilles, S. (eds.),
  \emph{Algorithmic Learning Theory}, pp.\  291--305, Cham, 2014. Springer
  International Publishing.
\newblock ISBN 978-3-319-11662-4.

\bibitem[Ding et~al.(2011{\natexlab{a}})Ding, Winslett, Han, and Li]{Datacube}
Ding, B., Winslett, M., Han, J., and Li, Z.
\newblock Differentially private data cubes: Optimizing noise sources and
  consistency.
\newblock In \emph{Proceedings of the 2011 ACM SIGMOD International Conference
  on Management of Data}, SIGMOD ’11, pp.\  217–228, New York, NY, USA,
  2011{\natexlab{a}}. Association for Computing Machinery.
\newblock ISBN 9781450306614.
\newblock \doi{10.1145/1989323.1989347}.
\newblock URL \url{https://doi.org/10.1145/1989323.1989347}.

\bibitem[Ding et~al.(2011{\natexlab{b}})Ding, Winslett, Han, and Li]{con3}
Ding, B., Winslett, M., Han, J., and Li, Z.
\newblock Differentially private data cubes: Optimizing noise sources and
  consistency.
\newblock pp.\  217--228, 01 2011{\natexlab{b}}.
\newblock \doi{10.1145/1989323.1989347}.

\bibitem[Dwork \& Roth(2014)Dwork and Roth]{dwork}
Dwork, C. and Roth, A.
\newblock \emph{The Algorithmic Foundations of Differential Privacy}, volume~9.
\newblock August 2014.

\bibitem[Dwork et~al.(2010)Dwork, Rothblum, and Vadhan]{dataTheory2}
Dwork, C., Rothblum, G.~N., and Vadhan, S.
\newblock Boosting and differential privacy.
\newblock In \emph{Proceedings of the 2010 IEEE 51st Annual Symposium on
  Foundations of Computer Science}, FOCS '10, pp.\  51--60, Washington, DC,
  USA, 2010. IEEE Computer Society.
\newblock ISBN 978-0-7695-4244-7.
\newblock \doi{10.1109/FOCS.2010.12}.
\newblock URL \url{http://dx.doi.org/10.1109/FOCS.2010.12}.

\bibitem[Dziugaite \& Roy(2018)Dziugaite and Roy]{Data_PAC}
Dziugaite, G.~K. and Roy, D.~M.
\newblock Data-dependent pac-bayes priors via differential privacy.
\newblock In Bengio, S., Wallach, H., Larochelle, H., Grauman, K.,
  Cesa-Bianchi, N., and Garnett, R. (eds.), \emph{Advances in Neural
  Information Processing Systems 31}, pp.\  8430--8441. Curran Associates,
  Inc., 2018.
\newblock URL
  \url{http://papers.nips.cc/paper/8063-data-dependent-pac-bayes-priors-via
  -differential-privacy.pdf}.

\bibitem[Foulds et~al.(2016)Foulds, Geumlek, Welling, and Chaudhuri]{Bayes3}
Foulds, J., Geumlek, J., Welling, M., and Chaudhuri, K.
\newblock On the theory and practice of privacy-preserving bayesian data
  analysis.
\newblock In \emph{Proceedings of the Thirty-Second Conference on Uncertainty
  in Artificial Intelligence}, UAI'16, pp.\  192--201, Arlington, Virginia,
  United States, 2016. AUAI Press.
\newblock ISBN 978-0-9966431-1-5.
\newblock URL \url{http://dl.acm.org/citation.cfm?id=3020948.3020969}.

\bibitem[Geumlek et~al.(2017)Geumlek, Song, and Chaudhuri]{Bayes5}
Geumlek, J., Song, S., and Chaudhuri, K.
\newblock R\'{e}nyi differential privacy mechanisms for posterior sampling.
\newblock In \emph{Proceedings of the 31st International Conference on Neural
  Information Processing Systems}, NIPS’17, pp.\  5295–5304, Red Hook, NY,
  USA, 2017. Curran Associates Inc.
\newblock ISBN 9781510860964.

\bibitem[Gupta et~al.(2011)Gupta, Hardt, Roth, and Ullman]{con4}
Gupta, A., Hardt, M., Roth, A., and Ullman, J.
\newblock Privately releasing conjunctions and the statistical query barrier.
\newblock In \emph{Proceedings of the Forty-third Annual ACM Symposium on
  Theory of Computing}, STOC '11, pp.\  803--812, New York, NY, USA, 2011. ACM.
\newblock ISBN 978-1-4503-0691-1.
\newblock \doi{10.1145/1993636.1993742}.
\newblock URL \url{http://doi.acm.org/10.1145/1993636.1993742}.

\bibitem[Gupta et~al.(2012)Gupta, Roth, and Ullman]{dataTheory1}
Gupta, A., Roth, A., and Ullman, J.
\newblock Iterative constructions and private data release.
\newblock In \emph{Proceedings of the 9th International Conference on Theory of
  Cryptography}, TCC'12, pp.\  339--356, Berlin, Heidelberg, 2012.
  Springer-Verlag.
\newblock ISBN 978-3-642-28913-2.
\newblock \doi{10.1007/978-3-642-28914-9_19}.
\newblock URL \url{http://dx.doi.org/10.1007/978-3-642-28914-9_19}.

\bibitem[Hardt et~al.(2012)Hardt, Ligett, and McSherry]{MWEM}
Hardt, M., Ligett, K., and McSherry, F.
\newblock A simple and practical algorithm for differentially private data
  release.
\newblock In \emph{Proceedings of the 25th International Conference on Neural
  Information Processing Systems - Volume 2}, NIPS'12, pp.\  2339--2347, USA,
  2012. Curran Associates Inc.
\newblock URL \url{http://dl.acm.org/citation.cfm?id=2999325.2999396}.

\bibitem[Hay et~al.(2010{\natexlab{a}})Hay, Rastogi, Miklau, and
  Suciu]{Consistency}
Hay, M., Rastogi, V., Miklau, G., and Suciu, D.
\newblock Boosting the accuracy of differentially private histograms through
  consistency.
\newblock \emph{Proceedings of the VLDB Endowment}, 3\penalty0 (1-2):\penalty0
  1021–1032, Sep 2010{\natexlab{a}}.
\newblock ISSN 2150-8097.
\newblock \doi{10.14778/1920841.1920970}.
\newblock URL \url{http://dx.doi.org/10.14778/1920841.1920970}.

\bibitem[Hay et~al.(2010{\natexlab{b}})Hay, Rastogi, Miklau, and Suciu]{H1}
Hay, M., Rastogi, V., Miklau, G., and Suciu, D.
\newblock Boosting the accuracy of differentially private histograms through
  consistency.
\newblock \emph{Proc. VLDB Endow.}, 3\penalty0 (1-2):\penalty0 1021--1032,
  September 2010{\natexlab{b}}.
\newblock ISSN 2150-8097.
\newblock \doi{10.14778/1920841.1920970}.
\newblock URL \url{http://dx.doi.org/10.14778/1920841.1920970}.

\bibitem[Heikkil\"{a} et~al.(2017)Heikkil\"{a}, Lagerspetz, Kaski, Shimizu,
  Tarkoma, and Honkela]{Bayes7}
Heikkil\"{a}, M., Lagerspetz, E., Kaski, S., Shimizu, K., Tarkoma, S., and
  Honkela, A.
\newblock Differentially private bayesian learning on distributed data.
\newblock In Guyon, I., Luxburg, U.~V., Bengio, S., Wallach, H., Fergus, R.,
  Vishwanathan, S., and Garnett, R. (eds.), \emph{Advances in Neural
  Information Processing Systems 30}, pp.\  3226--3235. Curran Associates,
  Inc., 2017.
\newblock URL
  \url{http://papers.nips.cc/paper/6915-differentially-private-bayesian-learning-on-distributed-data.pdf}.

\bibitem[Hsu et~al.(2014)Hsu, Gaboardi, Haeberlen, Khanna, Narayan, Pierce, and
  Roth]{epsilon}
Hsu, J., Gaboardi, M., Haeberlen, A., Khanna, S., Narayan, A., Pierce, B.~C.,
  and Roth, A.
\newblock Differential privacy: An economic method for choosing epsilon.
\newblock \emph{2014 IEEE 27th Computer Security Foundations Symposium}, Jul
  2014.
\newblock \doi{10.1109/csf.2014.35}.
\newblock URL \url{http://dx.doi.org/10.1109/CSF.2014.35}.

\bibitem[J{\"a}lk{\"o} et~al.(2016)J{\"a}lk{\"o}, Honkela, and Dikmen]{Bayes13}
J{\"a}lk{\"o}, J., Honkela, A., and Dikmen, O.
\newblock Differentially private variational inference for non-conjugate
  models.
\newblock \emph{CoRR}, abs/1610.08749, 2016.

\bibitem[Kasiviswanathan et~al.(2011)Kasiviswanathan, Lee, Nissim,
  Raskhodnikova, and Smith]{PAClearning}
Kasiviswanathan, S.~P., Lee, H.~K., Nissim, K., Raskhodnikova, S., and Smith,
  A.
\newblock What can we learn privately?
\newblock \emph{SIAM J. Comput.}, 40\penalty0 (3):\penalty0 793--826, June
  2011.
\newblock ISSN 0097-5397.
\newblock \doi{10.1137/090756090}.
\newblock URL \url{http://dx.doi.org/10.1137/090756090}.

\bibitem[Koller \& Friedman(2009)Koller and Friedman]{PGMbook}
Koller, D. and Friedman, N.
\newblock \emph{Probabilistic Graphical Models: Principles and Techniques -
  Adaptive Computation and Machine Learning}.
\newblock The MIT Press, 2009.
\newblock ISBN 0262013193, 9780262013192.

\bibitem[Kotsogiannis et~al.(2017{\natexlab{a}})Kotsogiannis, Machanavajjhala,
  Hay, and Miklau]{DataAshwin}
Kotsogiannis, I., Machanavajjhala, A., Hay, M., and Miklau, G.
\newblock Pythia: Data dependent differentially private algorithm selection.
\newblock In \emph{Proceedings of the 2017 ACM International Conference on
  Management of Data}, SIGMOD '17, pp.\  1323--1337, New York, NY, USA,
  2017{\natexlab{a}}. ACM.
\newblock ISBN 978-1-4503-4197-4.
\newblock \doi{10.1145/3035918.3035945}.
\newblock URL \url{http://doi.acm.org/10.1145/3035918.3035945}.

\bibitem[Kotsogiannis et~al.(2017{\natexlab{b}})Kotsogiannis, Machanavajjhala,
  Hay, and Miklau]{Kotsogiannis:2017:PDD:3035918.3035945}
Kotsogiannis, I., Machanavajjhala, A., Hay, M., and Miklau, G.
\newblock Pythia: Data dependent differentially private algorithm selection.
\newblock In \emph{Proceedings of the 2017 ACM International Conference on
  Management of Data}, SIGMOD '17, pp.\  1323--1337, New York, NY, USA,
  2017{\natexlab{b}}. ACM.
\newblock ISBN 978-1-4503-4197-4.
\newblock \doi{10.1145/3035918.3035945}.
\newblock URL \url{http://doi.acm.org/10.1145/3035918.3035945}.

\bibitem[{Laskey}(1995)]{sensitivity1}
{Laskey}, K.~B.
\newblock Sensitivity analysis for probability assessments in bayesian
  networks.
\newblock \emph{IEEE Transactions on Systems, Man, and Cybernetics},
  25\penalty0 (6):\penalty0 901--909, June 1995.
\newblock ISSN 0018-9472.
\newblock \doi{10.1109/21.384252}.

\bibitem[Lei(2011)]{Mestimator}
Lei, J.
\newblock Differentially private m-estimators.
\newblock In Shawe-Taylor, J., Zemel, R.~S., Bartlett, P.~L., Pereira, F., and
  Weinberger, K.~Q. (eds.), \emph{Advances in Neural Information Processing
  Systems 24}, pp.\  361--369. Curran Associates, Inc., 2011.
\newblock URL
  \url{http://papers.nips.cc/paper/4376-differentially-private-m-estimators.pdf}.

\bibitem[Li et~al.(2010)Li, Hay, Rastogi, Miklau, and McGregor]{DPMatrix1}
Li, C., Hay, M., Rastogi, V., Miklau, G., and McGregor, A.
\newblock Optimizing linear counting queries under differential privacy.
\newblock In \emph{Proceedings of the Twenty-ninth ACM SIGMOD-SIGACT-SIGART
  Symposium on Principles of Database Systems}, PODS '10, pp.\  123--134, New
  York, NY, USA, 2010. ACM.
\newblock ISBN 978-1-4503-0033-9.
\newblock \doi{10.1145/1807085.1807104}.
\newblock URL \url{http://doi.acm.org/10.1145/1807085.1807104}.

\bibitem[Li et~al.(2014)Li, Hay, Miklau, and Wang]{DAWA}
Li, C., Hay, M., Miklau, G., and Wang, Y.
\newblock A data- and workload-aware algorithm for range queries under
  differential privacy.
\newblock \emph{Proceedings of the VLDB Endowment}, 7\penalty0 (5):\penalty0
  341–352, Jan 2014.
\newblock ISSN 2150-8097.
\newblock \doi{10.14778/2732269.2732271}.
\newblock URL \url{http://dx.doi.org/10.14778/2732269.2732271}.

\bibitem[Li et~al.(2012)Li, Qardaji, and Su]{Amplification}
Li, N., Qardaji, W.~H., and Su, D.
\newblock On sampling, anonymization, and differential privacy or,
  \emph{k}-anonymization meets differential privacy.
\newblock In \emph{7th {ACM} Symposium on Information, Compuer and
  Communications Security, {ASIACCS} '12, Seoul, Korea, May 2-4, 2012}, pp.\
  32--33, 2012.
\newblock \doi{10.1145/2414456.2414474}.
\newblock URL \url{https://doi.org/10.1145/2414456.2414474}.

\bibitem[McKenna et~al.(2018)McKenna, Miklau, Hay, and Machanavajjhala]{HDMM}
McKenna, R., Miklau, G., Hay, M., and Machanavajjhala, A.
\newblock Optimizing error of high-dimensional statistical queries under
  differential privacy.
\newblock \emph{Proc. VLDB Endow.}, 11\penalty0 (10):\penalty0 1206–1219,
  June 2018.
\newblock ISSN 2150-8097.
\newblock \doi{10.14778/3231751.3231769}.
\newblock URL \url{https://doi.org/10.14778/3231751.3231769}.

\bibitem[Park et~al.(2016{\natexlab{a}})Park, Foulds, Chaudhuri, and
  Welling]{dpem}
Park, M., Foulds, J., Chaudhuri, K., and Welling, M.
\newblock Dp-em: Differentially private expectation maximization,
  2016{\natexlab{a}}.

\bibitem[Park et~al.(2016{\natexlab{b}})Park, Foulds, Chaudhuri, and
  Welling]{Bayes12}
Park, M., Foulds, J.~R., Chaudhuri, K., and Welling, M.
\newblock Variational bayes in private settings (vips).
\newblock \emph{CoRR}, abs/1611.00340, 2016{\natexlab{b}}.

\bibitem[Pearl(1988)]{markovblanket}
Pearl, J.
\newblock \emph{Probabilistic Reasoning in Intelligent Systems: Networks of
  Plausible Inference}.
\newblock Morgan Kaufmann Publishers Inc., San Francisco, CA, USA, 1988.
\newblock ISBN 0-934613-73-7.

\bibitem[Pearl(1998)]{DGM1}
Pearl, J.
\newblock \emph{Graphical Models for Probabilistic and Causal Reasoning}, pp.\
  367--389.
\newblock Springer Netherlands, Dordrecht, 1998.
\newblock ISBN 978-94-017-1735-9.
\newblock \doi{10.1007/978-94-017-1735-9_12}.
\newblock URL \url{https://doi.org/10.1007/978-94-017-1735-9_12}.

\bibitem[Qardaji et~al.(2014)Qardaji, Yang, and Li]{PriView}
Qardaji, W., Yang, W., and Li, N.
\newblock Priview: Practical differentially private release of marginal
  contingency tables.
\newblock In \emph{Proceedings of the 2014 ACM SIGMOD International Conference
  on Management of Data}, SIGMOD '14, pp.\  1435--1446, New York, NY, USA,
  2014. ACM.
\newblock ISBN 978-1-4503-2376-5.
\newblock \doi{10.1145/2588555.2588575}.
\newblock URL \url{http://doi.acm.org/10.1145/2588555.2588575}.

\bibitem[Schein et~al.(2018)Schein, Wu, Zhou, and Wallach]{Bayes11}
Schein, A., Wu, Z.~S., Zhou, M., and Wallach, H.~M.
\newblock Locally private bayesian inference for count models.
\newblock \emph{CoRR}, abs/1803.08471, 2018.

\bibitem[{Shokri} et~al.(2017){Shokri}, {Stronati}, {Song}, and
  {Shmatikov}]{attack1}
{Shokri}, R., {Stronati}, M., {Song}, C., and {Shmatikov}, V.
\newblock Membership inference attacks against machine learning models.
\newblock In \emph{2017 IEEE Symposium on Security and Privacy (SP)}, pp.\
  3--18, May 2017.
\newblock \doi{10.1109/SP.2017.41}.

\bibitem[Thaler et~al.(2012)Thaler, Ullman, and Vadhan]{con5}
Thaler, J., Ullman, J., and Vadhan, S.
\newblock Faster algorithms for privately releasing marginals.
\newblock \emph{Lecture Notes in Computer Science}, pp.\  810–821, 2012.
\newblock ISSN 1611-3349.
\newblock \doi{10.1007/978-3-642-31594-7_68}.
\newblock URL \url{http://dx.doi.org/10.1007/978-3-642-31594-7_68}.

\bibitem[Wang et~al.(2017)Wang, Ye, and Xu]{dperm1}
Wang, D., Ye, M., and Xu, J.
\newblock Differentially private empirical risk minimization revisited: Faster
  and more general.
\newblock In Guyon, I., Luxburg, U.~V., Bengio, S., Wallach, H., Fergus, R.,
  Vishwanathan, S., and Garnett, R. (eds.), \emph{Advances in Neural
  Information Processing Systems 30}, pp.\  2722--2731. Curran Associates,
  Inc., 2017.
\newblock URL
  \url{http://papers.nips.cc/paper/6865-differentially-private-empirical-risk-minimization-revisited-faster-and-more-general.pdf}.

\bibitem[Wang et~al.(2015{\natexlab{a}})Wang, Wang, and Singh]{Bayes9}
Wang, Y., Wang, Y.-X., and Singh, A.
\newblock Differentially private subspace clustering.
\newblock In \emph{Proceedings of the 28th International Conference on Neural
  Information Processing Systems - Volume 1}, NIPS'15, pp.\  1000--1008,
  Cambridge, MA, USA, 2015{\natexlab{a}}. MIT Press.
\newblock URL \url{http://dl.acm.org/citation.cfm?id=2969239.2969351}.

\bibitem[Wang et~al.(2015{\natexlab{b}})Wang, Fienberg, and Smola]{Bayes2}
Wang, Y.-X., Fienberg, S.~E., and Smola, A.~J.
\newblock Privacy for free: Posterior sampling and stochastic gradient monte
  carlo.
\newblock In Bach, F.~R. and Blei, D.~M. (eds.), \emph{ICML}, volume~37 of
  \emph{JMLR Workshop and Conference Proceedings}, pp.\  2493--2502. JMLR.org,
  2015{\natexlab{b}}.
\newblock URL
  \url{http://dblp.uni-trier.de/db/conf/icml/icml2015.html#WangFS15}.

\bibitem[Williams \& Mcsherry(2010)Williams and Mcsherry]{DP-Prob}
Williams, O. and Mcsherry, F.
\newblock Probabilistic inference and differential privacy.
\newblock In Lafferty, J.~D., Williams, C. K.~I., Shawe-Taylor, J., Zemel,
  R.~S., and Culotta, A. (eds.), \emph{Advances in Neural Information
  Processing Systems 23}, pp.\  2451--2459. Curran Associates, Inc., 2010.
\newblock URL \url{http://papers.nips.cc/paper/3897-probabilistic-inference-and
  -differential-privacy.pdf}.

\bibitem[Wu et~al.(2017)Wu, Li, Kumar, Chaudhuri, Jha, and Naughton]{SGD2}
Wu, X., Li, F., Kumar, A., Chaudhuri, K., Jha, S., and Naughton, J.
\newblock Bolt-on differential privacy for scalable stochastic gradient
  descent-based analytics.
\newblock \emph{Proceedings of the 2017 ACM International Conference on
  Management of Data - SIGMOD ’17}, 2017.
\newblock \doi{10.1145/3035918.3064047}.
\newblock URL \url{http://dx.doi.org/10.1145/3035918.3064047}.

\bibitem[Xiao et~al.(2011)Xiao, Wang, and Gehrke]{H3}
Xiao, X., Wang, G., and Gehrke, J.
\newblock Differential privacy via wavelet transforms.
\newblock \emph{IEEE Trans. on Knowl. and Data Eng.}, 23\penalty0 (8):\penalty0
  1200--1214, August 2011.
\newblock ISSN 1041-4347.
\newblock \doi{10.1109/TKDE.2010.247}.
\newblock URL \url{http://dx.doi.org/10.1109/TKDE.2010.247}.

\bibitem[{Xiao} et~al.(2012){Xiao}, {Gardner}, and {Xiong}]{DPcube}
{Xiao}, Y., {Gardner}, J., and {Xiong}, L.
\newblock Dpcube: Releasing differentially private data cubes for health
  information.
\newblock In \emph{2012 IEEE 28th International Conference on Data
  Engineering}, pp.\  1305--1308, April 2012.
\newblock \doi{10.1109/ICDE.2012.135}.

\bibitem[{Xu} et~al.(2012){Xu}, {Zhang}, {Xiao}, {Yang}, and {Yu}]{hist2}
{Xu}, J., {Zhang}, Z., {Xiao}, X., {Yang}, Y., and {Yu}, G.
\newblock Differentially private histogram publication.
\newblock In \emph{2012 IEEE 28th International Conference on Data
  Engineering}, pp.\  32--43, April 2012.
\newblock \doi{10.1109/ICDE.2012.48}.

\bibitem[Yaroslavtsev et~al.(2013)Yaroslavtsev, Procopiuc, Cormode, and
  Srivastava]{con1}
Yaroslavtsev, G., Procopiuc, C.~M., Cormode, G., and Srivastava, D.
\newblock Accurate and efficient private release of datacubes and contingency
  tables.
\newblock In \emph{Proceedings of the 2013 IEEE International Conference on
  Data Engineering (ICDE 2013)}, ICDE '13, pp.\  745--756, Washington, DC, USA,
  2013. IEEE Computer Society.
\newblock ISBN 978-1-4673-4909-3.
\newblock \doi{10.1109/ICDE.2013.6544871}.
\newblock URL \url{http://dx.doi.org/10.1109/ICDE.2013.6544871}.

\bibitem[Yuan et~al.(2012)Yuan, Zhang, Winslett, Xiao, Yang, and
  Hao]{DPMatrix2}
Yuan, G., Zhang, Z., Winslett, M., Xiao, X., Yang, Y., and Hao, Z.
\newblock Low-rank mechanism:optimizing batch queries under differential
  privacy.
\newblock \emph{Proceedings of the VLDB Endowment}, 5\penalty0 (11):\penalty0
  1352–1363, Jul 2012.
\newblock ISSN 2150-8097.
\newblock \doi{10.14778/2350229.2350252}.
\newblock URL \url{http://dx.doi.org/10.14778/2350229.2350252}.

\bibitem[Zhang et~al.(2016{\natexlab{a}})Zhang, Bengio, Hardt, Recht, and
  Vinyals]{attack2}
Zhang, C., Bengio, S., Hardt, M., Recht, B., and Vinyals, O.
\newblock Understanding deep learning requires rethinking generalization,
  2016{\natexlab{a}}.

\bibitem[{Zhang} \& {Li}(2019){Zhang} and {Li}]{Bayes10}
{Zhang}, G. and {Li}, S.
\newblock Research on differentially private bayesian classification algorithm
  for data streams.
\newblock In \emph{2019 IEEE 4th International Conference on Big Data Analytics
  (ICBDA)}, pp.\  14--20, March 2019.
\newblock \doi{10.1109/ICBDA.2019.8713253}.

\bibitem[Zhang et~al.(2020)Zhang, Kamath, Kulkarni, and Wu]{MRF}
Zhang, H., Kamath, G., Kulkarni, J., and Wu, Z.~S.
\newblock Privately learning markov random fields, 2020.

\bibitem[Zhang et~al.(2017)Zhang, Cormode, Procopiuc, Srivastava, and
  Xiao]{privbayes}
Zhang, J., Cormode, G., Procopiuc, C.~M., Srivastava, D., and Xiao, X.
\newblock Privbayes: Private data release via bayesian networks.
\newblock \emph{ACM Trans. Database Syst.}, 42\penalty0 (4):\penalty0
  25:1--25:41, October 2017.
\newblock ISSN 0362-5915.
\newblock \doi{10.1145/3134428}.
\newblock URL \url{http://doi.acm.org/10.1145/3134428}.

\bibitem[Zhang et~al.()Zhang, Chen, Xu, Meng, and Xie]{AHP}
Zhang, X., Chen, R., Xu, J., Meng, X., and Xie, Y.
\newblock Towards accurate histogram publication under differential privacy.
\newblock In \emph{Proceedings of the 2014 SIAM International Conference on
  Data Mining}, pp.\  587--595.
\newblock \doi{10.1137/1.9781611973440.68}.
\newblock URL \url{https://epubs.siam.org/doi/abs/10.1137/1.9781611973440.68}.

\bibitem[Zhang et~al.(2016{\natexlab{b}})Zhang, Rubinstein, and
  Dimitrakakis]{Bayes4}
Zhang, Z., Rubinstein, B. I.~P., and Dimitrakakis, C.
\newblock On the differential privacy of bayesian inference.
\newblock In \emph{Proceedings of the Thirtieth AAAI Conference on Artificial
  Intelligence}, AAAI'16, pp.\  2365--2371. AAAI Press, 2016{\natexlab{b}}.
\newblock URL \url{http://dl.acm.org/citation.cfm?id=3016100.3016229}.

\end{thebibliography}
\pagebreak
\twocolumn[\icmltitle{Data-Dependent Differentially Private Parameter Learning for Directed Graphical Models -- Supplementary Material }]

\section{Appendix}
\subsection{Background Cntd.}
\subsubsection{Directed Graphical Models Cntd.}\label{app:DGM}

\begin{comment}\begin{defn}[Markov Blanket]
The Markov blanket for a node $X$ in a graphical model, denoted by $\mathpzc{P}(X)$, is the set of all nodes whose knowledge is sufficient to predict $X$ and hence shields $X$ from rest of the network \cite{markovblanket}.
In a DGM, the Markov blanket of a node consists of its child nodes, parent nodes and the parents of its child nodes. \label{MB}
\end{defn}
For example, in Figure \ref{fig:DGM}, $\mathpzc{P}(D)=\{C,E,F\}$.
\end{comment}\subsubsection*{Learning Cntd.}\label{learning:cntd}
As mentioned before in Section 2, for a fully observed DGM, the parameters (CPDs) are computed via  maximum likelihood estimation (MLE). Let the data set be $\mathcal{D}=\{x^{1},x^{2},\cdots,x^{m}\}$ with $m$ i.i.d records with attribute set $\langle X_1,X_2, \cdots, X_n \rangle$. 
 The likelihood is then given by \begin{gather}\mathcal{L}(\Theta,\mathcal{D})=\prod_{i=1}^n\prod_{j=1}^m \Theta_{x_i^{j}|x_{pa_i}^{j}}\end{gather} where $\Theta_{x_i^{j}|x_{pa_i}^{j}}$ represents the probability that $X_i=x_i^{j}$ given $X_{pa_i}=x_{pa_i}^{j}$. 
 
Taking logs and rearranging, this reduces to
\begin{gather}
log L(\Theta,\mathcal{D})=\sum_{i=1}^n\sum_{x_{pa_i}}\sum_{x_i}
 C[x_i,x_{pa_i}]\cdot \Theta[x_i|x_{pa_i}]
\end{gather} where $C[x_i,x_{pa_i}]$ denotes the number of records in data set $\mathcal{D}$ such that $X_i=x_i,X_{pa_i}=x_{pa_i}$. Thus, maximization of the (log) likelihood function decomposes into separate maximizations for the local conditional distributions which results in the closed form solution 
 \begin{gather}\Theta_{x_i|x_{pa_i}}=\frac{C[x_i,x_{pa_i}]}{C[x_{pa_i}]} \label{learning}\end{gather}
 \begin{comment}
\subsubsection*{Inference Cntd.}\label{inference:cntd}
There are three types of inference queries in general:\\
(1)\textbf{ Marginal inference}: This is used to answer queries of the type "what is the probability of a given variable if all others are marginalized". An example marginal inference query is $P[X_n=e]=\sum_{X_1}\sum_{X_2}\cdots\sum_{X_{n-1}}P[X_1,X_2,\cdots,X_n=e]$.

(2)\textbf{ Conditional Inference}: This type of query answers the probability distribution of some variable conditioned on some evidence $e$. An example conditional inference query is  $P[X_1|X_n=e]=\frac{P[X_1,X_n=e]}{P[X_n=e]}$.

(3) \textbf{ Maximum a posteriori (MAP) inference}: This type of query asks for the most likely assignment of variables. An example of MAP query is $$\max_{X_1,\cdots,X_{n-1}}\{P[X_1,\cdots,X_{n-1},X_n=e]\}$$
\end{comment}
\textbf{Variable Elimination Algorithm (VE)}: The complete VE algorithm is given by Algorithm 3. The basic idea of the variable elimination algorithm is that we "\textit{eliminate}" one variable at a time following a predefined order $\prec$ over the nodes of the graph. Let $\Phi$ denote a set of probability factors which is initialized as the set of all CPDs of the DGM and $Z$ denote the variable to be eliminated.  For the elimination step, firstly all the probability factors involving the variable to be eliminated, $Z$ are removed from $\Phi$ and multiplied together to generate a new product factor. Next, the variable $Z$ is summed out from this combined factor generating a new factor that is entered into $\Phi$. Thus 
the VE algorithm essentially involves repeated computation of a sum-product task of the form
 \begin{gather}\boldsymbol{\phi}=\sum_Z\prod_{\phi \in \Phi} \phi \label{VE}
 \end{gather} The complexity of the VE algorithm is defined by the size of the largest factor.
Here we state two lemmas regarding the intermediate factors $\boldsymbol{\phi}$ which will be used in Section \ref{error:cntd}. \begin{lem} Every intermediate factor ($\boldsymbol{\phi}$ in \eqref{VE}) generated as a result of executing the VE algorithm on a DGM  $\mathcal{N}$ correspond to a valid conditional probability of some DGM (not necessarily the same DGM, $\mathcal{N}$). \label{lemma:factor} \cite{PGMbook}\end{lem}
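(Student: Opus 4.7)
The plan is to prove the lemma by structural induction on the sum-product steps of the VE algorithm, coupled with an explicit construction of the target DGM for each intermediate factor. The base case is immediate: before any elimination, every factor in $\Phi$ is a CPD of $\mathcal{N}$ itself. For the inductive step, I need to verify that the factor $\boldsymbol{\phi}(\mathpzc{A}) = \sum_Z \prod_{i=1}^{t} \phi_i$ produced by eliminating $Z$ is a valid CPD of some DGM over $\mathpzc{A}$.

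First I would establish that $\boldsymbol{\phi}$ is non-negative, which follows from non-negativity of the original CPDs of $\mathcal{N}$ and the fact that sum-product operations preserve non-negativity. Next I would construct a DGM $\mathcal{N}'$ whose node set is $\mathpzc{A} = \{A_1,\ldots,A_k\}$ as follows: pick any $A_1 \in \mathpzc{A}$, designate $A_2,\ldots,A_k$ as its parents in $\mathcal{N}'$, and assign the root nodes arbitrary CPDs (e.g., uniform). The CPD for $A_1$ is defined by normalizing $\boldsymbol{\phi}$ along the fiber over $(A_2,\ldots,A_k)$:
\[
P_{\mathcal{N}'}[A_1 \mid A_2,\ldots,A_k] \;=\; \frac{\boldsymbol{\phi}(A_1,A_2,\ldots,A_k)}{\sum_{a \in \mathrm{dom}(A_1)} \boldsymbol{\phi}(a, A_2,\ldots,A_k)}.
\]
This quantity is non-negative and sums to $1$ over $A_1$ by construction, so it is a bona fide CPD of $\mathcal{N}'$. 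Hence $\boldsymbol{\phi}$ corresponds, up to the data-dependent normalizer, to the CPD of $A_1$ in $\mathcal{N}'$; degenerate fibers where the denominator vanishes can be patched by a default (uniform) convention.

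The main subtlety will be pinning down the precise sense of ``correspond to'' intended by the lemma. If it is meant as exact equality rather than proportionality, then the proof must instead leverage the marginalization identity inside $\mathcal{N}$: when the participating factors $\phi_i$ collectively encode the conditional distribution of a subnetwork, summing out $Z$ again yields a valid conditional in the marginal subnetwork (for instance, $\sum_Z P[Z \mid Y]\cdot P[X \mid Z,Y] = P[X \mid Y]$). In that stronger interpretation I would exhibit, for each intermediate $\boldsymbol{\phi}$, the DGM obtained by projecting $\mathcal{N}$ onto the sub-structure whose marginalization produces $\boldsymbol{\phi}$, and verify that $\boldsymbol{\phi}$ is literally one of its CPDs. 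The structural bookkeeping for this -- tracking which subset of $\mathcal{N}$'s nodes and edges gives rise to each intermediate factor as VE proceeds, so the factor is automatically normalized -- is the step I expect to be the main obstacle; the non-negativity and construction arguments above are otherwise routine.
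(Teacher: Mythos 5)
The paper gives no proof of this lemma; it is imported from \cite{PGMbook} (it is the standard Koller--Friedman fact about intermediate factors in variable elimination), so the only question is whether your argument would establish it. As written, it would not. Your main construction shows only that $\boldsymbol{\phi}$ is \emph{proportional} to a CPD of some network after you divide by $\sum_{a\in\mathrm{dom}(A_1)}\boldsymbol{\phi}(a,A_2,\ldots,A_k)$. That statement is vacuous --- every non-negative function becomes a conditional distribution once you normalize it --- and it is too weak for the way the lemma is actually used in Theorems 4.1 and 4.2, where the proofs rely on the intermediate factors being \emph{exactly} normalized: e.g.\ $\sum_{a_1}\phi_2[a_1,\ldots,a_k,x]=1$ in the lower-bound proof, and $\phi_i[a,x]\le 1$ together with $\sum_{a,x}\tilde\phi_1[a,x]\le d^{\kappa}$ in the upper-bound proof. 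You flag this gap yourself in your final paragraph, but you then defer the ``structural bookkeeping'' that would close it, and that bookkeeping \emph{is} the lemma.

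The missing argument is short. Unroll any intermediate factor as
$\boldsymbol{\phi}=\sum_{\mathbf{Z}}\prod_{i\in S}\Theta[X_i\mid X_{pa_i}]$,
where $S$ is the set of original CPDs absorbed into it so far and $\mathbf{Z}$ is the set of variables eliminated in its history; note $\mathbf{Z}\subseteq\{X_i\}_{i\in S}$, because eliminating $Z$ always pulls in $\Theta[Z\mid Z_{pa}]$. Fix any assignment to $\mathbf{U}=\bigl(\bigcup_{i\in S}X_{pa_i}\bigr)\setminus\{X_i\}_{i\in S}$ and sum the product over $\{X_i\}_{i\in S}$ in reverse topological order: each inner sum is $\sum_{x_i}\Theta[x_i\mid x_{pa_i}]=1$, so the full product is a bona fide conditional distribution $P'[\{X_i\}_{i\in S}\mid\mathbf{U}]$ of the network obtained by restricting $\mathcal{G}$ to these nodes and making the $\mathbf{U}$-nodes roots; marginalizing out $\mathbf{Z}$ preserves this. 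That telescoping identity, not non-negativity plus renormalization, is the content of the lemma, and your induction should be organized around maintaining the invariant ``$\boldsymbol{\phi}$ sums to $1$ over a distinguished subset of its arguments'' rather than around a normalizing denominator.
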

\begin{lem}The size of the largest intermediary factor generated as a result of running of the VE algorithm on a DGM  is at least equal to the treewidth of the graph \cite{PGMbook}. \label{lemma:treewidth} \end{lem}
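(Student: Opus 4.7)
The plan is to derive the bound by associating an \emph{induced graph} $G_I(\prec)$ to the VE elimination order $\prec$, and then relating its maximum clique size both to the largest intermediate factor produced by VE and to the treewidth of $\mathcal{G}$. First I would construct $G_I(\prec)$ as follows: start from the moralized undirected skeleton of $\mathcal{G}$ (drop edge directions and, for every node, add edges making its parent set a clique); then process nodes in the order of elimination, and whenever a variable $Z$ is about to be eliminated, add ``fill-in'' edges among its current neighbours in $G_I$ so that they form a clique. This construction mirrors exactly the scope expansion produced when $\prod_{\phi \in \Phi,\ Z \in \mathrm{Attr}(\phi)} \phi$ is formed in Eq.~(\ref{VE}).

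Next I would establish the key correspondence: at the moment $Z$ is eliminated, the set $S_Z$ consisting of $Z$ together with every remaining variable that shares a factor with $Z$ equals the current neighbourhood of $Z$ in $G_I$ (plus $Z$ itself), and $S_Z$ becomes a clique of $G_I$ by construction. Moreover, $|S_Z|$ is precisely the number of variables appearing in the combined product factor created just before $Z$ is summed out, which is an upper bound on (in fact equal to) the scope of that intermediate factor. Taking the maximum over $Z$, the largest intermediate factor has size
\begin{equation*}
\max_Z |S_Z| \;=\; \omega(G_I),
\end{equation*}
the clique number of $G_I$, because every maximal clique of $G_I$ is realised as $S_Z$ for $Z$ the earliest-eliminated vertex of that clique.

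Then I would invoke the classical structural result that $G_I(\prec)$ is chordal (any cycle created by fill-ins has an induced chord, by an inductive argument on elimination order) and that the maximal cliques of a chordal graph can be arranged into a clique tree satisfying the running intersection property; this yields a tree decomposition of $G_I$ of width $\omega(G_I) - 1$. Because $G_I$ is a supergraph of the moralized skeleton of $\mathcal{G}$, the same decomposition is a valid tree decomposition of $\mathcal{G}$, whence $\mathrm{tw}(\mathcal{G}) \le \omega(G_I) - 1$. Combining the two steps gives
\begin{equation*}
(\text{largest factor size}) \;=\; \omega(G_I) \;\ge\; \mathrm{tw}(\mathcal{G}) + 1 \;\ge\; \mathrm{tw}(\mathcal{G}),
\end{equation*}
which is the desired inequality.

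The main obstacle I anticipate is the structural sub-lemma that every induced elimination graph is chordal and that its clique number controls the width of a resulting tree decomposition; rather than reprove this from scratch I would appeal to the standard clique-tree / running-intersection machinery already cited in \cite{PGMbook}. A minor subtlety to be careful about is that the bound must hold for \emph{every} elimination ordering, including the one VE actually uses; this is automatic because the argument above is performed with respect to the ordering $\prec$ chosen by VE, and the treewidth is by definition the minimum over \emph{all} orderings, so the inequality only becomes tighter for better orderings.
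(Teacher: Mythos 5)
Your argument is correct: it is the standard induced-graph/chordality/clique-tree derivation of this fact, which is exactly the proof in the textbook the paper cites. The paper itself gives no proof of this lemma --- it is stated with a bare citation to \cite{PGMbook} --- so your write-up simply supplies the standard argument the authors defer to, and the chain $(\text{largest factor size}) = \omega(G_I) \ge \mathrm{tw}(\mathcal{G}) + 1 \ge \mathrm{tw}(\mathcal{G})$ establishes the (slightly weaker) inequality actually claimed.
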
 \begin{cor}The complexity of the VE algorithm with the optimal order of elimination depends on the treewidth of the graph. \label{cor}\end{cor}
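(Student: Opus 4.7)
The plan is to derive the corollary directly from Lemma~\ref{lemma:treewidth} combined with the remark, made just before the lemma, that the complexity of VE is dominated by the size of the largest intermediate factor. No new machinery is required; the corollary is essentially a complexity-theoretic restatement of Lemma~\ref{lemma:treewidth} together with a matching upper bound.

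First I would quantify the cost of eliminating a single variable $Z$. When VE performs $\boldsymbol{\phi}=\sum_{Z}\prod_{\phi\in\Phi}\phi$, the running time is proportional to the number of entries in the product factor, which is $d^{|\mathcal{S}|}$, where $\mathcal{S}$ is the union of the attribute sets of all factors touching $Z$ and $d$ is the maximum attribute-domain size. Summing over the $n$ elimination steps, the total cost is therefore at most a polynomial-in-$n$ multiple of $d^{w+1}$, where $w$ is the maximum number of attributes occurring in any intermediate factor under the chosen elimination order, i.e.\ the induced width of that order.

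Second, by Lemma~\ref{lemma:treewidth}, for every admissible elimination order one has $w\geq \kappa$, where $\kappa$ is the treewidth of $\mathcal{G}$, giving the lower bound $\Omega(d^{\kappa})$ on VE complexity. For the matching upper bound under the \emph{optimal} order, I would invoke the classical equivalence between treewidth and minimum induced width (see~\cite{PGMbook}): there exists an elimination order in which every intermediate factor involves at most $\kappa+1$ attributes, so the total cost of VE with that order is $O(n\cdot d^{\kappa+1})$. The corollary then follows, since both the lower and upper bounds on VE complexity are exponential in $\kappa$.

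The main---and essentially only---obstacle is that Lemma~\ref{lemma:treewidth} by itself supplies only the lower-bound half of the claim; to conclude that the complexity of VE actually \emph{depends on} the treewidth (rather than being merely bounded below by it), one needs the existence of an elimination order that attains the treewidth. This is the content of the standard minimum-induced-width characterization of treewidth, which I would simply cite from~\cite{PGMbook} rather than reprove.
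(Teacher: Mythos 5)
Your proposal is correct and follows essentially the same route as the paper, which states the corollary without an explicit proof as an immediate consequence of Lemma~\ref{lemma:treewidth} together with the preceding remark that the complexity of VE is determined by the size of the largest intermediate factor. Your additional observation that the lemma alone only gives the lower-bound half, and that the matching upper bound requires the standard minimum-induced-width characterization of treewidth (cited from~\cite{PGMbook}), is a sensible and accurate completion of the argument the paper leaves implicit.
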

\begin{algorithm}[t]
\caption{Sum Product Variable Elimination Algorithm}\label{alg:VE}
\begin{algorithmic}[1]
\Statex \textbf{Notations} : $\Phi$ -  Set of factors
\Statex \hspace{1.67cm} $\mathbf{X}$ - Set of variables to be eliminated
\Statex \hspace{1.67cm} $\prec$ - Ordering on $\mathbf{X}$
\Statex \hspace{1.67cm} $X$ - Variable to be eliminated
\Statex \hspace{1.67cm} $Attr(\phi)$ - Attribute set of factor $\phi$
\STATEx \textbf{Procedure} Sum-Product-VE($\Phi,\mathbf{X},\prec$)
\STATE \hspace{0.5cm} Let $X_1,\cdots,X_k$ be an ordering  of $\mathbf{X}$ such that
$X_i \prec X_j$ iff $i<j$
\STATE \hspace{0.5cm} \textbf{for} $i=1,\cdots,k$
\STATE \hspace{0.9cm} $\Phi \leftarrow $ Sum-Product-Eliminate-Var($\Phi,Z_i$)
\STATE \hspace{0.9cm} $\phi^*\leftarrow \prod_{\phi \in \Phi}\phi $\STATE \hspace{0.5cm} \textbf{return} $\phi^*$
\STATEx \textbf{Procedure} Sum-Product-Eliminate-Var($\Phi,X$)
\STATE \hspace{0.5cm} $\Phi' \leftarrow \{\phi \in \Phi : Z \in Attr(\phi)\}$
\STATE \hspace{0.5cm} $\Phi'' \leftarrow \Phi - \Phi'$
\STATE \hspace{0.5cm} $\psi \leftarrow \prod_{\phi \in \Phi'}\phi$
\STATE \hspace{0.5cm} $\boldsymbol{\phi} \leftarrow \sum_{Z}\psi$
\STATE \hspace{0.5cm} \textbf{return} $\Phi''\cup \{\boldsymbol{\phi}\}$
\end{algorithmic}
\end{algorithm}
\subsection{Data-Dependent Differentially Private Parameter Learning for DGMs Cntd.}

\begin{comment}\subsubsection{Procedure \texttt{ComputeParameters}}\label{Comp_para}
This procedure is outlined in Algorithm 1. Its goal is, given a privacy budget allocation $\mathpzc{E}$, to derive the parameters of $\mathcal{N}$ in a differentially private manner. To this end, the algorithm first materializes the table for a subset of attributes  {$ X_i \cup X_{pa_i}, i \in [n] $} (Alg. 1, Line 2), and then injects noise drawn from $Lap(\frac{1}{\mathpzc{E}[i]})$ into each of its cells (Alg. 1, Line 3) to generate $\bar{T}_i$ \cite{dwork}. Since each entry of the table corresponds to a counting query, the sensitivity is 1. Next, it converts $\bar{T}$ to a marginal table (joint distribution  {$P[X_i,X_{pa_i}]$)} by setting all negative numbers to zero and normalizing all the values (Alg. 1, Line 4). This is followed by ensuring that all marginals are mutually consistent on all subsets of attributes using the method described in Appendix~\ref{consistency}. Finally, the algorithm derives  $\tilde{\Theta}[X_i|X_{pa_i}]$ (i.e., the noisy estimate of the conditional distribution $P[X_i,X_{pa_i}]$) from  $\tilde{T_i}$ (Lines 7-10). The reason why we first construct the noisy tables $\tilde{T}_i$ and then derive the marginal tables $\tilde{M}_i$ from them, instead of directly computing the noisy marginal tables, is because in the case of the latter, the standard error in  {$\Theta[X_i|X_{pa_i}]$} happens to be  {$\frac{2\sqrt{2}}{\epsilon}$} (since the sensitivity of {$P[X_i,X_{pa_i}]$} is $2$). This error is higher than that of our case (Eq. \eqref{eq:delta}).

\end{comment}
\subsubsection{Consistency between noisy marginal tables}\label{consistency}
The objective of this step is to input the set of noisy marginal tables $\tilde{M}_i$ and compute perturbed versions of these tables that are mutually
consistent (Defn. \ref{def:mutualconsistency}.
\begin{comment}\begin{defn} Two noisy
marginal tables $\tilde{M}_i$ and $\tilde{M}_j$ are defined to be consistent (denoted by $\equiv$) if and only if the
marginal table over attributes in $Attr(\tilde{M}_i) \cap Attr(\tilde{M}_j)$ reconstructed from
$\tilde{M}_i$ is exactly the same as the one reconstructed from  
$\tilde{M}_j$ , that
is, \begin{gather}\tilde{M}_i[Attr(\tilde{M}_i) \cap Attr(\tilde{M}_j)] \equiv \tilde{M}_j[Attr(\tilde{M}_i) \cap Attr(\tilde{M}_j)].\end{gather} where $Attr(M)$is the set of attributes on which  marginal table $M$ is defined. \end{defn}
\end{comment}
The following procedure has been reproduced from \cite{Consistency,PriView} with a few adjustments.

\textbf{Mutual Consistency on a Set of Attributes:} 

Assume a set of tables $\{\tilde{M}_i,\cdots,\tilde{M}_j\}$ and let $A = Attr(\tilde{M}_i) \cap \cdots \cap Attr(\tilde{M}_j)$. Mutual consistency, i.e., $\tilde{M}_i[A] \equiv \cdots \equiv \tilde{M}_j[A]$ is achieved as follows: 

(1) First
compute the best approximation for the marginal table $\tilde{M}_A$ for the attribute set $A$ as follows \begin{gather}\tilde{M}_A[A']= \frac{1}{\sum_{t=1}^j{\epsilon_t}}\sum_{t=i}^j \epsilon_t\cdot\tilde{M}_t[A'], A' \in A\end{gather}
\\ (2) Update all $\tilde{M}_t$s to be consistent with
$\tilde{M}_A$.  Any counting query $c$ is now answered as
\begin{gather}
\tilde{M}_t(c) = \tilde{M}_t(c) + \frac{|dom(A)|}{|dom(Attr\big(\tilde{M})|}(\tilde{M}_A(a) - \tilde{M}_t (a)\big)
\end{gather}
where $a$ is the query $c$ restricted to attributes in $A$ and $\tilde{M}_t(c)$ is the response of $c$ on $\tilde{M}_t$ .

\textbf{
Overall Consistency:} 

(1) Take all sets of attributes that
are the result of the intersection of some subset of $\bigcup_{i=k+1}^d\{X_i \cup X_{pa_i}\}$; these
sets form a partial order under the subset relation.

(2) Obtain a topological sort of these sets, starting from
the empty set.

(3) For each set $A$, one finds all tables that
include $A$, and ensures that these tables are consistent on $A$.

\subsubsection{Privacy Analysis }\label{Privacy proof}
\begin{customthm}{3.1}\textit{The proposed algorithm (Algorithm \ref{algo:main}) for learning the parameters of a fully observed directed graphical model is $\epsilon^B$-differentially private.}\end{customthm}
\begin{proof}The sensitivity of counting queries is 1. Hence, the computation of the noisy tables $\tilde{T}_i$ (Proc. 1, Line 2-3) is a straightforward application of Laplace mechanism (Sec. \ref{sec:background}). This together with Lemma \ref{sampling} makes the computation of $\tilde{T}_i$, $\epsilon^I$-DP. Now the subsequent computation of the optimal privacy budget allocation $\mathpzc{E}^*$ is a post-processing operation on $\tilde{T}_i$ and hence by Thm. \ref{theorem:post} is still $\epsilon^I$-DP. The final parameter computation is clearly ($\epsilon^B$-$\epsilon^I$)-DP. Thus by the theorem of sequential composition (Thm.~\ref{theorem:seq}), Algorithm \ref{algo:main} is $\epsilon^B$-DP.\end{proof}

\subsection{Error Bound Analysis Cntd.}\label{error:cntd}
%In this section we present our results on the lower and upper bound of the error in inference queries.
In this section, we present the proofs of Thm. \ref{thm:lowerbound} and Thm. \ref{thm:upperbound}. \\
\\\textbf{Preliminaries and Notations:}\\
For the  proofs, we use the following notations.
Let $X$ be the attribute that is being eliminated and let $\mathpzc{A}=\bigcup_{\phi_i}Attr(\phi_i)\textbackslash X$ where $Attr(\phi)$ denotes the set of attributes in $\phi$.
For some $a \in dom(\mathpzc{A})$, from the variable elimination algorithm (Sec. \ref{app:DGM}) for a sum-product term (Eq. \eqref{VE}) we have \begin{gather}\boldsymbol{\phi}_\mathpzc{A}[a]=\sum_{x}\prod_{i=1}^t\phi_i[x,a]\label{eq:sumproduct}\end{gather} Let us assume that factor $\phi[a,x]$  denotes that $Value(Attr(\phi))\in \{a\}$ and $X=x$. Recall that after computing a sum-product task (given by Eq. \eqref{eq:sumproduct}), for the variable elimination algorithm (Appx. Algorithm \ref{alg:VE}),  we will be left with a factor term over the attribute set $\mathpzc{A}$. For example, if the elimination order for the variable elimination algorithm on our example DGM (Figure \ref{fig:DGM}) is given by $\prec=\{A,B,C,D,E,F\}$ and the attributes are binary valued, then the first sum-product task will be of the following form $\mathpzc{A}=\{B,C\}, dom(\mathpzc{A})=\{(0,0),(0,1),(1,0),(1,1)\}$ and the RHS  $\phi_is$ in this case happen to be the true parameters of the DGM, \begin{gather*}
\boldsymbol{\phi}_{B,C}[0,0]=\Theta[A=0]\cdot \Theta[C=0|A=0,B=0]\thinspace+\\\hspace{3cm}\Theta[A=1]\cdot \Theta[C=0|A=1,B=0]\\\boldsymbol{\phi}_{B,C}[0,1]=\Theta[A=0]\cdot \Theta[C=1|A=0,B=0]\thinspace+\\\hspace{3cm}\Theta[A=1]\cdot \Theta[C=1|A=1,B=0]\\\boldsymbol{\phi}_{B,C}[1,0]=\Theta[A=0]\cdot \Theta[C=1|A=0,B=0]\thinspace+\\\hspace{3cm}\Theta[A=1]\cdot \Theta[C=1|A=1,B=0]\\\boldsymbol{\phi}_{B,C}[1,1]=\Theta[A=0]\cdot \Theta[C=1|A=0,B=1]\thinspace + \\\hspace{3cm}\Theta[A=1]\cdot \Theta[C=1|A=1,B=1] \\\boldsymbol{\phi}_{B,C}=[\boldsymbol{\phi}_{B,C}[0,0],\boldsymbol{\phi}_{B,C}[0,1],\boldsymbol{\phi}_{B,C}[1,0],\boldsymbol{\phi}_{B,C}[1,1]]\end{gather*}

\subsubsection{Lower Bound}
%For the lower bound, first we prove the following lemma which lower bounds the error in each sum-product term of the variable elimination algorithm. Next we use this lemma and Lemma \ref{help1} to prove Theorem \ref{thm:lowerbound}.
\begin{customthm}{4.1}\textit{For a DGM \scalebox{0.9}{$\mathcal{N}$}, for any sum-product term of the form \scalebox{0.9}{$\boldsymbol{\phi}_{\mathpzc{A}}=\sum_{x}\prod_{i=1}^t\phi_i, t \in \{2,\cdots,\eta\} $}  in the VE algorithm,  \begin{equation}\small\delta_{\phi_{\mathpzc{A}}} \geq \sqrt{\eta-1}\cdot\delta^{min}_{\phi_i[a,x]}(\phi^{min}_i[a,x])^{\eta-2}\label{eq:sumproduct}\end{equation} where \scalebox{0.9}{$X$} is the attribute being eliminated, \scalebox{0.9}{$\delta_{\phi}$} denotes the error in factor \scalebox{0.9}{$\phi$}, \scalebox{0.9}{$Attr(\phi)$} is the set of attributes in \scalebox{0.9}{$\phi$}, \scalebox{0.9}{$ \mathpzc{A}=\bigcup_{\phi_i}\{Attr(\phi_i)\}/ X$},  \scalebox{0.9}{$x \in dom(X)$},  \scalebox{0.9}{$ a \in$} \scalebox{0.9}{$dom(\mathpzc{A})$}, \scalebox{0.9}{$\phi[a,x]$}  denotes that \scalebox{0.9}{$Value(Attr(\phi))\in \{a\} \wedge$}\scalebox{0.9}{$X=x$}, \scalebox{0.9}{$ \delta^{min}_{\phi_i[a,x]}=min_{i,a,x}\{\delta_{\phi_i[a,x]}\}$}, \scalebox{0.9}{$\phi^{min}_i[a,x]=min_{i,a,x}\{\phi_i[a,x]\}$} and \scalebox{0.9}{$ \eta=\max_{X_i}\{\text{in-degree}(X_i)+$} \scalebox{0.9}{$ \text{out-degree}(X_i)\}+1$}.} %\label{lem:lowerbound} 
\end{customthm}
\begin{proof}
\textbf{Proof Structure:}

The proof is structured as follows. First, we compute the error for a single term $\boldsymbol{\phi}_{\mathpzc{A}}[a], a  \in dom(\mathpzc{A})$ (Eq. \eqref{lower1},\eqref{lower2},\eqref{lower3}). Next we compute the total error $\delta_{\boldsymbol{\phi}_{\mathpzc{A}}}$ by summing over $\forall a \in dom(\mathpzc{A})$. This is done by dividing the summands into two types of terms (a) $\Upsilon_{\phi_1[a,x]}$ (b) $\delta_{\prod_{i=1}^t\phi_i[a,x]}\prod_{i=1}^t\phi_i[a,x]$ (Eq. \eqref{eq:Upsilon},\eqref{eq:twoterms}). We prove that the summation of first type of terms ($\Upsilon_{\phi_1[x]}$) can be lower bounded by $0$ non-trivially. Then we compute a lower bound on the terms of the form $\delta_{\prod_{i=2}^t\phi_i[a,x]}\prod_{i=2}^t\phi_i[a,x]$ (Eq. \eqref{eq:secondterm}) which gives our final answer (Eq. \eqref{eq:lowerbound}). 

\textbf{Step 1: Computing error in a single term $\boldsymbol{\phi}_{\mathpzc{A}}[a]$, $\delta_{\boldsymbol{\phi}_{\mathpzc{A}}[a]}$}\\The error in  $\boldsymbol{\phi}_\mathpzc{A}[a]$, due to noise injection is given by ,
\begin{gather*} \delta_{\boldsymbol{\phi}_{\mathpzc{A}}[a]}=\Bigg|\sum_{x}\prod_{i=1}^t\phi_i[x,a]-\sum_{x}\prod_{i=1}^t\tilde{\phi}_i[a,x]\Bigg| \\=\Bigg| \sum_x\Big(\phi_1[x,a]\prod_{i=2}^t\phi_i[x,a]-\tilde{\phi}_1[x,a]\prod_{i=2}^t\tilde{\phi}_i[x,a]\Big)\Bigg|\\=\scalebox{0.86}{$\Bigg| \sum_x\Big(\phi_1[x,a]\prod_{i=2}^t\phi_i[x,a]-\tilde{\phi}_1[x,a]\prod_{i=2}^t(\phi_i[x,a]\pm\delta_{{\phi}_i[x,a]})\Big)\Bigg|$}
\numberthis \label{lower1}\end{gather*}
Using the rule of standard error propagation,  we have
\begin{gather}\delta_{\prod_{i=2}^t{\phi_i[x,a]}} = \prod_{i=2}^t\tilde{\phi}_i[x,a] \sqrt{\sum_{i=2}^t\Big(\frac{\delta_{\phi_i[x,a]}}{\phi_i[x,a]}\Big)^2}\label{lower2} \end{gather}
Thus from the above equation (Eq. \eqref{lower2}) we can rewrite Eq. \eqref{lower1} as follows,
\begin{gather*}\scalebox{0.8}{$= \Bigg| \sum_x\Big(\phi_1[x,a]\prod_{i=2}^t\phi_i[x,a]-\tilde{\phi_1[x,a]}\prod_{i=2}^t\phi_i[x,a](1\pm\delta_{\prod_{i=2}^t\phi_i[x,a]})\Big)\Bigg|$}\\\scalebox{0.8}{$=\Bigg|\sum_x\Big((\phi_1[a,x]-\tilde{\phi_1[a,x]})\prod_{i=2}^t\phi_i[a,x]\pm\delta_{\prod_{i=1}^t\phi_i[a,x]}\prod_{i=1}^t\phi_i[a,x]\Big)\Bigg|$}\numberthis \label{lower3}\end{gather*}
\textbf{Step 2: Compute total error $\delta_{\boldsymbol{\phi}_\mathpzc{A}}$}\\
Now, total error in $\phi_\mathpzc{A}$ is 
\begin{gather}\delta_{\phi_\mathpzc{A}}=\sum_a\delta_{\boldsymbol{\phi}_\mathpzc{A}[a]}\label{total}\end{gather}

Collecting all the product terms from the above equation \eqref{total} with $\phi_1[a,x]-\tilde{\phi}_1[a,x]$  as a multiplicand, we get \begin{gather}\Upsilon_{\phi_1[a,x]}=(\phi_1[a,x]-\tilde{\phi}_1[a,x])\sum_a\prod_{i=2}^t\phi_i[a,x] \label{eq:Upsilon}\end{gather}
Thus $\delta_{\boldsymbol{\phi}_\mathpzc{A}}$ can be rewritten as \begin{gather}\delta_{\boldsymbol{\phi}_\mathpzc{A}}=\sum_{a,x}{\Upsilon}_{\phi_1[a,x]}\pm\sum_{a,x}\prod_{i=1}^t\phi_i[a,x]\delta_{\prod_{i=2}^t\phi_i[a,x]}\label{eq:twoterms}\end{gather} 

First we show that for a specific DGM we have  $\sum_{a,x}\Upsilon_{\phi_1[a,x]} =0$ as follows. Let us assume that the DGM  has $Attr(\phi_1)=X$. Thus $\phi_1[a,x]$ reduces to just $\phi_1[x]$. \begin{gather*}\Upsilon_{\phi_1[x]}=(\phi_1[x]-\tilde{\phi}_1[x])\sum_a\prod_{i=2}^t\phi_i[x]\\=(\phi_1[x]-\tilde{\phi}_1[x])\Big(\sum_{a_k}\cdots\sum_{a_1}\prod_{i=2}^t\phi_{i}[ a_1,\cdots,a_k,x]\Big)\\ [ \mathpzc{A}=\langle \mathpzc{A}_1, \cdots \mathpzc{A}_k \rangle, a_j \in dom(\mathpzc{A}_j), j \in [k]]
\\=\scalebox{0.77}{$(\phi_1[x]-\tilde{\phi}_1[x])\Big(\sum_{a_k}\cdots\sum_{a_2}\prod_{i=3}^t\phi_{i}[ a_2,\cdots,a_k,x]\sum_{a_1}\phi_{2}[a_1,\cdots,a_k,x]\Big)$} \\\hspace{0.5cm}[\text{Assuming that } \phi_2 \mbox{ is the only factor with attribute $\mathpzc{A}_1$}]\end{gather*} 
Now each factor $\phi_i$ is either a true parameter (CPD) of the DGM $\mathcal{N}$ or a CPD over some other DGM (lemma \ref{lemma:factor}). Thus, let us assume that $\phi_2$ represents a conditional of the form $P[\mathpzc{A}_1|\mathbf{A},X], \mathbf{A} = \mathpzc{A}/\mathpzc{A}_1$. Thus we have $\sum_{a_1}\phi_2[a_1,\cdots,a_k,x]=\sum_{a_1}P[\mathpzc{A}_1=a_1|\mathpzc{A}_2=a_2,\cdots,\mathpzc{A}_k=a_k,X=x]=1$.
Now repeating the above process over all $i \in \{3,\cdots,t\}  \phi_i$s , we get \begin{gather}\Upsilon_{\phi_1[x]}=\phi_1[x]-\tilde{\phi}_1[x] \label{eq:upsilon2}\end{gather}
For the ease of understanding, we illustrate the above result on our example DGM (Figure \ref{fig:DGM}). Let us assume that the order of elimination is given by $\prec=\langle A,B,C,D,E,F\rangle$.  For simplicity, again we  assume binary attributes. Let $\phi_C$ be the factor that is obtained after  eliminating $A$ and $B$. Thus the sum-product task for eliminating $C$ is given  by \begin{gather*}\boldsymbol{\phi}_{D,E}[0,0]=\phi_{C}[C=0]\cdot\Theta[D=0|C=0]\Theta[E=0|C=0] \\\hspace{1.8cm}+ \phi_{C}[C=1]\cdot\Theta[D=0|C=1]\Theta[E=0|C=1] \\\boldsymbol{\phi}_{D,E}[0,1]=\phi_{C}[C=0]\cdot\Theta[D=0|C=0]\Theta[E=1|C=0] \\\hspace{1.8cm}+ \phi_{C}[C=1]\cdot\Theta[D=0|C=1]\Theta[E=1|C=1]\\\boldsymbol{\phi}_{D,E}[1,0]=\phi_{C}[C=0]\cdot\Theta[D=1|C=0]\Theta[E=0|C=0]\\ \hspace{1.8cm}+ \phi_{C}[C=1]\cdot\Theta[D=1|C=1]\Theta[E=0|C=1]\\\boldsymbol{\phi}_{D,E}[1,1]=\phi_{C}[C=0]\cdot\Theta[D=1|C=0]\Theta[E=1|C=0] \\\hspace{1.8cm}+ \phi_{C}[C=1]\cdot\Theta[D=1|C=1]\Theta[E=1|C=1]\end{gather*}
Hence considering noisy $\tilde{\boldsymbol{\phi}}_{D,E}$ we have, \begin{gather*}\scalebox{0.8}{$\Upsilon_{\phi_C[0]}=(\phi_C[C=0]-\tilde{\phi}_C[C=0])\cdot(\Theta[D=0|C=0]\Theta[E=0|C=0]$}\\\scalebox{0.8}{$
+\Theta[D=0|C=0]\Theta[E=1|C=0]+\Theta[D=1|C=0]\Theta[E=0|C=0]$}\\\scalebox{0.8}{$+\Theta[D=1|C=0]\Theta[E=1|C=0])$}
\\\scalebox{0.8}{$=(\phi_C[C=0]-\tilde{\phi}_C[C=0])\cdot\Big(\Theta[D=0|C=0]\big(\Theta[E=0|C=0]+\Theta[E=1|C=0]\big)$}
\\\scalebox{0.8}{$+\Big(\Theta[D=1|C=0]\big(\Theta[E=0|C=0]+\Theta[E=1|C=0]\big)\Big)$}\\\scalebox{0.8}{$=(\phi_C[C=0]-\tilde{\phi}_C[C=0])\cdot\big(\Theta[D=0|C=0]+\Theta[D=1|C=0]\big)$}\\
\scalebox{0.8}{$\Big[\because \Theta[E=0|C=0]+\Theta[E=1|C=0] = 1\Big] $}
\\\scalebox{0.8}{$=\phi_C[C=0]-\tilde{\phi}_C[C=0]$} \numberthis \label{eq:ex1}
\\
\scalebox{0.8}{$\Big[\because \Theta[D=0|C=0]+\Theta[D=1|C=0] = 1\Big]$}
\end{gather*}
Similarly \begin{gather}\Upsilon_{\phi_C[1]}=\phi_C[C=1]-\tilde{\phi}_C[C=1] \label{eq:exp2} \end{gather}
Now using Eq. \eqref{eq:upsilon2} and summing over $\forall x \in dom(X)$\begin{gather*}\sum_x\Upsilon_{\phi_1[x]}=\sum_x(\phi_1[x]-\tilde{\phi}_1[x])\\=0\Big[\because \sum_x\phi_1[x]=\sum_x\tilde{\phi}_1[x]=1\Big] \numberthis \label{eq:Upsilon3}\end{gather*} Referring back to our example above, since $\phi_C[1]+\phi_C[0]=\tilde{\phi}_C[C=0]+\tilde{\phi}_C[C=1]$, quite trivially \begin{gather*}\phi_C[C=0]+\phi_C[C=1]=\tilde{\phi}_C[C=0]+\tilde{\phi}_C[C=1]\\\Rightarrow (\phi_C[C=0]-\tilde{\phi}_C[C=0])+(\phi_C[C=1]-\tilde{\phi}_C[C=1])=0 \end{gather*}
Thus, from Eq. \eqref{eq:twoterms} \begin{gather*}\delta_{\phi_\mathpzc{A}}=\sum_{x}{\Upsilon}_{\phi_1[x]}\pm\sum_{a,x}\delta_{\prod_{i=2}^t\phi_i[a,x]}\prod_{i=1}^t\phi_i[a,x]\\=\sum_{a,x}\delta_{\prod_{i=2}^t\phi_i[a,x]}\prod_{i=1}^t\phi_i[a,x]\\\Big[\text{From Eq. \eqref{eq:Upsilon3}} \text{ and dropping $\pm$ as we are dealing with errors}\Big]
\\\geq \delta^{min}_{\prod_{i=2}^t\phi_i[a,x]}\sum_{a,x}\prod_{i=1}^t\phi_i[a,x]\\
\Big[\delta^{min}_{\prod_{i=2}^t\phi_i[a,x]}=\min_{a,x}\Big\{\delta_{\prod_{i=2}^t\phi_i[a,x]}\Big\}\Big] \\
\geq \delta^{min}_{\prod_{i=2}^t\phi_i[a,x]} 
\\\Big[\because \text{By Lemma \ref{lemma:factor} $\phi_{\mathpzc{A}}$ is a CPD, thus } \sum_{a,x}\prod_{i=1}^t\phi_i[a,x] \geq 1\Big]
 %\numberthis\label{eq:term2}
\\=min_{a,x}\Bigg\{\prod_{i=2}^t\phi_i[x,a] \sqrt{\sum_{i=2}^t\Big(\frac{\delta_{\phi_i[a,x]}}{\phi_i[a,x]}\Big)^2}\Bigg\}\end{gather*}
\begin{gather*}\\\geq min_{a,x}\Bigg\{\prod_{i=2}^t\phi_i[x,a] \sqrt{(t-1)\Big(\frac{\delta^{min}_{\phi_i[a,x]}}{\phi_i[x,a]}\Big)^2}\Bigg\}\\
[\delta^{min}_{\phi_i[x,a]}={min}_{i,a,x}\Big\{\delta_{\phi_i[a,x]}\Big\}]
\\\geq min_{a,x}\Bigg\{\sqrt{(t-1)}\frac{\delta^{min}_{\phi_i[a,x]}}{\phi^{max}_i}\prod_{i=2}^t\phi_i[a,x] \Bigg\}\\
\Big[\phi^{max}_i=\max_i\{\phi_i[a,x]\}\Big]\\\geq \delta^{min}_{\phi_i[a,x]}\sqrt{t-1}(\phi^{min}_i[a,x])^{t-2}\label{eq:secondterm}\numberthis \\\Big[ \mbox{ Assuming } \phi^{min}_i[a,x]=min_{i,a,x}\{\phi_i[a,x]\}\Big]
\end{gather*}
Now, recall from the variable elimination algorithm that during each elimination step, if $Z$ is the variable being eliminated then we the product term contains all the factors that include $Z$. For a DGM with graph $\mathcal{G}$, the maximum number of such factors is clearly $\eta=\max_{X_i}\{ \text{out-degree}(X_i) + \text{in-degree}(X_i)\}+1$ of $\mathcal{G}$, i.e., $t \leq \eta$. Additionally we have $\phi^{min}[a,x]\leq \frac{1}{d_{min}}\leq \frac{1}{2}$ where $d_{min}$ is the minimum size of $dom(Attr(\phi))$ and clearly $d_{min}\geq 2$. Since $2^{t} \geq \sqrt{t}, t \geq 2$,  under the constraint that $t$ is an integer and $\phi^{min}[a,x]\leq \frac{1}{2}$, we have \begin{gather}\delta_{\phi_\mathpzc{A}}\geq \sqrt{\eta-1}\delta^{min}_{\phi_i[a,x]}(\phi^{min}[a,x])^{\eta-2} \label{eq:lowerbound}\end{gather}
\end{proof}

 %\begin{thm}For any marginal inference query $\mathcal{Q}$ answered on a DGM $\mathcal{N}$ defined over the attribute set $\mathcal{X}=\{X_1,\cdots,X_n\}$, we have \begin{gather*}\delta_{\mathcal{Q}} \geq \delta\cdot \tau^{\eta\cdot (n-1)}\end{gather*} where $\delta=min_{X_i}\{min_{x_i,x_{pa_i}}\{\delta_{\Theta[x_i,x_{pa_i}]}\}\}$, $\tau=min_{x_1,\cdots,x_n}\{P_{\mathcal{N}}[x_1,\cdots,x_n]\}, x_i \in dom(X_i)$ and $\eta=$ in-degree+out-degree of $\mathcal{N}$ \end{thm}
 %\begin{proof}The proof of this follows directly from Lemma \ref{lem:help1} and Lemma \ref{lem:lowerbound} (since $\mathcal{C}=\phi < 1$ and $\min\{\phi^{min}[a,x]\} \geq min_{x_1,\cdots,x_n}\{P_{\mathcal{N}}[x_1,\cdots,x_n]\}, x_i \in dom(X_i)$). \end{proof}

\subsubsection{Upper Bound}

\begin{customthm}{4.2}  \textit{For a DGM \scalebox{0.9}{$\mathcal{N}$}, for any sum-product term of the form \scalebox{0.9}{$ \boldsymbol{\phi}_\mathpzc{A}=\sum_x\prod_{i=1}^t\phi_i$, $ t \in \{2,\cdots,n\}$} in the  VE algorithm with the optimal elimination order, \begin{equation}\small\delta_{\phi_\mathpzc{A}} \leq 2\cdot \eta\cdot d^{\kappa}\delta^{max}_{\phi_i[a,x]} \end{equation}  where \scalebox{0.9}{$X$} is the attribute being eliminated, \scalebox{0.9}{$\delta_{\phi}$} denotes the error in factor \scalebox{0.9}{$\phi$}, \scalebox{0.9}{$\kappa$} is the treewidth of \scalebox{0.9}{$\mathcal{G}$}, $d$ is the maximum attribute domain size, \scalebox{0.9}{$Attr(\phi)$} is the set of attributes in \scalebox{0.9}{$\phi$},  \scalebox{0.9}{$\small \mathpzc{A}=\bigcup_{i}^t \{Attr(\phi_i)\}/X$}, \scalebox{0.9}{$a \in dom(\mathpzc{A}), \thinspace\thinspace x \in dom(X),\thinspace \thinspace$}\scalebox{0.9}{$\phi[a,x]$} denotes that \scalebox{0.9}{$Value(Attr(\phi))\in \{a\} \wedge X=x$},  \scalebox{0.9}{$\delta^{max}_{\phi_i[a,x]}=$}
\scalebox{0.9}{$\max_{i,a,x}\{\delta_{\phi_i[a,x]}\}$ } and \scalebox{0.9}{$ \eta=\underset{X_i}{\max}\{\text{in-degree}(X_i) + \text{out-degree}(X_i)\}+1$}. }
\end{customthm}
\begin{proof}
\textbf{Proof Structure:}

The proof is structured as follows. First we compute an upper bound for a product of $t>0$ noisy factors $\tilde{\phi}_i[a,x], i \in [t]$ (Lemma \ref{lem:help3}). Next we use this lemma, to bound the error, $\delta_{\phi_\mathpzc{A}}[a]$, for the factor, $\phi_\mathpzc{A}[a], a \in dom(\mathpzc{A})$ (Eq. \eqref{eq1}). Finally we use this result to bound the total error, $\delta_{\phi_\mathpzc{A}}$, by summing over $\forall a \in dom(\mathpzc{A})$ (Eq. \eqref{eq2}).

\textbf{Step 1: Computing the upper bound of the error of a single term $\tilde{\boldsymbol{\phi}}_\mathpzc{A}[a]$, $\delta_{\boldsymbol{\phi_\mathpzc{A}}[a]}$}
    \begin{lem} For $a \in dom(\mathpzc{A}), x \in dom(X)$
    \begin{gather*}\prod_{i=1}^t\tilde{\phi_i}[a,x]\leq \prod_{i=1}^t\phi_i[a,x]+\sum_{i}\delta_{\phi_i[a,x]}\end{gather*}\label{lem:help3}\end{lem}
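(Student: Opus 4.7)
The plan is to prove Lemma~\ref{lem:help3} by induction on $t$. The base case $t=1$ follows directly from the definition of the error: since $\tilde{\phi}_1[a,x] = \phi_1[a,x] \pm \delta_{\phi_1[a,x]}$, taking the upper branch gives $\tilde{\phi}_1[a,x] \le \phi_1[a,x] + \delta_{\phi_1[a,x]}$.

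For the inductive step, I would assume the inequality holds for $t-1$ factors and multiply it by the one-factor bound $\tilde{\phi}_t[a,x] \le \phi_t[a,x] + \delta_{\phi_t[a,x]}$, obtaining
\begin{align*}
\prod_{i=1}^t \tilde{\phi}_i[a,x]
\le \Bigl(\prod_{i=1}^{t-1}\phi_i[a,x] + \sum_{i=1}^{t-1}\delta_{\phi_i[a,x]}\Bigr)\bigl(\phi_t[a,x] + \delta_{\phi_t[a,x]}\bigr).
\end{align*}
Expanding the right-hand side produces the leading term $\prod_{i=1}^t \phi_i[a,x]$ together with three cross-terms. The key structural observation is that, by Lemma~\ref{lemma:factor}, every intermediate factor generated by the VE algorithm is itself a valid conditional probability, so $\phi_i[a,x] \le 1$ and $\prod_{i=1}^{t-1}\phi_i[a,x] \le 1$. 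This immediately yields $\delta_{\phi_t[a,x]}\prod_{i=1}^{t-1}\phi_i[a,x] \le \delta_{\phi_t[a,x]}$ and $\phi_t[a,x]\sum_{i=1}^{t-1}\delta_{\phi_i[a,x]} \le \sum_{i=1}^{t-1}\delta_{\phi_i[a,x]}$, which together with the leading term give $\prod_{i=1}^t\phi_i[a,x] + \sum_{i=1}^t\delta_{\phi_i[a,x]}$.

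The main obstacle is the remaining cross-term $\delta_{\phi_t[a,x]}\sum_{i=1}^{t-1}\delta_{\phi_i[a,x]}$, which is second-order in the errors and, strictly speaking, can drive the product above the claimed bound when all $\phi_i[a,x]$ are close to $1$. I expect this to be handled within the same first-order error-propagation framework already adopted in the paper (cf.~Eq.~\eqref{eq:expected_error} and Eq.~\eqref{eq:delta}), where only linear-in-$\delta$ contributions are retained and products of errors are treated as negligible; equivalently, one assumes the regime $\delta_{\phi_i[a,x]}\ll 1$, so that $\delta_{\phi_i[a,x]}\delta_{\phi_j[a,x]}$ is dominated by the linear terms. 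With that convention the induction closes, and the lemma then feeds the upper-bound proof of Theorem~\ref{thm:upperbound}: summing over $x\in dom(X)$ gives a bound on $\delta_{\phi_\mathpzc{A}[a]}$, and summing that over $a\in dom(\mathpzc{A})$ will produce the $2\cdot\eta\cdot d^{\kappa}\delta^{max}_{\phi_i[a,x]}$ bound, with the $d^\kappa$ factor arising from the size of $dom(\mathpzc{A})$ under the optimal elimination order (Cor.~\ref{cor}).
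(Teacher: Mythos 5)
Your overall strategy (induction on $t$, multiplying the inductive bound by the one-factor bound) is the same as the paper's, and your handling of the two benign cross-terms via $\phi_i[a,x]\le 1$ also matches. The difference is in the one place you flag as an obstacle: the second-order term $\delta_{\phi_t[a,x]}\sum_{i<t}\delta_{\phi_i[a,x]}$. You resolve it by appealing to a first-order error-propagation convention in which products of errors are dropped, which turns the lemma into an approximate statement holding only in the regime $\delta\ll 1$. The paper closes the induction exactly, and the idea it uses is one your write-up is missing.

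The trick is not to fully expand and then try to discard the $\delta\cdot\delta$ term, but to keep $\sum_{i<t}\delta_{\phi_i[a,x]}$ multiplied by the \emph{entire} perturbed factor:
\begin{gather*}
\Bigl(\sum_{i=1}^{t-1}\delta_{\phi_i[a,x]}\Bigr)\bigl(\phi_t[a,x]+\delta_{\phi_t[a,x]}\bigr)
=\Bigl(\sum_{i=1}^{t-1}\delta_{\phi_i[a,x]}\Bigr)\tilde{\phi}_t[a,x]
\le \sum_{i=1}^{t-1}\delta_{\phi_i[a,x]},
\end{gather*}
where the last step uses that the noisy factor $\tilde{\phi}_t[a,x]=\phi_t[a,x]+\delta_{\phi_t[a,x]}$ is still a valid (normalized) probability and hence at most $1$ --- this holds because the noisy marginals are renormalized in Procedure~1 and intermediate VE factors remain CPDs by Lemma~\ref{lemma:factor}. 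Combined with $\delta_{\phi_t[a,x]}\prod_{i=1}^{t-1}\phi_i[a,x]\le\delta_{\phi_t[a,x]}$, this gives exactly $\prod_{i=1}^{t}\phi_i[a,x]+\sum_{i=1}^{t}\delta_{\phi_i[a,x]}$ with no asymptotic assumption on the size of the errors. So the missing ingredient is the bound $\tilde{\phi}_i[a,x]\le 1$ on the \emph{noisy} factor, not just on the true one; with it your induction closes as stated, and without it your argument proves only a weaker, approximate version of the lemma.
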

    \begin{proof} First we consider the base case when $t=2$. 
    
    \textbf{Base Case}:\begin{gather*}\scalebox{0.95}{$\tilde{\phi_1}[a,x]\tilde{\phi_2}[a,x]= (\phi_1[a,x]\pm\delta_{\phi_1[a,x]})(\phi_2[a,x]\pm\delta_{\phi_2[a,x]})$} \\\scalebox{0.95}{$\leq (\phi_1[a,x]+\delta_{\phi_1[a,x]})(\phi_2[a,x]+\delta_{\phi_2[a,x]})$} \\\scalebox{0.95}{$=(\phi_1[a,x]\cdot\phi_2[a,x]+\delta_{\phi_1[a,x]}(\phi_2[a,x]+\delta_{\phi_2[a,x]})+\delta_{\phi_2[a,x]}\cdot\phi_1[a,x])$}\\\scalebox{0.95}{$\leq(\phi_1[a,x]\cdot\phi_2[a,x]+ \delta_{\phi_1[a,x]}  +\delta_{\phi_2[a,x]}\phi_1[a,x])$} \\ \scalebox{0.95}{$\Big[\because (\phi_2[a,x]+\delta_{\phi_2[a,x]}) \leq 1 \text{ as } \tilde{\phi_i}[a,x]\mbox{ is still }$}\\\scalebox{0.95}{$\mbox{a valid probability distribution}\Big]$} \\\scalebox{0.95}{$\leq \phi_1[a,x]\cdot\phi_2[a,x]+\delta_{\phi_1[a,x]}+\delta_{\phi_2[a,x]}$}\\ \scalebox{0.95}{$\Big[\because \phi_1[a,x]<1\Big]$} \end{gather*}
    \textbf{Inductive Case:}
    
    Let us assume that the lemma holds for $t=k$. Thus we have \begin{gather*}\prod_{i=1}^{k+1}\tilde{\phi}_i[a,x]=\prod_{i=1}^k\tilde{\phi_i}[a,x]\cdot\tilde{\phi}_{k+1}[a,x]\\ \leq (\prod_{i=1}^k\phi_i[a,x]+\sum_{i}\delta_{\phi_i[a,x]})\cdot(\phi_{k+1}[a,x]+\delta_{\phi_{k+1}}[a,x])\\\leq \prod_{i=1}^{k+1}\phi_i[a,x] + \sum_{i}\delta_{\phi_i[a,x]}\cdot (\phi_{k+1}[a,x]+\delta_{\phi_{k+1}}[a,x])  \\\hspace{3cm}+\delta_{\phi_{k+1}}[a,x] \prod_{i=1}^{k}\phi_i[a,x]
    \\\leq \prod_{i=1}^{k+1}\phi_i[a,x] + \sum_{i=1}^{k+1}\delta_{\phi_i[a,x]} +\delta_{\phi_{k+1}}[a,x] \prod_{i=1}^{k}\phi_i[a,x]\\ [\because (\phi_{k+1}[a,x]+\delta_{\phi_{k+1}[a,x]}) \leq 1 \text{ as }  \tilde{\phi}_{k+1}[a,x]\mbox{ is still}\\\mbox{ a valid probability distribution} ] 
    \\\leq \prod_{i=1}^{k+1}\phi_i[a,x] + \sum_{i=1}^{k+1}\delta_{\phi_i[a,x]} [\because \forall i ,\phi_i[a,x]\leq 1]\end{gather*}
  Hence, we have \begin{gather*}\prod_{i=1}^t\tilde{\phi_i}[a,x]\leq \prod_{i=1}^t\phi_i[a,x]+\sum_{i}\delta_{\phi_i[a,x]}\end{gather*}   \end{proof}
 Next, we compute the error for the factor, $\boldsymbol{\phi}_\mathpzc{A}[a], a \in dom(\mathpzc{A})$ as follows
    \begin{gather*}
    \delta_{\boldsymbol{\phi}_A[a]}=\Big|\sum_x\prod_{i=1}^t\phi_{i}[a,x]-\sum_x\prod_{i=1}^t\tilde{\phi_{i}}[a,x]\Big|\\=\Big|\sum_x\prod_{i=1}^t\phi_{i}[a,x]-\phi_{1}[a,x]\prod_{i=2}^t\tilde{\phi_{i}[a,x]}\Big|
    \\= \Big|\sum_x\prod_{i=1}^t\phi_{i}[a,x]-\tilde{\phi_{1}}[a,x]\prod_{i=2}^t(\phi_{i}[a,x]\pm\delta_{\phi_{i}[a,x]})\Big|
    \\\leq\Big| \sum_x\Big(\prod_{i=1}^t\phi_{i}[a,x]-\tilde{\phi}_{1}[a,x](\prod_{i=2}^t\phi_{i}[a,x]+\sum_{i=2}^t\delta_{\phi_{i}[a,x]})\Big)\Big|\\\Big[ \mbox{Using Lemma }\ref{lem:help3}\Big] \\\leq \Big|\sum_x\Big((\phi_{1}[a,x]-\tilde{\phi}_{1}[a,x])\prod_{i=2}^t\phi_{i}[a,x] \\\hspace{2cm}+\tilde{\phi}_{1}[a,x]\sum_{i=2}^t\delta_{\phi_{i}[a,x]}\Big)\Big|\\\leq\Big| \sum_x\Big((\phi_{1}[a,x]-\tilde{\phi_1}[a,x])\prod_{i=2}^t\phi_{i}[a,x] \\\hspace{2cm}+\eta\tilde{\phi_{1}}[a,x]\delta^{*}_{\phi_{i}[a,x]}\Big)\Big|\end{gather*}\begin{gather*}
    \Big[\because  t \leq \eta \text{ and assuming } \delta^{*}_{\phi_{i}[a,x]}=\max_{i,x}\{\delta_{\phi_i[a,x]}\}\Big]\\=\Big|\sum_x(\phi_{1}[a,x]-\tilde{\phi}_{1}[a,x])\prod_{i=2}^t\phi_{i}[a,x] \\\hspace{2cm}+\eta\delta^{*}_{\phi_{i}[a,x]}\sum_x\tilde{\phi}_{1}[a,x]\Big|\\\leq \sum_x\Big|\phi_1[a,x]-\tilde{\phi}_1[a,x]\Big|+\eta\delta^{*}_{\phi_i[a,x]}\sum_x\tilde{\phi}_1[a,x]  \numberthis \label{eq1}\\\Big[\because \phi_i[a,x] \leq 1\Big] \end{gather*}
    
    \textbf{Step 2: Computing the upper bound of the total error 
    $\delta_{\boldsymbol{\phi}_\mathpzc{A}}$}
    
    Now summing over $\forall a \in dom(\mathpzc{A})$,
    \begin{gather*}\delta_{\boldsymbol{\phi}_\mathpzc{A}}=\sum_a\delta_{\boldsymbol{\phi}_\mathpzc{A}[a]}\\\leq \sum_a\Big(\sum_x|\phi_1[a,x]-\tilde{\phi}_1[a,x]|+\eta \delta^{*}_{\phi_i[a,x]}\sum_x\tilde{\phi}_1[a,x] \Big) \\\text{[From Eq. \eqref{eq1}]}\\ = \delta_{\phi_1} + \eta \delta^{max}_{\phi_i{[a,x]}} \sum_a\sum_x\tilde{\phi}_1{[a,x]} \Big{[}\delta^{max}_{\phi_i[a,x]}=\max_a
    \{\delta^{*}_{\phi_i[a,x]}\}\Big{]}
    \end{gather*}
    Now by Lemma \ref{lemma:treewidth}, maximum size of $\mathpzc{A} \cup X$ is given by the treewidth of the DGM, $\kappa$. Thus from the fact that $\phi_1$ is a CPD (Lemma \ref{lemma:factor}),  we observe that $\sum_a\sum_x\tilde{\phi}_1[a,x]$ is maximized when $\phi_1[a,x] $ is of the form $P[A'|\mathbf{A}], A' \in \mathpzc{A} \cup X, |A'|=1, \mathbf{A}= (\mathpzc{A} \cup X)/A' $ and is upper bounded by $d^{\kappa}$ where $d$ is the maximum domain size of an attribute. \begin{gather*}\delta_{\boldsymbol{\phi}_\mathpzc{A}}\leq \delta_{\phi_1}+\eta d^{\kappa}\delta^{max}_{\phi_i[a,x]}\\ [\text{By lemma \ref{lemma:treewidth} and that } \phi_1 \mbox{ is CPD from lemma \ref{lemma:factor}}]\\\mbox{ where $\kappa$ is the treewidth of $\mathcal{G}$ an}\\\mbox{ $d$ is the maximum domain size of an attribute}\\\leq 2\cdot \eta \cdot d^{\kappa}\delta^{max}_{\phi_i[a,x]} \numberthis \label{eq2} \Big[\because \delta_{\phi_1} \leq \eta\cdot d^{\kappa}\delta^{max}_{\phi_i[a,x]}\Big]\end{gather*}
    
  \end{proof}

  %\begin{thm}For any marginal inference query $\mathcal{Q}$ answered on a DGM $\mathcal{N}$ (defined over the attribute set $\mathcal{X}=\{X_1,\cdots,X_n\}$)  via the variable elimination algorithm with the optimal order of elimination, we have \begin{gather*}\delta_{\mathcal{Q}} \leq (2\cdot\eta\cdot d^{\kappa-1})^{n-1}\delta\end{gather*} where $\delta=max_{X_i}\{max_{x_i,x_{pa_i}}\{\delta_{\Theta[x_i,x_{pa_i}]}\}\}$, $\kappa$ is the tree-width of $\mathcal{N}$ and $\eta=$ in-degree+out-degree of $\mathcal{N}$ \end{thm}
 %\begin{proof}The proof of this follows directly from Lemma \ref{lem:help2} and Lemma \ref{lem:upperbound} (since $\mathcal{C}= 2\cdot\eta\cdot d^{\kappa-1}> 1$\end{proof}

 % \subsection{Related Work}\label{relatedwork}
\vspace{-1cm}\section{Related Work}\label{app:relatedWork}
In this section, we review  related literature. 
There has been a steadily growing amount work in differentially private machine learning models for the last couple of years.
We list some of the most recent work in this line (not exhaustive list).
\cite{SGD1,SGD2,SGD3} address the problem  of differentially private SGD. The authors of \cite{dpem} present an algorithm for differentially private expectation maximization. In \cite{Mestimator} the problem of differentially private M-estimators is addressed.
Algorithms for performing expected risk minimization under differential privacy has been proposed in \cite{dperm1,dperm2}. In \cite{Bayes9} two differentially private subspace clustering algorithms are proposed.
\par There has been a fair amount of work in differentially private Bayesian inferencing and related notions \cite{Bayes1,Bayes2,Bayes3,Bayes4,Bayes5,Bayes6,Bayes7,Bayes10,Bayes11,Bayes12,Bayes13,Bayes14,Bayes15,Data_PAC,privbayes, MRF}. In \cite{Bayes7} the authors present a solution for DP Bayesian learning in a distributed setting, where each party only holds a subset of the data a single
sample or a few samples of the data. In \cite{Data_PAC} the authors show that a
 data-dependent prior learnt under $\epsilon$-DP yields a valid PAC-Bayes bound. The authors in \cite{DP-Prob} show that probabilistic inference over differentially private measurements to derive posterior distributions over the data sets and model parameters can potentially improve accuracy. An algorithm to learn an unknown probability distribution over a discrete population from random samples under $\epsilon$-DP is presented in \cite{Ilias}.  In \cite{Bayes6} the authors  present a method
for private Bayesian inference in exponential families that learns from sufficient statistics. The authors of \cite{Bayes2} and \cite{Bayes1} show that posterior sampling gives differential privacy "for free" under certain assumptions. In \cite{Bayes3} the authors show that Laplace mechanism based alternative for "One Posterior Sample" is as asymptotically efficient as
non-private posterior inference, under general assumptions. A R\'enyi differentially private posterior sampling algorithm is presented in \cite{Bayes5}. \cite{Bayes10} proposes a differential private Naive Bayes classification algorithm for data streams. \cite{Bayes4} presents algorithms for private Bayesian inference on probabilistic graphical models.  In \cite{Bayes12}, the authors  introduce a general privacy-preserving framework for Variational Bayes. An expressive framework for writing and verifying differentially private Bayesian machine learning algorithms is presented in \cite{Bayes14}. The problem of learning discrete, undirected graphical models in a differentially private way is studied in \cite{Bayes15}. \cite{Bayes11} presents a general method for privacy-preserving Bayesian inference in Poisson factorization. In \cite{MRF}, the authors consider the problem of learning Markov Random Fields under differential privacy. In \cite{Bayes4} the authors propose algorithms for private Bayesian inference on graphical models. However, their proposed solution does not add data-dependent noise. In fact their proposed algorithms (Algorithm 1 and Algorithm 2 as in \cite{Bayes4}) are essentially the same in spirit as our baseline solution \textit{D-Ind}. Moreover, some proposals from \cite{Bayes4} can be combined with \textit{D-Ind}; for example to ensure mutual consistency, \cite{Bayes4} adds Laplace noise in the Fourier domain  while \textit{D-Ind} uses techniques of \cite{Consistency}. \textit{D-Ind} is also identical (\textit{D-Ind} has an additional consistency step) to an algorithm used in \cite{privbayes} which uses DGMs to generate high-dimensional data.
\par A number of data-dependent differentially private algorithms have been proposed in the past few years. \cite{hist,hist2,AHP,DPcube} outline data-dependent mechanisms for publishing histograms.  In \cite{data1} the authors construct an estimate of the dataset by building differentially private kd-trees. MWEM \cite{MWEM} derives estimate of the dataset iteratively via multiplicative weight updates. In \cite{DAWA} differential privacy is achieved by adding data and workload dependent noise. \cite{DataAshwin} presents a data-dependent differentially private algorithm selection technique. \cite{dataTheory1,dataTheory2} present two general data-dependent differentially private mechanisms. Certain data-independent mechanisms attempt to find a better set of measurements in support of a given workload. One of the most prominent technique is the matrix mechanism framework \cite{DPMatrix2,DPMatrix1} which formalizes the measurement selection problem as a rank-constrained SDP. Another popular approach is to employ a hierarchical strategy \cite{H1,H2,H3}. \cite{con1,con2,con3,con4,con5,Consistency} propose techniques for marginal table release. 
%\par \cite{sensitivity1,Sensitivity2,Sensitivity3,Sensitivity4,Sensitivity5,Sensitivity6}
\subsection{Evaluation Cntd.} \label{evaluation:cntd}
\subsubsection{Data sets}
As mentioned in Section 5, we evaluate our algorithm on the following four DGMs.

 (1) \textbf{Asia}: Number of nodes -- 8; Number of arcs -- 8; Number of parameters -- 18 

(2) \textbf{Sachs}: Number of nodes -- 11; Number of arcs -- 17; Number of parameters -- 178

(3) \textbf{Child}: Number of nodes -- 20; Number of arcs -- 25; Number of parameters -- 230

(4) \textbf{Alarm}: Number of nodes -- 37; Number of arcs -- 46; Number of parameters -- 509 

The error analysis for the data sets Asia and Alarm are presented in Fig. \ref{fig:cntd}.
\begin{figure*}[h]
 \begin{subfigure}[b]{0.4\linewidth}
    \centering    \includegraphics[width=\linewidth]{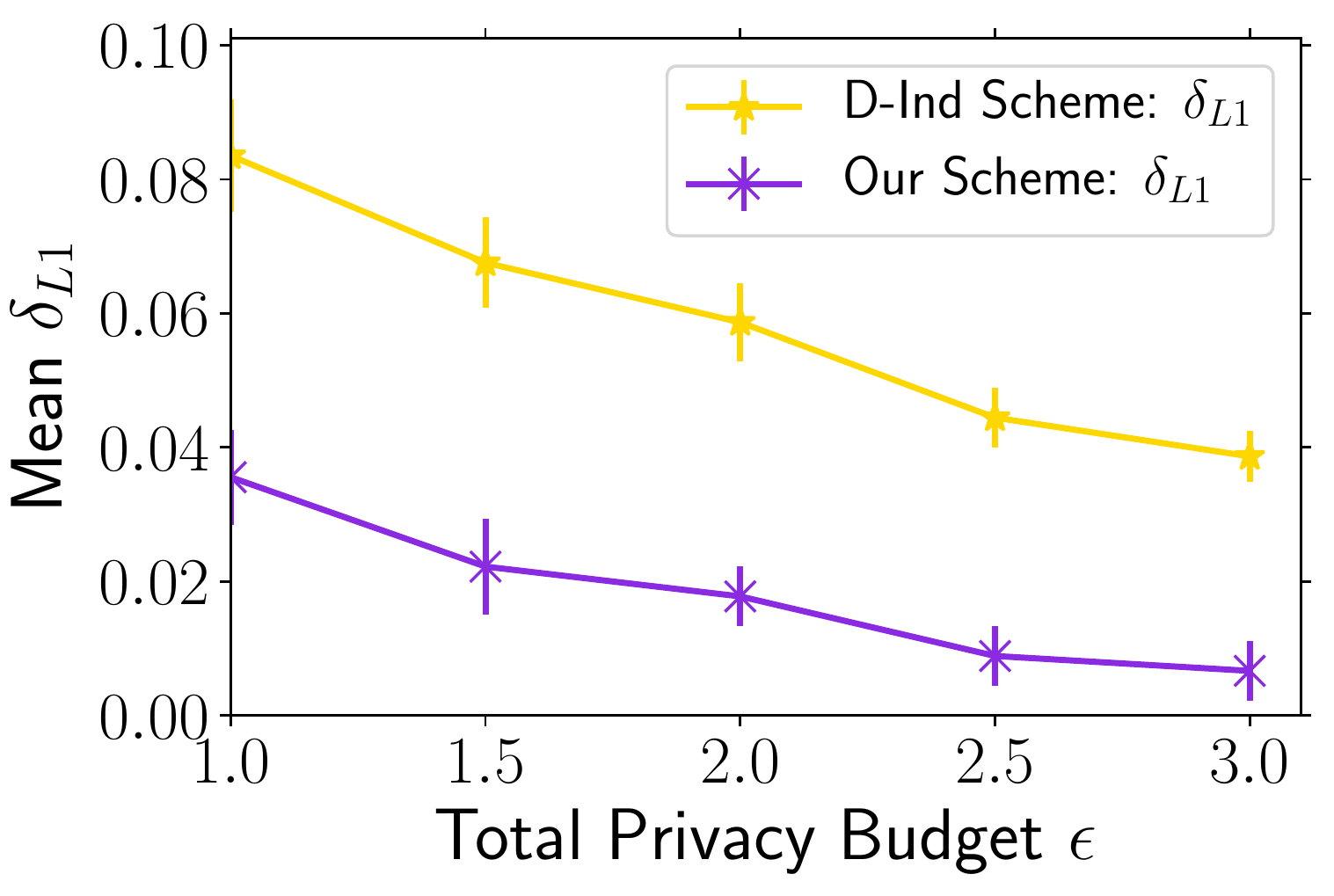} \label{fig:Para:Asia1}
    \caption{Asia: Parameters $\delta_{L1}$}
   \end{subfigure}%%
   \begin{subfigure}[b]{0.4\linewidth}
     \centering    \includegraphics[width=\linewidth]{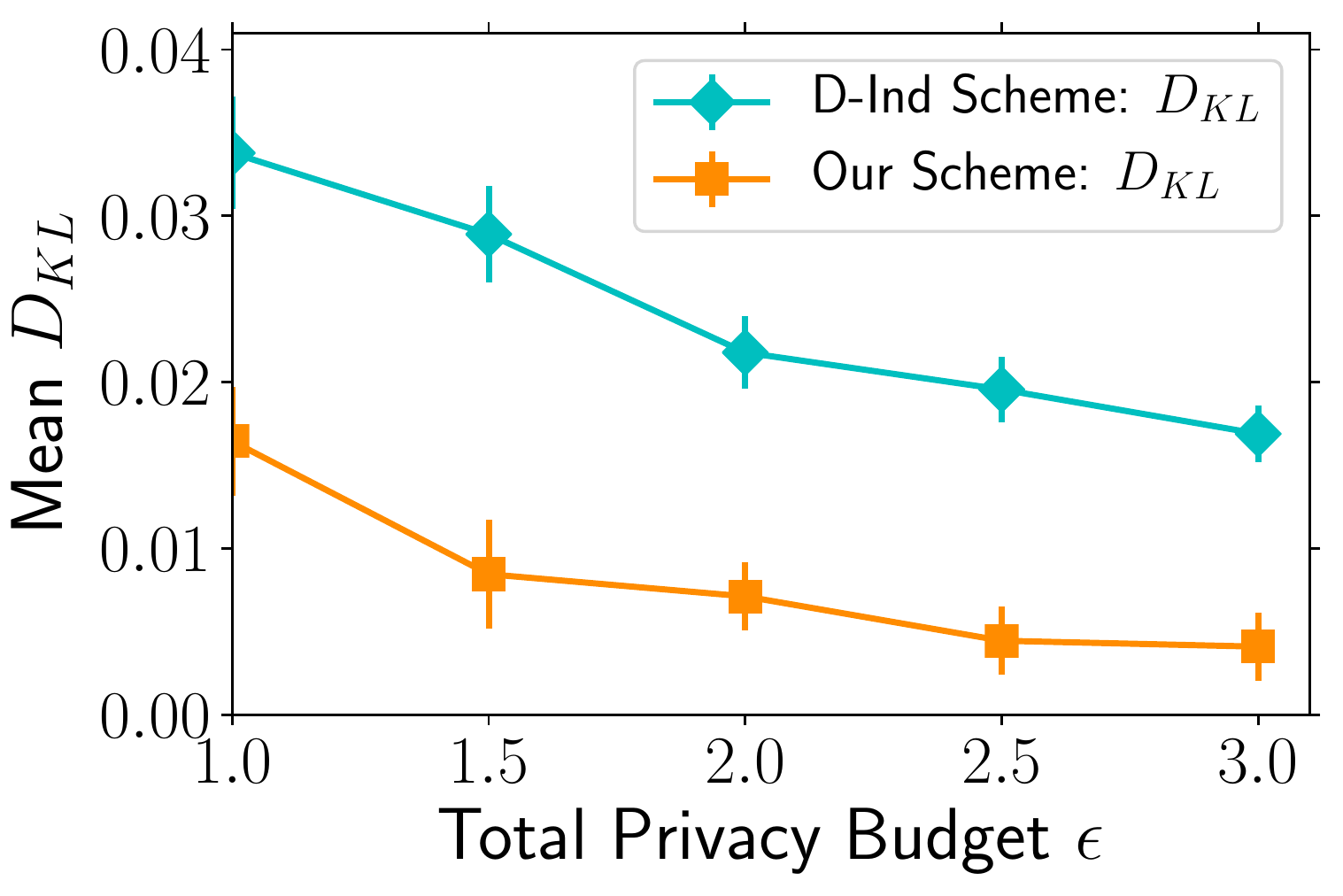} \label{fig:Para:Asia2}
        \caption{Asia: Parameters  $D_{KL}$}
       \end{subfigure}%%
        \\
      \begin{subfigure}[b]{0.4\linewidth}
    \centering    \includegraphics[width=\linewidth]{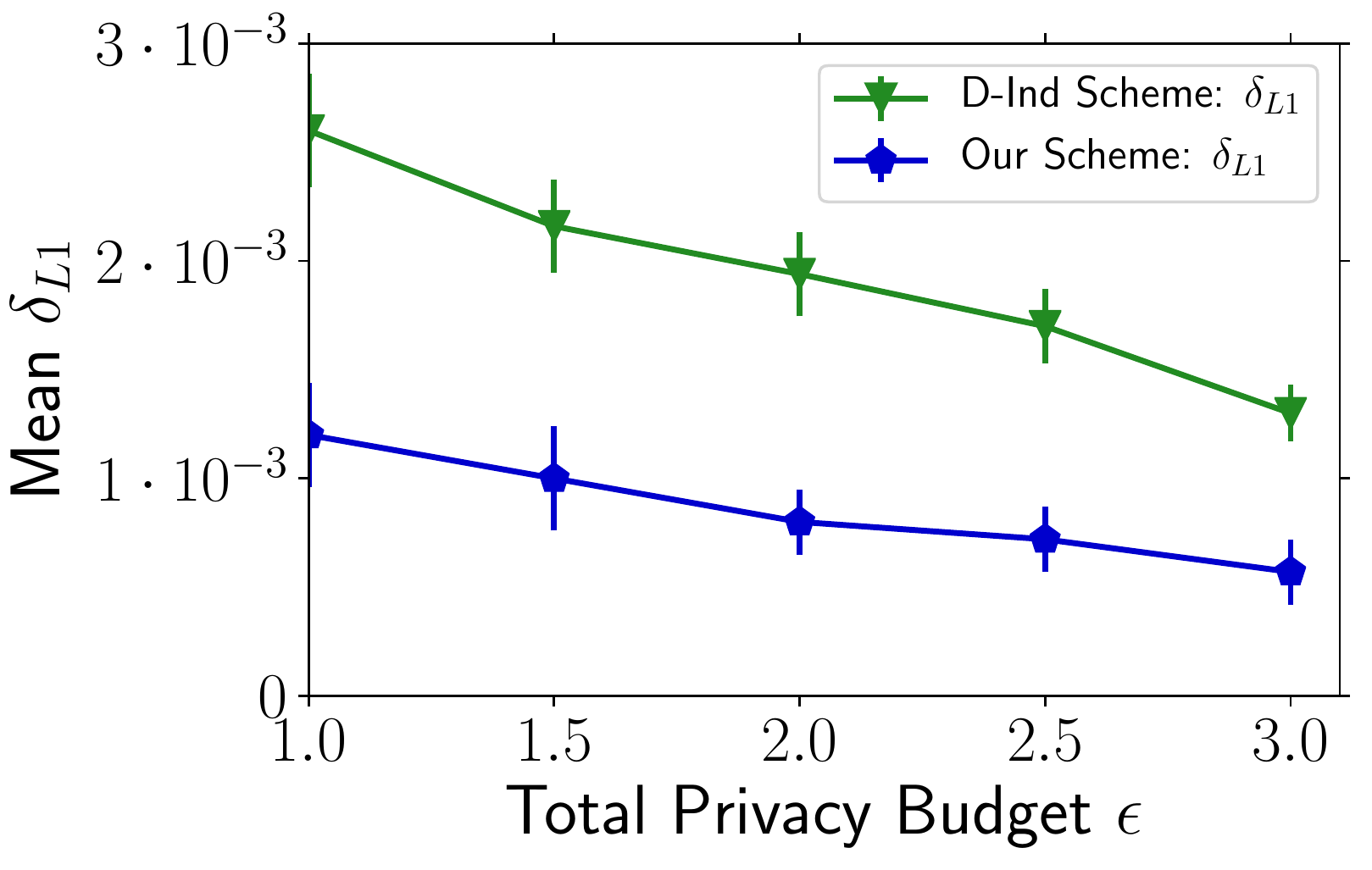} \label{fig:Inf:Asia:L1:1}
      \caption{Asia: Inference $\delta_{L1}$ }
\end{subfigure}%%
\begin{subfigure}[b]{0.4\linewidth}
    \centering    \includegraphics[width=\linewidth]{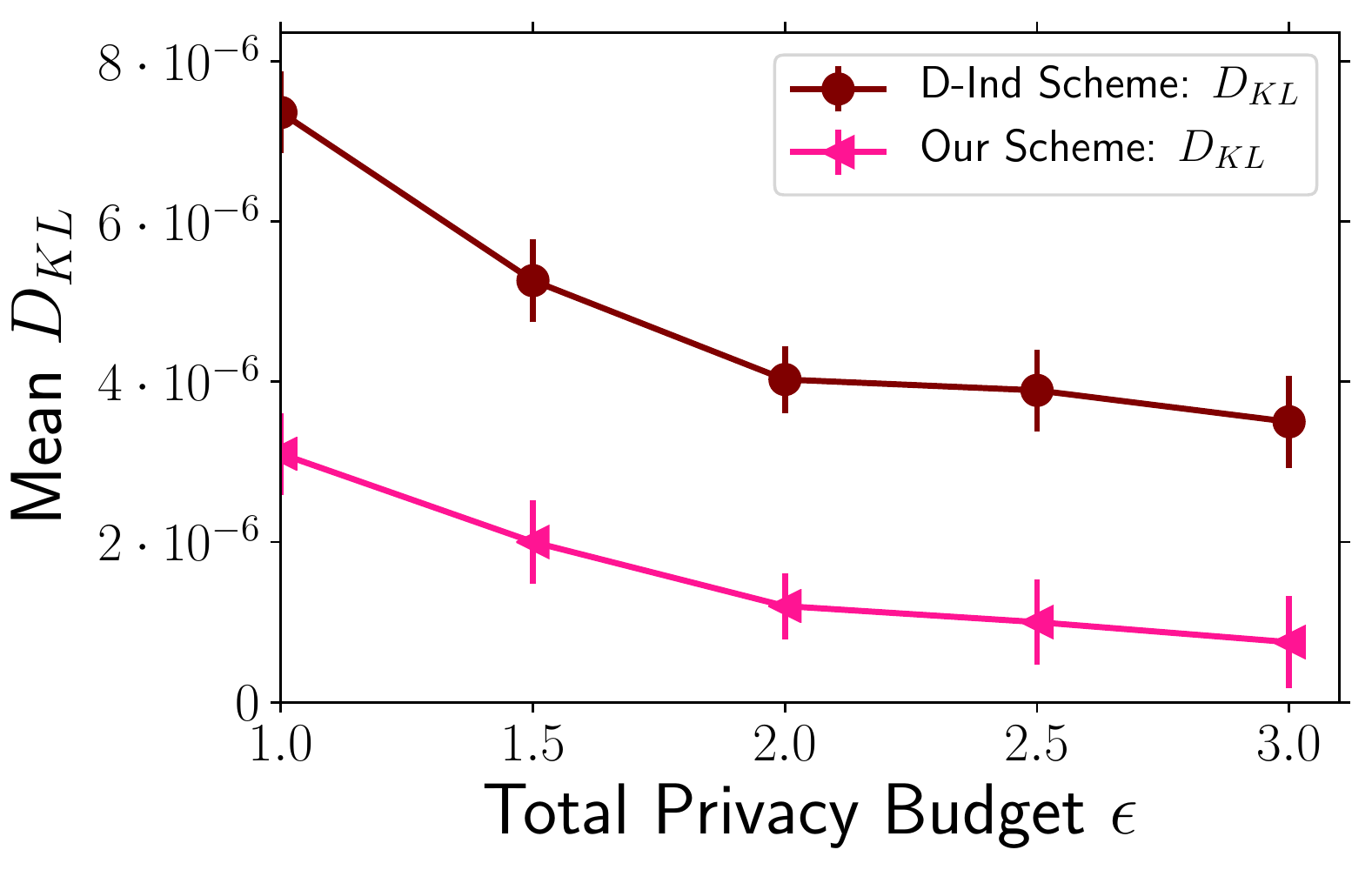}  \label{fig:Inf:Asia:L1:2}
        \caption{Asia: Inference $D_{KL}$}
        \end{subfigure} 
        \begin{subfigure}[b]{0.4\linewidth}
    \centering    \includegraphics[width=\linewidth]{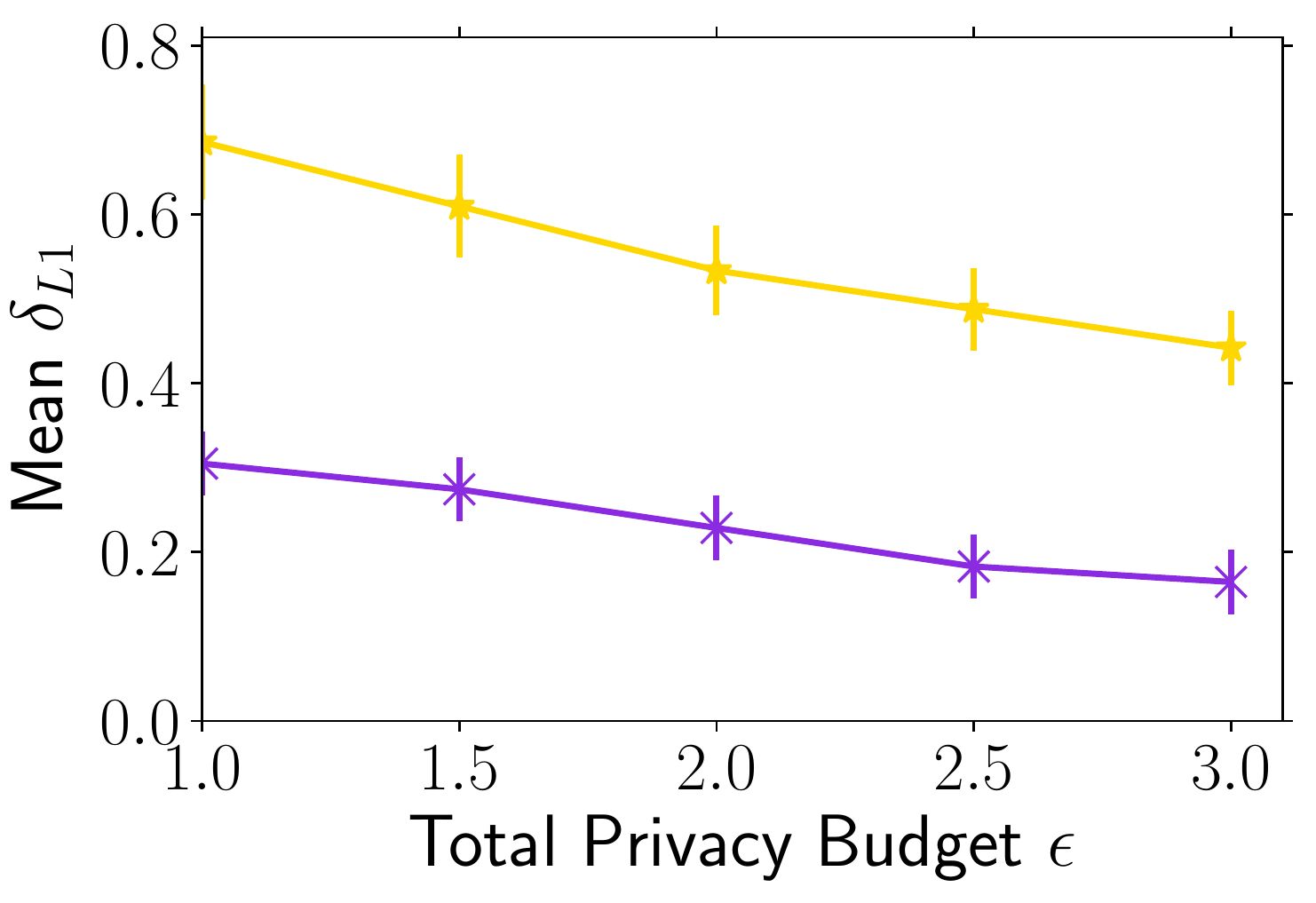}
     \caption{Alarm: Parameters $\delta_{L1}$}
    \label{fig:Inf:Asia:KL:1}
    \end{subfigure}
  \begin{subfigure}[b]{0.4\linewidth}
     \centering    \includegraphics[width=\linewidth]{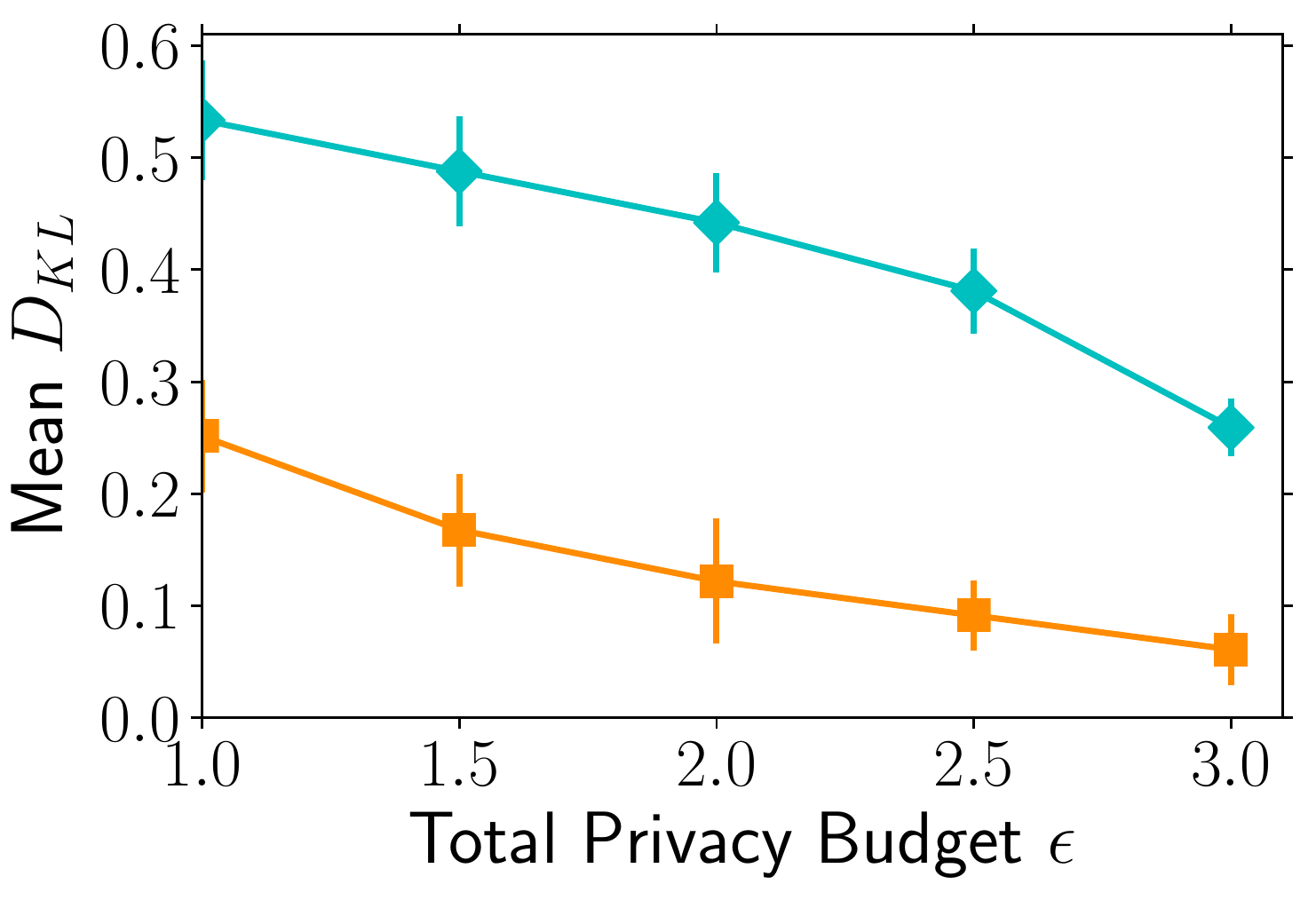}
       \caption{Alarm: Parameters  $D_{KL}$}
        \label{fig:Inf:Asia:KL:2}
    \end{subfigure}
    \\
    \begin{subfigure}[b]{0.4\linewidth}
    \centering    \includegraphics[width=1\linewidth]{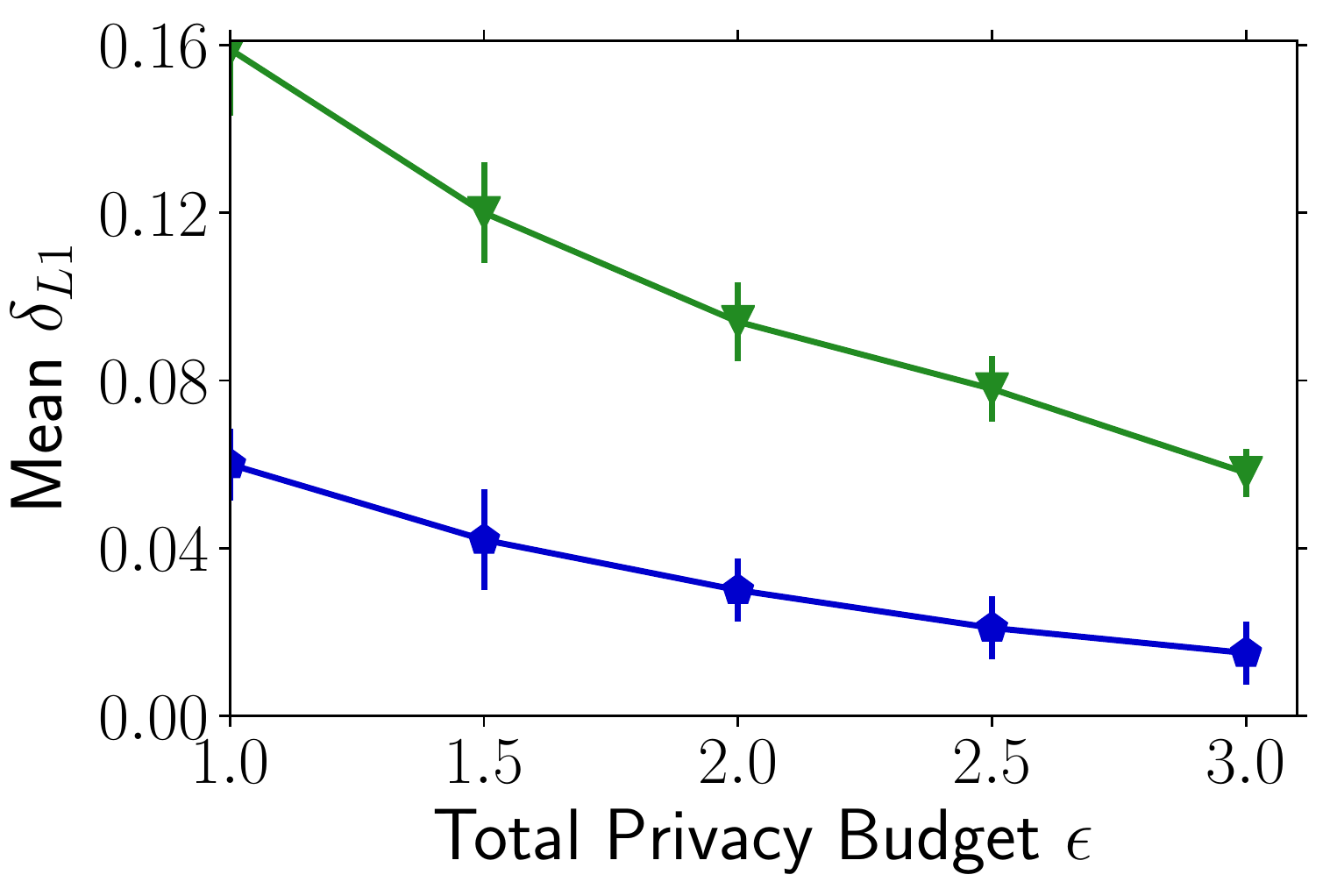} \label{fig:Para:Sachs:2}
        \caption{Alarm: Inference $\delta_{L1}$}
       \end{subfigure}%%
            \begin{subfigure}[b]{0.4\linewidth}
    \centering    \includegraphics[width=1\linewidth]{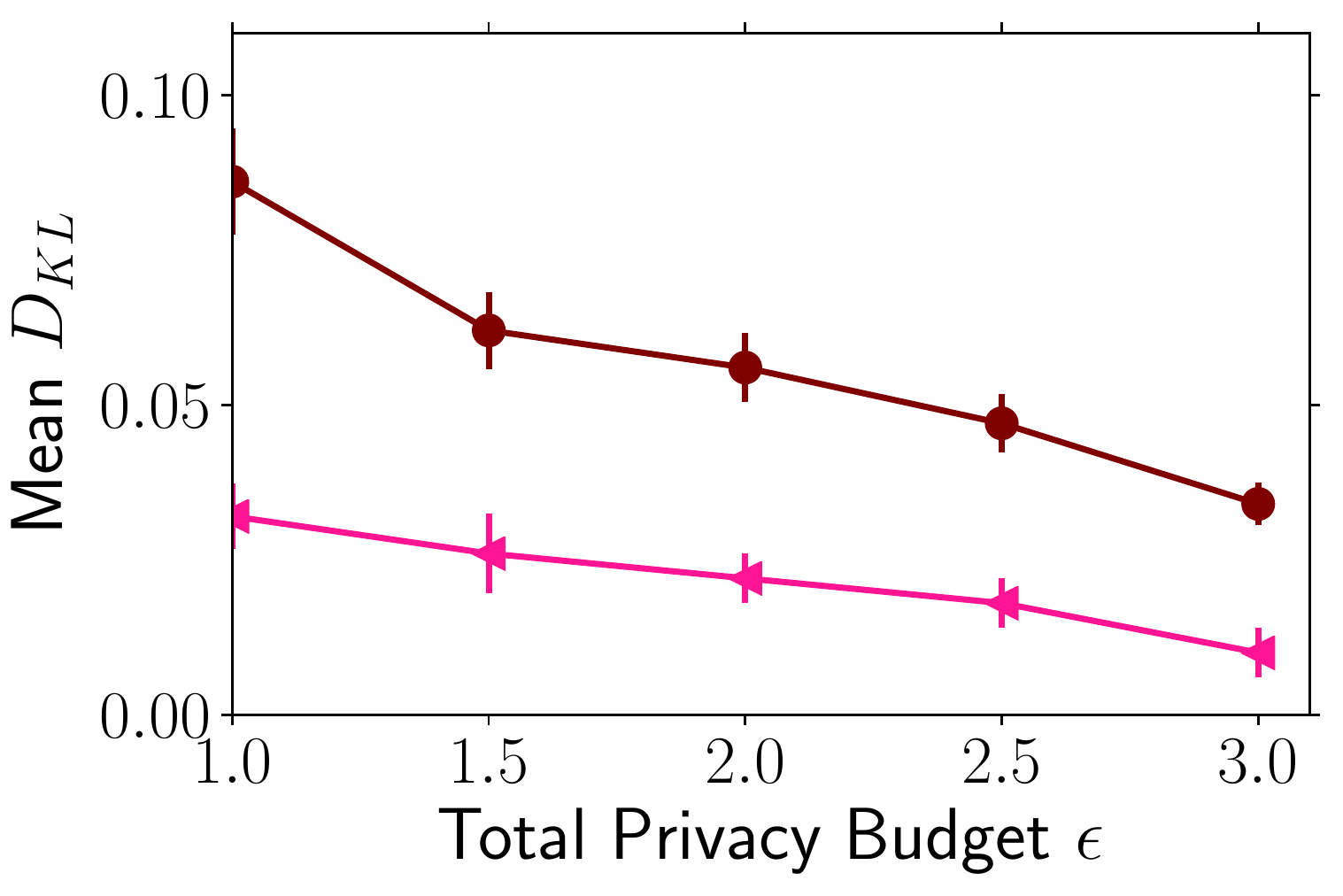}
         \caption{Alarm: Inference $D_{KL}$}
        \label{fig:Inf:Sachs:2}\end{subfigure}%%
         \caption{Parameter and Inference (Marginal and Conditional) Error Analysis Cntd. }\label{fig:cntd}
        \end{figure*}

\end{document}